\newcommand\strongconvparam[1]{\alpha^{(#1)}}
\newcommand\clusteropterr[1]{\delta^{(#1)}}
\newcommand{\estmse}{{\rm MSE}}
\newtheorem{assumption}{Assumption}
\definecolor{pinegreen}{cmyk}{0.92,0,0.59,0.25}
\definecolor{royalblue}{cmyk}{1,0.50,0,0}
\definecolor{lavander}{cmyk}{0,0.48,0,0}
\definecolor{violet}{cmyk}{0.79,0.88,0,0}
\tikzstyle{ncyan}=[circle, draw=cyan!70, thin, fill=white, scale=0.8, font=\fontsize{11}{0}\selectfont]
\tikzstyle{ngreen}=[circle,  draw=green!70, thin, fill=white, scale=0.8, font=\fontsize{11}{0}\selectfont]
\tikzstyle{nred}=[circle, draw=red!70, thin, fill=white, scale=0.8, font=\fontsize{11}{0}\selectfont]
\tikzstyle{ngray}=[circle, draw=gray!70, thin, fill=white, scale=0.55, font=\fontsize{14}{0}\selectfont]
\tikzstyle{nyellow}=[circle, draw=yellow!70, thin, fill=white, scale=0.55, font=\fontsize{14}{0}\selectfont]
\tikzstyle{norange}=[circle,  draw=orange!70, thin, fill=white, scale=0.55, font=\fontsize{10}{0}\selectfont]
\tikzstyle{npurple}=[circle,draw=purple!70, thin, fill=white, scale=0.55, font=\fontsize{10}{0}\selectfont]
\tikzstyle{nblue}=[circle, draw=blue!70, thin, fill=white, scale=0.55, font=\fontsize{10}{0}\selectfont]
\tikzstyle{nteal}=[circle,draw=teal!70, thin, fill=white, scale=0.55, font=\fontsize{10}{0}\selectfont]
\tikzstyle{nviolet}=[circle, draw=violet!70, thin, fill=white, scale=0.55, font=\fontsize{10}{0}\selectfont]
\tikzstyle{qgre}=[rectangle, draw, thin,fill=green!20, scale=0.8]
\tikzstyle{rpath}=[ultra thick, red, opacity=0.4]
\tikzstyle{legend_isps}=[rectangle, rounded corners, thin,fill=gray!20, text=blue, draw]
\newtheorem{theorem}{Theorem}
\newtheorem{definition}[theorem]{Definition}
\newtheorem{lemma}[theorem]{Lemma}
\newcommand{\graphmnist}{\graph^{(\rm MNIST)}}
\newcommand{\nodesmnist}{\nodes^{(\rm MNIST)}}
\newcommand{\edgesmnist}{\edges^{(\rm MNIST)}}
\newcommand{\pdgap}[1]{{\rm gap}_{#1}} 
\newcommand{\localvalsetsize}[1]{m^{(\rm val)}_{#1}} 
\newcommand{\localtrainsetsize}[1]{m^{(\rm train)}_{#1}} 
\newcommand\defeq{:=}
\newcommand{\vx}[0]{{\bf x}}
\newcommand{\vv}[0]{{\bf v}}
\newcommand{\vu}[0]{{\bf u}}
\newcommand{\mW}[0]{{\bf W}}
\newcommand{\mD}[0]{{\bf D}}
\newcommand{\vw}[0]{{\bf w}}
\newcommand{\mI}{\mathbf{I}}
\newcommand{\mQ}{\mathbf{Q}}
\newcommand{\va}[0]{{\bf a}}
\newcommand{\vz}[0]{{\bf z}}
\newcommand{\neighbourhood}[1]{\mathcal{N}^{(#1)}}
\newcommand{\norm}[1]{\Vert  {#1} \Vert}
\newcommand{\normgeneric}[2]{\left\Vert  {#1} \right\Vert_{#2}}
\newcommand{\bmx}[0]{\begin{bmatrix}}
\newcommand{\emx}[0]{\end{bmatrix}}
\newcommand{\samplingset}{\mathcal{M}}
\newcommand{\trainingset}{\samplingset}
\newcommand{\samplesize}{m}
\newcommand{\sampleidx}{r}
\newcommand{\clusteridx}{c} 
\newcommand{\nrcluster}{k} 
\newcommand{\featureidx}{j}
\newcommand\truelabel{y}
\newcommand\featurevec{\vx}
\newcommand{\regparam}{\lambda}
\DeclareMathOperator*{\argmin}{argmin}
\newcommand{\itercntr}{k}
\newcommand{\iteridx}{k}
\newcommand{\cluster}[1]{\mathcal{C}^{(#1)}}
\newcommand{\clustergeneric}{\mathcal{C}}
\newcommand{\weights}{\vw}
\newcommand{\nriter}{R}
\newcommand{\proximityop}[3]{{\rm\bf prox}_{#1,#3}(#2)}
\newcommand{\locallossfunc}[2]{L_{#1}\left(#2 \right)}
\newcommand{\conjlocallossfunc}[2]{L^{*}_{#1}\left(#2 \right)}
\newcommand{\localdataset}[1]{\mathcal{X}^{(#1)}}
\newcommand{\localdatasetval}[1]{\mathcal{X}_{\rm val}^{(#1)}}
\newcommand{\localdatasettrain}[1]{\mathcal{X}_{\rm train}^{(#1)}}
\newcommand{\edges}{\mathcal{E}}
\newcommand{\edgeweight}{A}
\newcommand{\genericnodeset}{\mathcal{A}}
\newcommand{\edgeidx}{e}
\newcommand{\graph}{\mathcal{G}}
\newcommand{\nodes}{\mathcal{V}}
\newcommand{\incidencemtx}{\mathbf{D}}
\newcommand{\incidencemtxentry}[2]{D_{#1,#2}}
\newcommand{\nodeidx}{i}
\newcommand{\nrnodes}{n}
\newcommand{\nodespace}{\mathcal{W}}
\newcommand{\edgespace}{\mathcal{U}}
\newcommand{\edge}[2]{\{#1,#2\}}
\newcommand{\directededge}[2]{\left(#1,#2\right)}
\newcommand{\gtvpenalty}{\phi} 
\newcommand{\partition}{\mathcal{P}}
\newcommand{\localsamplesize}[1]{m_{#1}}
\newcommand{\localsampleidx}{r}
\newcommand{\dimlocalmodel}{d}
\newcommand{\pair}[2]{\left( #1,#2 \right)}
\newcommand{\localparams}[1]{\mathbf{w}^{(#1)}}
\newcommand{\localflowscalar}[1]{u^{(#1)}}
\newcommand{\elemflowscalar}[2]{u^{(#2)}_{#1}}
\newcommand{\simplepath}[1]{\mathcal{P}^{(#1)}}
\newcommand{\localflowvec}[1]{\mathbf{u}^{(#1)}}
\newcommand{\cycleidx}{r}
\newcommand{\estlocalflowvec}[1]{\widehat{\mathbf{u}}^{(#1)}}
\newcommand{\optlocalflowvec}[1]{\widehat{\mathbf{u}}^{(#1)}}
\newcommand{\optlocalflowvecclusteredg}[1]{\widetilde{\vu}^{(#1)}}
\newcommand{\estlocalparamsclusteredg}[1]{\widetilde{\vw}^{(#1)}}
\newcommand{\flowvec}{\mathbf{u}}
\newcommand{\estlocalparams}[1]{\widehat{\mathbf{w}}^{(#1)}}
\newcommand{\clusterobj}[2]{f^{(#1)}\left( #2 \right)}
\newcommand{\locallipsch}[1]{\beta^{(#1)}}
\newcommand{\netparams}{\mathbf{w}}
\newcommand{\nodeweight}[1]{\rho_{#1}}
\newcommand{\primalupdate}[2]{\mathcal{PU}^{(#1)} \left\{#2\right\}}
\newcommand{\dualupdate}{\mathcal{DU}}
\newcommand{\clusterwideopt}[1]{\overline{\weights}^{(#1)}}
\newcommand{\demand}[1]{\mathbf{s}^{(#1)}}
\newcommand{\demandbound}[1]{\delta^{(#1)}}
\newcommand{\thresholdwassdist}{\eta}
\begin{document}

\title{Clustered Federated Learning via Generalized Total Variation Minimization}
%
%
%

\author{Yasmin~SarcheshmehPour, Yu Tian, Linli Zhang and Alexander Jung
	\thanks{}
	\thanks{}
}

\maketitle

\markboth{Some Journal}%
{Shell \MakeLowercase{\textit{et al.}}: Bare Demo of IEEEtran.cls for IEEE Journals}

\begin{abstract}
We study optimization methods to train local (or personalized) models for 
decentralized collections of local datasets with an intrinsic network structure. This 
network structure arises from domain-specific notions of similarity between local datasets. 
Examples for such notions include spatio-temporal proximity, statistical dependencies 
or functional relations. Our main conceptual contribution is to formulate federated learning as 
generalized total variation (GTV) minimization. This formulation unifies and considerably 
extends existing federated learning methods. It is highly flexible and can be combined 
with a broad range of parametric models, including generalized linear models or deep neural 
networks. Our main algorithmic contribution is a fully decentralized federated learning algorithm. 
This algorithm is obtained by applying an established primal-dual method to solve GTV minimization. It 
can be implemented as message passing and is robust against inexact computations that arise 
from limited computational resources including processing time or bandwidth. 
Our main analytic contribution is an upper bound on the deviation between the local model 
parameters learnt by our algorithm and an oracle-based clustered federated learning 
method. This upper bound reveals conditions on the local models and the network structure 
of local datasets such that GTV minimization is able to pool (nearly) homogeneous local datasets. 

\end{abstract}

\begin{IEEEkeywords}
	federated learning, clustering, complex networks, total variation, regularization
\end{IEEEkeywords}

%

\section{Introduction}
\label{sec:intro}

Many important application domains generate collections of local datasets that are related via 
an intrinsic network structure \cite{BigDataNetworksBook}. Two timely application domains 
generating such networked data are (i) the high-precision management of pandemics and (ii) the Internet of Things (IoT) \cite{Wollschlaeger2017}.
Such local datasets are generated by smartphones, wearables or industrial IoT devices \cite{Ates:2021ug}.
These local datasets are related via physical contact networks, social networks \cite{NewmannBook}, 
co-morbidity networks \cite{NetMedNat2010}, or communication networks \cite{Grantz:2020wn}. 

Federated learning (FL) is an umbrella term for machine learning (ML) techniques that collaboratively 
train models on decentralized collections of local datasets \cite{pmlr-v54-mcmahan17a,Cheng2020,Smith2017}. 
These methods carry out computations such as gradient descent steps during model training at the location 
of data generation, rather than first collecting all data at a central location \cite{ShipCompute}. 
FL methods are appealing for applications involving sensitive data (such as healthcare) as they 
do not require the exchange of raw data but only model (parameter) updates without leaking sensitive information in local datasets \cite{Cheng2020}. 
Moreover, FL methods can offer robustness 
against malicious data perturbation due to its intrinsic averaging or aggregation over large collections 
of (mostly benign) datasets \cite{Sattler2020}.

FL applications often face local datasets with different statistical properties \cite{Ghosh2020}. 
Each local dataset induces a separate learning task that consists of learning (or optimizing) 
the parameters of a local model. This paper studies an optimization method to train local models 
that are tailored (or ``personalized'') to the statistical properties of the corresponding local dataset. This method 
is an instance of regularized empirical risk minimization (or structural risk minimization). In particular, it 
uses a measure (see Section \ref{sec_nLasso}) for the variation of local model parameters as regularizer. We 
solve the resulting optimization (or learning) problem using a primal-dual method that can be implemented 
as message passing over the network structure of local datasets (see Section \ref{sec_federatedml}). 

Clustered FL addresses the heterogeneity of local datasets using various forms of a clustering 
assumption \cite{NIPS2008_fccb3cdc,Ghosh2020,SemiSupervisedBook}. Informally, our clustering 
assumption requires local datasets and their associated learning tasks to form a few disjoint
subsets or clusters. As a result, local datasets belonging to the same cluster have similar 
statistical properties and, in turn, similar optimal parameter values for the corresponding local models. 
Section \ref{sec:format} makes this clustering assumption precise via Assumption \ref{asspt_weights_clustered}. 
The main contribution of this paper is a detailed characterization of the cluster structure and local model 
geometry for local datasets that allow our methods to pool local datasets that form statistically 
homogeneous clusters of datasets (see Section \ref{sec_when_does_it_work}).

What sets our approach apart from existing methods for clustered FL \cite{Ghosh2020,NIPS2008_fccb3cdc} 
is that we exploit known pairwise similarities between local datasets. These similarities are encoded by 
the weighted undirected edges of an \emph{empirical graph} \cite{SemiSupervisedBook}. 
Instead of a trivial combination of clustering methods and cluster-wise model training, 
our FL method (see Algorithm \ref{alg1}) interweaves the pooling of local datasets with model training. 
We use the connectivity of the empirical graph to guide this pooling (see Section \ref{sec_federatedml}).


Our FL method requires a useful choice for the empirical graph of networked data. 
If a useful choice for the empirical graph is not obvious, we might use principled statistical tests 
for the similarity between two datasets. These tests could be based on parametric models 
such as (mixtures of) Gaussian distributions or non-parametric methods for density estimation \cite{Ge2015,Lee2023,PerezCruz2008,ProofTriangleWass}. 
We demonstrate some of these methods in the numerical experiments of Section \ref{sec_numexp}. 
However, the details of graph learning methods for collections of local datasets is beyond the scope 
of this paper (see Section \ref{sec_conclusion}). 

\vspace*{-3mm}
\subsection{Related Work}

Similar to \cite{NetworkLasso,LocalizedLinReg2019,Smith2017,Nassif2020,Xu2011}, 
we use regularized empirical risk minimization (RERM) to learn tailored models for local datasets. 
For each local dataset, we obtain a separate learning task that amounts to finding 
an (approximately) optimal choice for the parameters of a local model. These individual 
learning tasks are coupled via the undirected weighted edges of an empirical graph (see Section \ref{sec:format}). 
In contrast to \cite{Xu2011}, which uses a probabilistic model for the empirical graph, 
we consider the empirical graph as fixed and known (non-random). 

To capture the intrinsic cluster structure of networked data, we use a generalized total 
variation (GTV) of the local model parameters as the regularizer. GTV unifies and extends 
several existing notions of total variation \cite{NetworkLasso,LocalizedLinReg2019,Smith2017,Nassif2020}. 
GTV is parametrized by a penalty function which is used to measure the difference of local model 
parameters at neighbouring nodes in the empirical graph. Computationally, the main restriction for 
the choice of penalty function is that it must allow for efficient computation of the corresponding proximal  
operator \eqref{equ_def_proximity_operator}. Some authors refer to such functions as ``proximable'' \cite{Condat2013}. 


Our analysis reveals conditions on the network structure between local datasets and their 
local models such that GTV minimization is able to identify the cluster structure of the empirical graph. 
This is relevant for the application of GTV minimization to clustered FL \cite{NIPS2008_fccb3cdc,Ghosh2020}. 
In contrast to existing work on clustered FL, we exploit a known similarity structure between local 
datasets. We represent these similarities by the edges of an empirical graph. 


GTV minimization unifies and considerably extends well-known optimization models for FL, 
including the TV minimization and network Lasso (nLasso) \cite{NetworkLasso,LocalizedLinReg2019,Smith2017,Nassif2020}. 
GTV is an instance of the non-quadratic regularizer put forward in \cite{Nassif2020}. 
In contrast to \cite{Nassif2020}, which uses a combination of gradient descent and distributed averaging 
methods, we use a primal-dual method to solve the resulting GTV minimization. We provide precise 
conditions such that GTV minimization captures the inherent cluster structure of networked data. 


A substantial body of work studies computational aspects of GTV minimization \cite{NIPS2008_fccb3cdc}. 
Efficient algorithms for GTV minimization have been proposed for relevant computational infrastructures 
such as wireless networks of low-complexity devices \cite{DistrOptStatistLearningADMM,NedicTransAC2009}. 
We would like to highlight the recent study \cite{NEURIPS2018_8fb21ee7} of intrinsic computational complexity 
and efficient primal-dual methods for GTV minimization. While \cite{NEURIPS2018_8fb21ee7} only uses the 
network diameter of the empirical graph to characterize the convergence of optimization methods, our analysis 
involves more fine-grained properties of the empirical graph. Moreover, our analysis aims at the estimation error 
instead of convergence properties of FL algorithms. 

We obtain practical FL methods by solving GTV minimization using an established primal-dual (``Chambolle-Pock'') 
method for non-smooth convex optimization \cite[Alg. 6]{pock_chambolle_2016}. As the name suggests, this primal-dual 
method jointly solves GTV minimization and a dual problem. This method is widely used in image processing (see \cite{pock_chambolle_2016} 
and references therein) and has been applied to a special case of GTV minimization in our previous work \cite{LocalizedLinReg2019}. 
This paper generalizes the methods and analysis of \cite{LocalizedLinReg2019} to a significantly larger class of local models 
and total variation measures. In particular, \cite{LocalizedLinReg2019} studies the special case of GTV minimization 
obtained for local linear models and absolute error loss. Here, we consider GTV minimization methods that can be 
combined with a wide range of (potentially non-linear) parametrized models including graphical Lasso or deep 
neural networks \cite{HastieWainwrightBook,Goodfellow-et-al-2016}. 

The primal-dual method \cite[Alg. 6]{pock_chambolle_2016} is well-suited for FL applications 
in several aspects. First, as we show in Section \ref{sec_federatedml}, the primal-dual method \cite[Alg. 6]{pock_chambolle_2016} 
can be implemented as a message-passing protocol over the empirical graph. Message-passing algorithms 
are scalable to massive collections of local datasets as long as their empirical graph is sparse (e.g., a bounded 
degree network) \cite{Yedidia:2011aa}. Moreover, the primal-dual method\cite[Alg. 6]{pock_chambolle_2016} also 
offers robustness against limited computational resources and imperfections \cite{Rasch:2020tx}. 
This robustness is crucial for the applicability of our FL method as its basic computational step 
is a (separate) regularized model training for each local dataset. Given finite computational 
resources, such a local model updates can only be solved up to some non-zero optimization error. 

This paper develops and exploits a duality between GTV minimization and network flow 
optimization \cite{JungDualitynLasso}. It lends naturally to the design and analysis of 
primal-dual methods for solving instances of GTV minimization arising in FL. Our approach 
differs conceptually and algorithmically from existing primal-dual methods for FL \cite{Smith2017,DistrOptStatistLearningADMM,NetworkLasso}. 
Moreover, it significantly generalizes our previous work \cite{LocalizedLinReg2019} 
on a primal-dual method for a special case of GTV minimization. 

Algorithmically, our method is an instance of the proximal-point algorithm \cite{He2014}, which is 
different from the dual coordinate ascent method in \cite{Smith2017,JMLR:v14:shalev-shwartz13a} 
and also different from the alternating direction method of multipliers (ADMM) used in \cite{NetworkLasso}. 
We refer to \cite{pock_chambolle_2016,SmithCoCoA} for more discussion of the differences and similarities between 
duality-based methods, including ADMM \cite{DistrOptStatistLearningADMM}. 

Another main difference between \cite{Smith2017,NetworkLasso} and our method is that we allow 
for a wider range of penalty functions to measure the variation of local model parameters across an 
edge in the empirical graph. Indeed, our Algorithm can be combined with any penalty function that is 
proximal in the sense of having a proximal operator that can be computed efficiently. 

Conceptually, the dual problem in our approach is a network flow optimization problem with optimization 
variables interpreted as (vector-valued) flows along edges in the empirical graph (see Section \ref{sec_interpreations}). 
In contrast, the dual problem in \cite{Smith2017} does not allow for an obvious interpretation as a network flow optimization. 
As a case in point, the dual variables in \cite{Smith2017} are associated with the nodes of the empirical graph, 
their dimension being the local sample size. We introduce a dual variable (vector) for 
each edge in the empirical graph. These dual vectors have identical lengths which is the (common) 
dimension of the local models at the nodes of the empirical graph. 

In contrast to its computational aspects, the statistical aspects of GTV minimization are far less 
understood \cite{NetworkLasso,Smith2017}. It is possible to frame GTV minimization as the 
learning or recovery of group-sparse models which have been thoroughly studied within high-dimensional 
statistics \cite{BuhlGeerBook,Wain2019}. However, it is unclear how the group-sparse models underlying GTV 
minimization are related to the fine-grained properties (such as cluster structure) of the empirical graph. 
Our main contribution is a characterization of the clustering behaviour of GTV minimization.  

The closest to our work is the recent analysis of convex clustering \cite{JMLR:v22:18-694}. 
Indeed, convex clustering is a special case of GTV minimization (see Section \ref{sec_interpreations}). 
Moreover, in contrast to \cite{JMLR:v22:18-694}, we characterize the cluster structure of GTV minimization 
using network flows. These flows are injected and absorbed at the nodes via the gradient of the loss function 
used to train the local models. 

Finally, we like to comment on the relation between our work and graph clustering methods \cite{IntroLGCAlgSoftw2018,Lang2004,Veldt2019,JungLocalGraphClustering}. 
In particular, GTV minimization generalizes graph clustering methods in the sense of not only 
taking into account the connectivity (edges) in the empirical graph but also the shape of local 
loss functions (which are used to train the local models). The main theme of this paper is the interplay 
between the edge connectivity of the empirical graph and the shape of local loss functions within GTV minimization.



\subsection{Contribution} 

We next enumerate the main contributions of this paper. 
\begin{itemize} 
\item We propose GTV minimization as a flexible design principle for distributed FL algorithms. 
GTV minimization is an instance of RERM using the variation of local model parameters as a regularizer.
 GTV minimization unifies and extends existing optimization models for FL, including nLasso \cite{NetworkLasso,LocalizedLinReg2019}, 
MOCHA \cite{Smith2017} and clustering methods \cite{JungLocalGraphClustering,JMLR:v22:18-694}. 



\item We show that GTV minimization is dual (in a very precise sense) to vector-valued network flow optimization \cite{BertsekasNetworkOpt}. 
This duality generalizes some well-known duality results for network optimization \cite{RockNetworks} and our 
own recent work on special cases of GTV minimization \cite{JungDualitynLasso}. 

\item We present a novel FL algorithm which is obtained by applying an established primal-dual method 
to solve GTV minimization and its dual  \cite{Condat2013}  \cite[Alg. 6]{pock_chambolle_2016}. 
The resulting Algorithm \ref{alg1} 
can be combined with a wide range of parametric local models and variants of TV 
(obtained for different GTV penalty functions). The only requirement on the local models 
is the existence of efficient RERM implementations (see \eqref{equ_node_wise_primal_update_def}). 
Likewise, the choice for the GTV penalty function is only restricted by requiring an efficient way to 
evaluate its convex conjugate (see \eqref{equ_edge_wise_dual_update_min_def}). 
\item Using a clustering assumption on the local datasets, we derive an upper bound on the estimation 
error incurred by GTV minimization. This upper bound reveals sufficient conditions on the local models 
and their network structure such that GTV minimization is able to pool local datasets in the same cluster. 
We hasten to note that our analysis only applies to GTV minimization (i) using a penalty function being a 
norm and (ii) local models resulting in convex training problems. Thus, our bounds do not apply to methods 
that either use graph Laplacian quadratic form as regularizer (such as MOCHA) or local models resulting in 
non-convex loss functions (deep nets). 
\end{itemize}

\subsection{Outline} 
Section \ref{sec:format} introduces the concept of an empirical graph to represent collections of local 
datasets, the corresponding local models as well as their similarity structure. 
Section \ref{sec_nLasso} 
introduces GTV as a measure for the variation of local model parameters across the edges in 
the empirical graph. As discussed in Section \ref{sec_primal_problem}, GTV minimization balances  
the variation of local model parameters over well-connected local datasets (forming a cluster) and incurring 
a small loss (training error) for each local dataset. The dual problem to GTV minimization is then 
explained in Section \ref{sec_dual_problem_GTV_Min}. Section \ref{sec_interpreations} presents several 
useful interpretations of GTV minimization and its dual. 
Section \ref{sec_federatedml} applies a well-known primal-dual optimization method 
to solve GTV minimization and its dual in a fully distributed fashion via message passing over the empirical graph (see Algorithm \ref{alg1}). 
The results of numerical experiments are discussed in Section \ref{sec_numexp}. 

\subsection{Notation} 
The identity matrix of size $n\!\times\!n$ is denoted $\mathbf{I}_{n}$, with the subscript omitted
if the size $n$ is clear from the context. We use $\| \cdot \|$ to denote some norm defined on the Euclidean 
space $\mathbb{R}^{\dimlocalmodel}$ and $\| \cdot \|_{*}$ to denote its dual norm \cite[Appx. 1.6.]{BoydConvexBook}. 
Two important examples are the Euclidean norm $\| \vw \|_{2} \!\defeq\!\sqrt{\sum_{\featureidx=1}^{\dimlocalmodel} w_{\featureidx}^{2}}$ and 
the $\ell_{1}$ norm $\| \vw \|_{1} \!\defeq\!\sum_{\featureidx=1}^{\dimlocalmodel} |w_{\featureidx}|$ of a vector $\vw \!= \!(w_{1},\ldots,w_{\dimlocalmodel})^{T} \in \mathbb{R}^{\dimlocalmodel}$ . 
It will be convenient to use the notation $(1/2\tau)$ instead of $(1/(2\tau))$. 
We will need the (vector-wise) clipping operator
\vspace{-2mm}
\begin{equation} 
\vspace{-2mm}
	\label{equ_vector_clipping}
	\mathcal{T}^{(\gamma)}(\vw)\!\defeq\! \begin{cases} \gamma \vw/\|\vw\|_{2} & \mbox{ for } \| \vw \|_{2} \!\geq\!\gamma \\
		\vw  & \mbox{ otherwise.} \end{cases} 
\end{equation} 
The scalar clipping operator $\mathcal{T}^{(\gamma)}(w)$ is obtained as a special case of \eqref{equ_vector_clipping} 
by considering the scalar $w$ as a vector with a single entry (where $\normgeneric{\mathbf{w}}{2} = |w|$). 
Given a closed proper convex function $f(\vx)$ with domain being a subset of $\mathbb{R}^{\dimlocalmodel}$, 
we define its associated convex conjugate function as \cite{BoydConvexBook}
\begin{equation} 
\label{equ_def_conv_conj}
f^{*} (\vx) \defeq \sup\limits_{\vz \in \mathbb{R}^{\dimlocalmodel}}  \vx^{T} \vz - f(\vz). 
\vspace{-2mm}
\end{equation} 
We will use the proximal operator of a closed proper convex function $f(\vx)$, 
defined as \cite{DistrOptStatistLearningADMM} 
\begin{equation}
\label{equ_def_proximity_operator}
\proximityop{f}{\vx}{\rho}\!\defeq\!\argmin_{\vx'} f(\vx')\!+\!(\rho/2) \| \vx - \vx'\|^{2}_{2} \mbox{ with } \rho\!>\!0.
\end{equation} 
Note that the minimum in \eqref{equ_def_proximity_operator} exists and is unique since the objective 
function is strongly convex \cite{BoydConvexBook}.

\section{Problem Formulation}
\label{sec:format}


We find it useful to represent networked data by an undirected weighted \emph{empirical graph} $\graph=(\nodes,\edges)$. 
For notational convenience, we identify the nodes of an empirical graph with natural numbers, $\nodes = \{1,\ldots,\nrnodes\}$. 
Each node $ \nodeidx \!\in\!\nodes$ of the empirical graph $\graph$ carries a separate local dataset $\localdataset{\nodeidx}$. 
It might be instructive to think of a local dataset  $\localdataset{\nodeidx}$ as a labeled dataset 
\vspace{-2mm}
\begin{equation} 
	\label{equ_def_local_dataset_plain}
\localdataset{\nodeidx} \defeq \left\{ \big(\featurevec^{(\nodeidx,1)},\truelabel^{(\nodeidx,1)}\big), \ldots,\big(\featurevec^{(\nodeidx,\samplesize_{\nodeidx})},\truelabel^{(\nodeidx,\samplesize_{\nodeidx})}\big) \right\}. 
\vspace{-2mm}
\end{equation} 
Here, $\featurevec^{(\sampleidx)}$ and $\truelabel^{(\sampleidx)}$ denote, respectively, the 
feature vector and true label of the $\sampleidx$-th data point in the local dataset $\localdataset{\nodeidx}$. 
Note that the size $\samplesize_{\nodeidx}$ of the local dataset might vary across nodes $\nodeidx \in \nodes$. 
Figure \ref{fig_local_dataset} depicts an empirical graph with $\nrnodes\!=\!11$ nodes $\nodes=\{1,\ldots,\nrnodes\}$, 
each carrying a local dataset $\localdataset{\nodeidx}$.  

We highlight that our method (see Section \ref{sec_federatedml}) is not restricted to local 
datasets of the form \eqref{equ_def_local_dataset_plain}. Indeed, Algorithm \ref{alg1} and its 
analysis (see Section \ref{sec_when_does_it_work}) only requires indirect access to $\localdataset{\nodeidx}$ 
via the evaluation of some local loss function $\locallossfunc{\nodeidx}{\vv}$. 
The value $\locallossfunc{\nodeidx}{\vv}$ measures how well a model with parameters $\vv$ fits the 
local dataset $\localdataset{\nodeidx}$ (see Section \ref{sec_net_models}). We will study different 
choices for the local loss function in Section \ref{sec_numexp}. 

Let us point out two particular aspects of our data-access model via the evaluation of local loss functions. 
First, it lends naturally to privacy-friendly methods as they do not need to share raw data $\localdataset{\nodeidx}$. 
Instead, our methods only exchange (local) information about local loss function $\locallossfunc{\nodeidx}{\cdot}$ 
such as the gradient $\nabla \locallossfunc{\nodeidx}{\vv}$ or the proximal operator value $\proximityop{\locallossfunc}{\vv}{\rho}$ \eqref{equ_def_proximity_operator}
for a given choice $\localparams{\nodeidx}=\vv$ for the local model parameter vector. This information is 
typically obtained from averages over data points and therefore revealing only a little information about 
individual data points (if the sample size is not too small). 

Besides its privacy-friendliness, our data access model also handles applications where only 
a fraction of local datasets are accessible. This is relevant for wireless sensor networks that 
consist of battery-powered devices for computation and wireless communication \cite{Mahmoudi202}. 
The lack of access to the local dataset at some node $\nodeidx$ can be taken into account by 
using a trivial loss function $\locallossfunc{\nodeidx}{\vv}  =0$ for all parameter vectors $\vv \in \mathbb{R}^{\dimlocalmodel}$ (see Section \ref{sec_numexp}). 



\begin{figure} 
\begin{center}
\begin{tikzpicture}[scale=8/5]
\tikzstyle{every node}=[font=\small]
\node[nred] (C1_2) at (0.88+3.7,2.29) {};
\node[left=1 cm of C1_2,nred] (C1_1)  {};
\node[below left =1cm and 1cm of C1_2,nred] (C1_3)  {};
\node[below =0.5cm of C1_2,nred] (C1_4)  {};
\node[ngreen] (C3_3) at (6,2) {};
\node[above left =0.4cm and 0.7cm of C3_3,ngreen] (C3_2)  {};
\node[below left =0.4 and 0.7cm of C3_3,ngreen] (C3_4) {};
\node[left =1.2cm of C3_3,ngreen] (C3_1) {};
\node[ncyan] (C2_2) at (2.0,2.23) {};
\node[below left =0.4cm and 0.4cm of C2_2,ncyan] (C2_1)  {};
\node[below right =0.4cm and 0.4cm of C2_2,ncyan] (C2_3)  {};
\node[below right = 0.01cm and 0.00cm of C2_3, font=\fontsize{8}{0}\selectfont,anchor=west] {$\mathcal{X}^{(\nodeidx)}, \weights^{(\nodeidx)}$}; 
\node[above right = 0.01cm and 0.00cm of C1_1, font=\fontsize{8}{0}\selectfont,anchor=south] {$\mathcal{X}^{(\nodeidx')}, \weights^{(\nodeidx')}$}; 
\node[above right = 0.01cm and 0.00cm of C1_2, font=\fontsize{8}{0}\selectfont,anchor=west] {$\overline{\weights}^{(\cluster{1})}$}; 
\node[below right = 0 and 0.00cm of C2_1,font=\fontsize{8}{0}\selectfont,anchor=north east] {$\overline{\weights}^{(\cluster{2})}$}; 
\node[below right = 0 and 0.0cm of C3_3,font=\fontsize{8}{0}\selectfont,anchor=north west]  {$\overline{\weights}^{(\cluster{3})}$}; 

\draw [line width=0.3mm,-] (C2_3)--(C1_1) node[draw=none,fill=none,font=\fontsize{8}{0}\selectfont,midway,above] {$\edgeweight_{\nodeidx,\nodeidx'}$};
\draw [line width=0.6mm,-] (C1_2)--(C1_1);
\draw [line width=0.4mm,-] (C1_2)--(C1_3);
\draw [-] (C1_1)--(C1_3);
\draw [-] (C1_3)--(C1_4);
\draw [-] (C1_2)--(C1_4);
\draw [line width=0.3mm,-] (C1_4)--(C3_1);
\draw [line width=0.6mm,-] (C2_1)--(C2_2);
\draw [line width=0.4mm,-] (C2_2)--(C2_3);
\draw [line width=0.4mm,-] (C2_1)--(C2_3);
\draw [-] (C3_1)--(C3_2);
\draw [-] (C3_2)--(C3_3);
\draw [line width=0.6mm,-] (C3_3)--(C3_4);
\draw [-] (C3_2)--(C3_4);
\draw [-] (C3_1)--(C3_4);
\end{tikzpicture}
\caption{\label{fig_local_dataset} We represent networked data and corresponding models using an undirected empirical graph $\graph=\big(\nodes,\edges\big)$. 
	Each node $\nodeidx \in \nodes$ of the graph carries a local dataset $\localdataset{\nodeidx}$ and model parameters $\localparams{\nodeidx}$ which 
	are scored using a local loss function $\locallossfunc{\nodeidx}{\localparams{\nodeidx}}$ (that encapsulates the local dataset $\localdataset{\nodeidx}$). 
	Two nodes are connected by a weighted edge $\{\nodeidx,\nodeidx'\}$ if they carry datasets with similar statistical properties. 
	The amount of similarity is encoded in an edge weight $\edgeweight_{\nodeidx,\nodeidx'}>0$ (indicated by the thickness of the links). 
	We rely on a clustering assumption, requiring optimal parameter vectors for nodes in the same cluster 
	$\cluster{\clusteridx} \subseteq \nodes$ to be nearly identical. 
	The empirical graph is partitioned into three disjoint clusters $\cluster{1},\cluster{2},\cluster{3}$. 
	Note that our FL method does not require the (typically unknown) partition but rather learns the 
	partition based on the local datasets and network structure of $\graph$. 
}
\end{center}
\vspace{-3mm}
\end{figure} 

An undirected edge $\{\nodeidx,\nodeidx'\}\!\in\!\edges$ indicates that the corresponding local 
datasets $\localdataset{\nodeidx}$ and $\localdataset{\nodeidx'}$ have similar statistical properties. 
In particular, two connected local datasets $\localdataset{\nodeidx}$ and $\localdataset{\nodeidx'}$ 
might be pooled together to obtain a training set for a single model. We will make this vague characterization 
more precise in Assumption \ref{asspt_FIM_lower_bound} and Definition \ref{equ_def_well_connected_cluster}. 

The strength of the similarity between two connected nodes $\nodeidx,\nodeidx'$ is quantified by the 
edge weight $\edgeweight_{\nodeidx,\nodeidx'}\!>\!0$. We also use $\edgeweight_{\edgeidx} \defeq  \edgeweight_{\nodeidx,\nodeidx'}$ for 
an edge $\edgeidx = \edge{\nodeidx}{\nodeidx'}$. It will be convenient to indicate the absence of an edge between 
two nodes $\nodeidx,\nodeidx' \in \nodes$ by a zero weight, i.e., $\edgeweight_{\nodeidx,\nodeidx'} = 0$ if 
and only if $\{\nodeidx,\nodeidx'\} \notin \edges$. 

Our approach requires the edges $\edges$ of the empirical graph to be known. While this assumption might 
seem restrictive, we argue that many important application domains offer a natural choice or construction 
for the edge set $\edges$. In particular, the construction of a useful edge set might be straightforward if 
we interpret the data points in a local dataset $\localdataset{\nodeidx}$ as i.i.d.\ realizations from a probability distribution $p^{(\nodeidx)}(\cdot)$. 
We can then use different estimators for the similarity between probability distributions. This includes 
parametric methods, e.g.,based on sample mean and covariance \cite{LC,hastie01statisticallearning}, 
and non-parametric methods (see \cite{ProofTriangleWass,coverthomas} and Section \ref{mnist_section}).  

The undirected edge $\edge{\nodeidx}{\nodeidx'} \in \edges$ encodes a symmetric notion of similarity 
between local datasets. If the local dataset at node $\nodeidx$ is (statistically) similar to the local 
dataset at node $\nodeidx'$ then also vice-versa. The symmetric nature of the similarities between 
local datasets is also reflected in the edge weights, 
\vspace{-2mm}
$$\edgeweight_{\nodeidx,\nodeidx'} = \edgeweight_{\nodeidx', \nodeidx} \mbox{ for any two nodes } \nodeidx, \nodeidx' \in \nodes. \vspace{-2mm}$$
It will be convenient for the formulation and analysis of our FL method (see Algorithm \ref{alg1}) to orient the edges in $\edges$. 
In particular, we define the head and tail of an undirected edge $\edgeidx=\{\nodeidx,\nodeidx'\} \in \edges$ 
as $\edgeidx_{+} \defeq \min\{\nodeidx,\nodeidx'\}$ and $\edgeidx_{-} \defeq \max\{\nodeidx,\nodeidx'\}$, respectively. 
The entire set of directed edges for an empirical graph is obtained as  
\begin{equation} 
	\label{equ_def_directed_edges_empgraph}
\overrightarrow{\edges} \defeq \big\{\directededge{\nodeidx}{\nodeidx'}: \nodeidx,\nodeidx' \in \nodes, \nodeidx < \nodeidx' \mbox{ and } \edge{\nodeidx}{\nodeidx'} \in \edges \big\}. 
\end{equation} 
We abuse notation and use $\edges$ not only to denote the set of undirected edges but 
also to denote the set \eqref{equ_def_directed_edges_empgraph} of directed edges in the empirical graph $\graph$. 
 
There are two vector spaces that are naturally associated with an empirical graph $\graph$. 
The ``node space'' $\nodespace$ consists of maps 
$\netparams: \nodes \rightarrow \mathbb{R}^{\dimlocalmodel}: \nodeidx \mapsto \localparams{\nodeidx}$
that assign each node $\nodeidx \in \nodes$ a vector $\localparams{\nodeidx} \in \mathbb{R}^{\dimlocalmodel}$. 
The ``edge space'' $\edgespace$ of all maps 
$\vu: \edges \rightarrow \mathbb{R}^{\dimlocalmodel}: \edgeidx \mapsto \localflowvec{\edgeidx} $
that assign each edge $\edgeidx \in \edges$ a vector $\localflowvec{\edgeidx} \in \mathbb{R}^{\dimlocalmodel}$. 
These two spaces are linked via the block-incidence matrix $\incidencemtx$ with entries 
$\incidencemtxentry{\edgeidx}{\nodeidx} = 1$ for $\nodeidx = \edgeidx_{+}$, $\incidencemtxentry{\edgeidx}{\nodeidx} = -1$ for $\nodeidx = \edgeidx_{-}$, and $\incidencemtxentry{\edgeidx}{\nodeidx} = 0$ otherwise.
The  block-incidence matrix $\incidencemtx$ represents a linear map 
\vspace{-2mm}
\begin{equation}
\label{equ_def_block_incidence_matrix}
 \mD: \nodespace \rightarrow \edgespace: \netparams \mapsto \flowvec \mbox{ with } \localflowvec{\edgeidx} = \localparams{\edgeidx_{+}} - \localparams{\edgeidx_{-}} 
 \vspace{-2mm}
 \end{equation} 
with the adjoint (transpose) $\incidencemtx^{T}$ representing another linear map, 
 \begin{equation}
 \label{equ_def_block_incidence_transp_matrix}
 	\incidencemtx^{T}: \edgespace\!\rightarrow\!\nodespace: \flowvec\!\mapsto\!\netparams \mbox{, } \localparams{\nodeidx} = \sum_{\edgeidx \in \edges} 
 	\sum_{\nodeidx = \edgeidx_{+}} \localflowvec{\edgeidx}\!-\!\sum_{\nodeidx = \edgeidx_{-}}  \localflowvec{\edgeidx}.
  \vspace{-4mm}
 \end{equation}

\subsection{Networked Models}
\label{sec_net_models}

A networked model consists of a separate local model for each local dataset $\localdataset{\nodeidx}$. 
Our approach to FL allows for a large variety of design choices for the local models. We only 
require all local models to be parametrized by a common finite-dimensional Euclidean space $\mathbb{R}^{\dimlocalmodel}$. 
This setting covers some widely-used ML models such as (regularized) generalized linear 
models or linear time series models \cite{LocalizedLinReg2019,Brockwell91}. 
However, our setting does not cover non-parametric local models such as decision trees. 

Networked models are parametrized by a map $\netparams \in \nodespace$ that assigns each node $\nodeidx \in \nodes$ 
in the empirical graph $\graph$ a local model parameter vector $\localparams{\nodeidx} \in \mathbb{R}^{\dimlocalmodel}$, \footnote{With 
	a slight abuse of notation we will refer by $\localparams{\nodeidx}$ also to the entire collection of local model parameters.}
$\weights: \nodes \rightarrow \mathbb{R}^{\dimlocalmodel}: \nodeidx \mapsto \weights^{(\nodeidx)}.$
%
We measure the usefulness of a particular choice for the local model parameters $\localparams{\nodeidx}$ 
by some local loss function $\locallossfunc{\nodeidx} {\localparams{\nodeidx}}$. Unless stated otherwise, 
we consider local loss functions that are convex and differentiable. The FL method proposed in 
Section \ref{sec_federatedml} allows for different choices for the local loss functions. These different choices 
might be obtained, in turn, from different combinations of ML models and performance metrics \cite[Ch. 3]{MLBasics}. 

From a computational perspective, our main requirement on the choice for the local loss function 
$\locallossfunc{\nodeidx} {\localparams{\nodeidx}}$ is that it allows for an efficient solving of the 
regularized problem, 
\begin{equation}
\label{equ_def_reguarlizated-local_loss}
	\min_{\weights' \in \mathbb{R}^{\dimlocalmodel}}  \locallossfunc{\nodeidx} {\weights'}+ \regparam \| \weights' - \weights'' \|^{2}_{2}. 
\end{equation} 
The computational complexity of our FL method (see Algorithm \ref{alg1}) depends on the 
ability to efficiently solve \eqref{equ_def_reguarlizated-local_loss} for any given $\regparam \in \mathbb{R}_{+}$ 
and $\weights'' \in \mathbb{R}^{\dimlocalmodel}$. Note that solving \eqref{equ_def_reguarlizated-local_loss} 
is equivalent to evaluating the proximal operator $\proximityop{\locallossfunc{\nodeidx}{\cdot}}{\weights''}{2 \regparam}$. 

Optimization methods for \eqref{equ_def_reguarlizated-local_loss} have been implemented for some widely used 
combinations of local models and loss functions \cite{JMLR:v12:pedregosa11a,PyTorchNeurips}. 
In general, these optimization methods are able to solve \eqref{equ_def_reguarlizated-local_loss} only up to 
some non-zero optimization error. However, our method is robust against such 
optimization errors (see our discussion below Algorithm \ref{alg1}). 


The FL method in Section \ref{sec_federatedml} applies to parametric models that can be trained 
by minimizing a loss function $\locallossfunc{\nodeidx}{\cdot}$ whose proximal operator can be evaluated efficiently. 
Convex functions for which the proximal operator can be computed efficiently are sometimes referred 
to as ``proximable'' or ``simple'' \cite{Condat2013}. Note that the shape of the loss function typically 
depends on both, the choice for the local model and the metric used to measure prediction errors \cite[Ch.\ 4]{MLBasics}. 

Our focus is on applications where the local loss functions $\locallossfunc{\nodeidx}{\localparams{\nodeidx}}$ 
do not carry sufficient statistical power to guide the learning of model parameters $\localparams{\nodeidx}$. As a 
case in point, consider a local dataset $\localdataset{\nodeidx}$ of the form \eqref{equ_def_local_dataset_plain}, with 
feature vectors $\featurevec^{(\sampleidx)} \in \mathbb{R}^{\dimlocalmodel}$ with $\localsamplesize{\nodeidx} \ll \dimlocalmodel$. 
We would like to learn the parameter vector $\localparams{\nodeidx}$ of a linear hypothesis $h(\featurevec) =\big( \localparams{\nodeidx} \big)^{T}   \featurevec$. Linear regression methods learn the parameter vector by minimizing the average  
squared error loss $\locallossfunc{\nodeidx}{\localparams{\nodeidx}}=(1/\samplesize_{\nodeidx}) \sum_{\sampleidx=1}^{\localsamplesize{\nodeidx}} \big( \truelabel^{(\sampleidx)} - \big( \localparams{\nodeidx} \big)^{T} \featurevec^{(\sampleidx)} \big)^{2}$. 
However, for $\localsamplesize{\nodeidx} \ll \dimlocalmodel$ (the ``high-dimensional'' regime) the minimum of 
$\locallossfunc{\nodeidx}{\cdot}$ is not unique and might also provide a poor hypothesis incurring large 
prediction errors on data points outside $\localdataset{\nodeidx}$
\cite[Ch. 6]{MLBasics}. Training linear models in the high-dimensional regime requires regularization such as in 
ridge regression or Lasso \cite{hastie01statisticallearning}. 

The main theme of this paper is to use the empirical graph $\graph$ to regularize the learning 
of local model parameters by requiring them not to vary too much over edges with large weights. 
Section \ref{sec_nLasso} introduces the concept of GTV as a quantitative measure for the variation 
of local parameter vectors. Regularization by requiring a small GTV is an instance of the 
smoothness assumption used in semi-supervised learning \cite{SemiSupervisedBook}. 

Our analysis of GTV minimization in Section \ref{sec_when_does_it_work} relates its underlying 
smoothness assumption to a clustering assumption. Section \ref{sec_clustering_assumption} formalizes 
this clustering assumption which requires local model parameters to be constant over subsets (clusters) 
of nodes in the empirical graph. Theorem \ref{thm_main_result} then offers precise conditions on the 
empirical graph and local loss functions such that GTV minimization successfully recovers the clusters 
of nodes.

\subsection{Clustering Assumption}
\label{sec_clustering_assumption}
Consider networked data with empirical graph $\graph = \pair{\nodes}{\edges}$. 
Each node $\nodeidx$ in the graph carries a local dataset $\localdataset{\nodeidx}$ and a local model with parameters 
$\localparams{\nodeidx}$. Our goal is to learn the local model parameters $\localparams{\nodeidx}$ for each node $\nodeidx \in \nodes$. 
The key assumption of clustered FL is that the local datasets form clusters with local datasets in the same 
cluster having similar statistical properties \cite{NIPS2008_fccb3cdc}. Given a 
cluster $\clustergeneric$ of nodes, it seems natural to pool their local datasets or, equivalently, add their 
local functions to learn a cluster-specific parameter vector 
\begin{equation} 
	\label{equ_def_opt_cluster}
	\overline{\weights}^{(\clustergeneric)} =  \argmin_{\vv \in \mathbb{R}^{\dimlocalmodel}} \clusterobj{\clustergeneric}{\vv} \mbox{ with } \clusterobj{\clustergeneric}{\vv} \defeq \sum_{\nodeidx \in \clustergeneric}\locallossfunc{\nodeidx}{\vv}.
 \vspace{-2mm}
\end{equation}
Note that \eqref{equ_def_opt_cluster} cannot be implemented in practice since we typically do not know 
the cluster $\clustergeneric$. The main analytical contribution of this paper is an upper bound for the deviation 
between solutions of GTV minimization and the cluster-wise (but impractical) learning problem \eqref{equ_def_opt_cluster}. 
This bound characterizes the statistical properties of FL algorithms that are obtained by applying 
optimization techniques for solving GTV minimization (see Section \ref{sec_federatedml}). 

The solution $\overline{\weights}^{(\clustergeneric)}$ of \eqref{equ_def_opt_cluster} minimizes the 
aggregation (sum) of all local loss functions that belong to the same cluster $\clustergeneric \subseteq \nodes$. 
Thus, $\overline{\weights}^{(\clustergeneric)}$ is the optimal model parameter for a training set obtained by 
pooling all local datasets that belong to the cluster $\clustergeneric$. As indicated by our notation, we 
tacitly assume that the solution to \eqref{equ_def_opt_cluster} is unique. The uniqueness of the 
solution in \eqref{equ_def_opt_cluster} will be ensured by Assumption \ref{asspt_FIM_lower_bound} below. 

We now make our assumption of datasets in the same cluster ``having similar statistical properties'' precise. 
In particular, we require the local loss functions at nodes $\nodeidx \in \clustergeneric$ in the same cluster $\clustergeneric$ 
to have nearby minimizers. Thus, we require a small deviation $\normgeneric{\vv^{(\nodeidx)} - \overline{\weights}^{(\clustergeneric)}}{}$ 
between the minimizer $\vv^{(\nodeidx)}$ of $\locallossfunc{\nodeidx}{\cdot}$ and the corresponding 
cluster-wise optimal parameter vector \eqref{equ_def_opt_cluster}. This requirement is, for differentiable and convex loss 
functions, equivalent to requiring a small gradient of the local loss functions at the cluster-wise 
minimizer \eqref{equ_def_opt_cluster}. It will be convenient for our analysis to formulate this 
requirement by upper bounding the dual norm $\normgeneric{\nabla \locallossfunc{\nodeidx}{\overline{\weights}^{(\clustergeneric)}} }{*}$ 
of the local loss gradient. 
\begin{assumption}[Clustering]
\label{asspt_weights_clustered}
Consider some networked data represented by an empirical graph $\graph$ whose nodes 
carry local loss functions $\locallossfunc{\nodeidx}{\vv}$, for $\nodeidx \in \nodes$. 
There is a partition of the nodes $\nodes$ into disjoint clusters 
\begin{equation}
\label{equ_def_parition_asspt}
\begin{aligned}
 &{\partition\!=\!\{\cluster{1},\ldots,\cluster{\nrcluster} \} \mbox{ with } \cluster{\clusteridx} \cap \cluster{\clusteridx'} = \emptyset ,} \\
 &{ \quad \mbox{ for } \clusteridx\!\neq\!\clusteridx' \mbox{ and } \nodes = \cluster{1} \cup \ldots \cup \cluster{\nrcluster}.} 
\end{aligned}
 \vspace{-2mm}
\end{equation} 
Moreover, for each cluster $\cluster{\clusteridx}\in \partition$, 
\begin{equation}
\label{equ_upper_bound_norm_gradient}
\normgeneric{\nabla \locallossfunc{\nodeidx}{\overline{\weights}^{(\clusteridx)}} }{*}  \leq \clusteropterr{\nodeidx} \mbox{ for all } \nodeidx \in \cluster{\clusteridx}. 
 \vspace{-2mm}
\end{equation}
Here, $\overline{\weights}^{(\clusteridx)}\!\in\!\mathbb{R}^{\dimlocalmodel}$ denotes the solution of the cluster-wise minimization
\eqref{equ_def_opt_cluster} for cluster $\cluster{\clusteridx}$. 
\end{assumption} 
The clustering assumption requires the dual norm $\normgeneric{\nabla \locallossfunc{\nodeidx}{\ \overline{\weights}^{(\clusteridx)}}}{*}$ 
to be bounded by a constant $\clusteropterr{\nodeidx}$ for each nodes $\nodeidx \in \nodes $ in the empirical graph. 
We can interpret this norm as a measure for the discrepancy between the cluster-wise minimizer $\overline{\weights}^{(\clusteridx)}$ 
(see \eqref{equ_def_opt_cluster}) and the minimizers of the local loss functions $\locallossfunc{\nodeidx}{\overline{\weights}^{(\clusteridx)}}$ 
for each node $\nodeidx \in \cluster{\clusteridx}$. 

Section \ref{sec_federatedml} uses the cluster-wise minimization \eqref{equ_def_opt_cluster} as a 
theoretical device to analyze the solutions of GTV minimization \eqref{equ_gtvmin}. It is important to 
note that \eqref{equ_def_opt_cluster} does not inform a practical FL method as it requires knowledge 
of the clusters in the partition \eqref{equ_def_parition_asspt}. It might be unrealistic to assume perfect 
knowledge of the partition \eqref{equ_def_parition_asspt} postulated by Assumption \ref{asspt_weights_clustered}. 
Rather, we show that GTV minimization is able to recover this partition using solely the edges of the empirical 
graph $\graph$. 

Section \ref{sec_nLasso} formulates FL as GTV minimization which is an instance of RERM. GTV minimization is 
enforces ``clusteredness'' of local model parameters by requiring a small variation across edges in the 
empirical graph. Under Assumption \ref{asspt_weights_clustered}, this regularization strategy will be useful if many (large weight) 
edges connect nodes in the same cluster but only few (small weight) edges connect nodes in different clusters. 
Section \ref{sec_when_does_it_work} presents a precise condition on the network structure such that GTV 
minimization succeeds in capturing the true underlying cluster structure of the local loss functions. 



The analysis of the FL method proposed in Section \ref{sec_when_does_it_work} requires the 
local loss functions to be convex and smooth. Moreover, we require their (partial) sums in the cluster-wise 
objective $\clusterobj{\clusteridx}{\cdot}$ \eqref{equ_def_opt_cluster} to be strongly 
convex \cite[Exercise 1.9]{BertCvxAnalOpt}. 
\begin{assumption}[Convexity and Smoothness]
	\label{asspt_FIM_lower_bound}
For each node $\nodeidx \in \nodes$, the local loss function $\locallossfunc{\nodeidx}{\localparams{\nodeidx}}$ is convex 
and differentiable with gradient satisfying 
\begin{equation}
	\label{equ_def_cond_locallipsch}
\normgeneric{ \nabla \locallossfunc{\nodeidx}{\vv'} - \nabla \locallossfunc{\nodeidx}{\vv}}{*} \leq \locallipsch{\nodeidx} \normgeneric{\vv'- \vv}{}. 
\end{equation} 
For each cluster $\cluster{\clusteridx} \in \partition$ in the partition \eqref{equ_def_parition_asspt}, 
the cluster-wise objective $\clusterobj{\clusteridx}{\cdot}$ \eqref{equ_def_opt_cluster} is strongly convex, 
\begin{equation} 
\label{equ_strong_convexit}
\begin{aligned}
    \clusterobj{\clusteridx}{\vv'} \geq \clusterobj{\clusteridx}{\vv}  +
 &{\big( \vv' \!-\! \vv \big)^{T} \partial \clusterobj{\clusteridx}{\vv} + (\strongconvparam{\clusteridx}/2) \norm{\vv' \!-\! \vv}^{2},} \\
 &{\mbox{ for any } \vv', \vv \in \mathbb{R}^{\dimlocalmodel}.} 
\end{aligned}
\end{equation} 
Here, $\strongconvparam{\clusteridx}> 0$ is a positive constant that might be different for different clusters $\cluster{\clusteridx}$. 
The norm $\normgeneric{\cdot}{}$ in \eqref{equ_def_cond_locallipsch}, \eqref{equ_strong_convexit} is the dual of 
the norm $\normgeneric{\cdot}{*}$ used in \eqref{equ_def_cond_locallipsch} and \eqref{equ_upper_bound_norm_gradient}. 
\end{assumption}  
Assumption \ref{asspt_FIM_lower_bound} is rather standard in FL literature \cite{Ghosh2020,Nassif2020}. 
In particular, Assumption \ref{asspt_FIM_lower_bound} is satisfied by many important ML models \cite{LocalizedLinReg2019}. 
Assumption \ref{asspt_FIM_lower_bound} does not hold for many deep learning models that result in 
non-convex loss functions \cite{pmlr-v54-mcmahan17a}. Nevertheless, we expect our theoretical analysis 
to provide useful insight also for settings where Assumption \ref{asspt_FIM_lower_bound} is violated.  


We emphasize that Assumption \ref{asspt_FIM_lower_bound} does not require strong convexity for each local loss 
function $\locallossfunc{\nodeidx}{\cdot}$ individually. Rather, it only requires their cluster-wise sums \eqref{equ_def_opt_cluster} 
to be strongly convex. We also allow for trivial local loss functions that are constant and might represent 
the inaccessibility of local datasets due to  privacy-constraints or lack of computational resources. 
The FL method in Section \ref{sec_federatedml} can tolerate the presence of non-informative local loss functions by exploiting the 
similarities between local datasets as reflected by the edges in the empirical graph $\graph$. 


\section{Generalized Total Variation Minimization}
\label{sec_nLasso}

The clustering Assumption \ref{asspt_weights_clustered} suggests to learn the model parameters $\localparams{\nodeidx}$ via 
the cluster-wise optimization \eqref{equ_def_opt_cluster}. For each cluster $\cluster{\clusteridx}$ in the partition \eqref{equ_def_parition_asspt}, 
we use the solution of \eqref{equ_def_opt_cluster} as the local model parameters at all nodes $\nodeidx \in \cluster{\clusteridx}$. 
However, this approach is not practical since the partition \eqref{equ_def_parition_asspt} is typically unknown and therefore 
we cannot directly implement \eqref{equ_def_opt_cluster}. Instead, we use the empirical graph $\graph$ to penalize variations 
of local model parameter $\localparams{\nodeidx}$ over well-connected nodes (see Section \ref{sec_primal_problem}). 

We hope that penalizing their variation (over the edges in the empirical graph) favours local model parameters 
that are approximately constant over nodes in the same cluster \eqref{equ_def_parition_asspt}. For this approach 
to be successful, the nodes in the same cluster must be densely connected by many edges (or large weight) in 
the empirical graph, which should have only few edges (with low weight) between nodes in different clusters. 
We will make this informal assumption precise in Section \ref{sec_when_does_it_work}. For now, we use the 
informal clustering assumption to motivate GTV as a useful regularizer for learning the local model parameters. 

If the cluster structure of $\graph$ is reflected by a high density of edges within clusters and  
few boundary edges between them, it seems reasonable to require a small variation of local 
parameter vectors $\localparams{\nodeidx}$ across edges. We measure the variation of local parameter 
vectors $\netparams \in \nodespace$ across the edges in $\graph$ via the variation $\flowvec: \edgeidx \in \edges \mapsto \localflowvec{\edgeidx} \defeq \localparams{\edgeidx_{+}} -  \localparams{\edgeidx_{-}}$. 
Using the block-incidence matrix \eqref{equ_def_block_incidence_matrix} we can express the variation of $\netparams$
more compactly as $\flowvec = \incidencemtx \netparams$. 

A quantitative measure for the variation of local model parameters $\netparams$ is the GTV
\begin{equation} \label{eq:5}
\begin{aligned}
    &{\normgeneric{\netparams}{\rm GTV}   \defeq \sum_{\edge{\nodeidx}{\nodeidx'} \in \edges} \edgeweight_{\nodeidx,\nodeidx'} \phi\big(\localparams{\nodeidx'} -\localparams{\nodeidx}  \big) }
\end{aligned}
\vspace{-2mm}
\end{equation}
 with some convex penalty function $\gtvpenalty(\cdot): \mathbb{R}^{\dimlocalmodel} \rightarrow \mathbb{R}$.
We also define the GTV for a subset of edges $\mathcal{S} \subseteq \edges$ as 
\begin{equation} \label{eq:def_GTV_subset}
	\| \netparams \|_{\mathcal{S}}  \defeq \sum_{\edge{\nodeidx}{\nodeidx'} \in \mathcal{S}} \edgeweight_{\nodeidx,\nodeidx'} \gtvpenalty\big(\localparams{\nodeidx'} -\localparams{\nodeidx}  \big).
 \vspace{-2mm}
\end{equation} 
The GTV \eqref{eq:5} provides a whole ensemble of variation measures. This ensemble is parametrized by a 
penalty function $\gtvpenalty(\vv) \in \mathbb{R}$ which we tacitly assume to be convex. The penalty 
function $\gtvpenalty(\cdot)$ is an important design choice that determines the computational and 
statistical properties of the resulting GTV minimization problem (see Section \ref{sec_federatedml} 
and Section \ref{sec_when_does_it_work}). Two popular choices are $\gtvpenalty(\vv) \defeq \| \vv\|_{2}$, 
which is used by nLasso \cite{NetworkLasso}, and $\gtvpenalty(\vv) \defeq (1/2)\| \vv\|^{2}_{2}$ 
which is used by the method {\rm MOCHA} \cite{Smith2017}. Another recent FL method uses 
the choice $\gtvpenalty(\vv) \defeq \| \vv \|_{1}$ \cite{Sarchesh2021}. 

Different choices for the penalty function offer different trade-offs between computational complexity and 
statistical properties of the resulting FL algorithms. As a case in point, the penalty $\gtvpenalty(\flowvec)= \|\flowvec\|_{2}$ (used in nLasso) 
is computationally more challenging than the penalty $\gtvpenalty(\vu)= (1/2) \|\flowvec\|_{2}^{2}$ (used in {\rm MOCHA} \cite{Smith2017}). 
On the other hand, nLasso is more accurate in learning models for data with specific 
network structures (such as chains) that are challenging for GTV minimization method using the 
smooth penalty $\gtvpenalty(\vv) \defeq (1/2)\| \vv\|^{2}_{2}$ \cite{WhenIsNLASSO}.  

Section \ref{sec_federatedml} designs FL methods whose main computational steps include the computation of 
the proximal operator $\proximityop{\phi^{*}}{\cdot}{\rho}$ for the convex conjugate $\gtvpenalty^{*}$ of the 
GTV penalty function $\gtvpenalty(\cdot)$. Thus, for these methods to be computationally tractable we must 
choose $\gtvpenalty(\cdot)$ such that the proximal operator $\proximityop{\phi^{*}}{\cdot}{\rho}$ can be computed (evaluated) efficiently.\footnote{The difficulty 
	of computing the proximal operator $\proximityop{\phi^{*}}{\vu}{\rho}$ is essentially the same as that 
	of computing the proximal operator $\proximityop{\phi}{\vu}{\rho}$. Indeed, these two proximal operators 
	are related via the identity $\vu = \proximityop{\phi}{\vu}{1}+ \proximityop{\phi^{*}}{\vu}{1}$  \cite{ProximalMethods}.}

\subsection{The Primal Problem}
\label{sec_primal_problem}
%
%

GTV minimization learns the local model parameters $\localparams{\nodeidx}$ by balancing (the sum of) 
local loss functions and GTV \eqref{eq:5}, 
\begin{equation} 
\label{equ_gtvmin}
\hspace*{-2mm}\widehat{\netparams}\!\in\!\underset{\netparams \in \nodespace}{\mathrm{arg \ min}}\sum_{\nodeidx \in \nodes} \hspace*{-1mm}\locallossfunc{\nodeidx}{\localparams{\nodeidx}}\!+\!\regparam \|\netparams\|_{\rm GTV} \mbox{ with some } \regparam\!>\!0. 
\end{equation}
The regularization parameter $\regparam\!>\!0$ in \eqref{equ_gtvmin} steers the preference for learning 
parameter vectors $\localparams{\nodeidx}$ with small GTV versus incurring small local loss 
$\sum_{\nodeidx \in \nodes} \locallossfunc{\nodeidx}{\localparams{\nodeidx}}$. The choice of $\regparam$ 
can be guided by cross validation \cite{hastie01statisticallearning} or by our analysis of the solutions of \eqref{equ_gtvmin} 
in Section \ref{sec_when_does_it_work}. 

Increasing the value of $\regparam$ results in the solutions of \eqref{equ_gtvmin} becoming 
increasingly clustered, with local model parameters $\estlocalparams{\nodeidx}$ being constant over (increasingly) large 
subsets of nodes. Choosing $\regparam$ larger than some critical value, that depends on the shape of 
the local loss functions and the edges of $\graph$, results in $\estlocalparams{\nodeidx}$ 
being constant over all nodes $\nodeidx \in \nodes$. Section \ref{sec_when_does_it_work} offers precise 
conditions on the local loss functions and the empirical graph such that the solutions of \eqref{equ_gtvmin} 
capture the (unknown) underlying partition \eqref{equ_def_parition_asspt}. 

The computational and statistical properties of GTV minimization depend on the design choices for 
local models (which determine the shape of local loss functions) and GTV penalty function (see \eqref{eq:5}). 
Section \ref{sec_nLasso} applies the primal-dual method \cite[Alg. 6]{pock_chambolle_2016} to compute 
(approximate) solutions of \eqref{equ_gtvmin} when $\locallossfunc{\nodeidx}{\cdot}$ and 
$\gtvpenalty(\cdot)$ allow for an efficient computation of their proximal operators (see \eqref{equ_def_proximity_operator}). 
For convex $\locallossfunc{\nodeidx}{\cdot}$ and $\gtvpenalty(\cdot)$ being a norm, we will 
characterize the solutions of \eqref{equ_gtvmin} in our main result Theorem \ref{thm_main_result}. 

GTV minimization \eqref{equ_gtvmin} is an instance of RERM, using the scaled GTV $\regparam \|\netparams\|_{\rm GTV}$ as 
regularizer. The empirical risk incurred by the local model parameters $\netparams \in \nodespace$ 
is measured by the sum of the local loss functions $\sum_{\nodeidx \in \nodes} \locallossfunc{\nodeidx}{\localparams{\nodeidx}}$. 
GTV minimization \eqref{equ_gtvmin} unifies and considerably extends some well-known methods for 
distributed optimization and learning. In particular, the nLasso \cite{NetworkLasso} is obtained from 
\eqref{equ_gtvmin} for the choice $\gtvpenalty(\vv) \defeq \normgeneric{\vv}{2}$. The {\rm MOCHA} method \cite{Smith2017} is obtained 
from \eqref{equ_gtvmin} for the choice $\gtvpenalty(\vv) \defeq (1/2) \normgeneric{\vv}{2}^{2}$. Another 
special case of \eqref{equ_gtvmin}, obtained for the choice $\gtvpenalty(\vv) \defeq \normgeneric{\vv}{1}$, has been 
studied recently \cite{Sarchesh2021}. 

\subsection{The Dual Problem} 
\label{sec_dual_problem_GTV_Min}

The solutions of GTV minimization \eqref{equ_gtvmin} can be conveniently characterized and computed 
by introducing another optimization problem that is dual (in a sense that we make precise promptly) 
to \eqref{equ_gtvmin}. We obtain this dual problem by using the convex conjugate $h^{*}$ (see \eqref{equ_def_conv_conj}) of 
a convex function $h(\vx)$ (see \cite{BoydConvexBook}). The convex conjugate offers an alternative (or dual) 
representation of a convex function $h(\vx)$ via 
\begin{equation}
\label{equ_def_cvx_conj}
h(\vx) \defeq \sup\limits_{\vz} \vx^{T} \vz - h^{*}(\vz).
\vspace{-1mm}
\end{equation}
This alternative (or dual) representation of convex functions lends naturally to a 
dual problem for GTV minimization \eqref{equ_gtvmin} . 

While the domain of GTV minimization \eqref{equ_gtvmin} are the local model parameters $\localparams{\nodeidx} \in \mathbb{R}^{\dimlocalmodel}$, 
for all nodes $\nodeidx \in \nodes$, the domain of the dual problem will be flow vectors $\localflowvec{\edgeidx} \in \mathbb{R}^{\dimlocalmodel}$, 
for each $\edgeidx \in \edges$. To formulate the dual problem of GTV minimization \eqref{equ_gtvmin}, 
we first rewrite it more compactly as (see \eqref{equ_def_block_incidence_matrix} and \eqref{eq:5}),   
\begin{equation} \label{equ_nLasso_compactly}
\begin{aligned}
    &{\widehat{\netparams} \in \underset{{\netparams} \in \nodespace}{\mathrm{arg \ min}}\ f(\netparams)\!+\!g(\incidencemtx \netparams )}\\ 
    &{\mbox{with } 
	f(\netparams)\defeq\sum_{\nodeidx \in \nodes} \locallossfunc{\nodeidx}{\localparams{\nodeidx}}}, \\
    &{\mbox{and } g( \flowvec ) \!\defeq\!\regparam \sum_{\edgeidx \in \edges} \edgeweight_{\edgeidx} \gtvpenalty \big( \localflowvec{\edgeidx} \big).} 
\end{aligned}
\vspace{-1mm}
\end{equation}
The objective function in \eqref{equ_nLasso_compactly} is the sum of two convex functions $f(\netparams)$ and $g(\flowvec )$ whose 
arguments are coupled as $\flowvec =\incidencemtx \netparams$. 
Representing $f(\netparams)$ and $g(\flowvec)$ via their convex conjugates (see \eqref{equ_def_cvx_conj}) 
and interchanging the minimization with the maximization (``taking the supremum'') in \eqref{equ_def_cvx_conj} results in 
the dual problem
\begin{equation}
\label{equ_dual_nLasso}
\max_{\flowvec \in \edgespace} -g^{*}(\flowvec) - f^{*}(-\incidencemtx^{T} \flowvec).
\vspace{-1mm}
\end{equation}
The domain of the dual problem \eqref{equ_dual_nLasso} is the space $\edgespace$ 
of maps $\flowvec: \edges \rightarrow \mathbb{R}^{\dimlocalmodel}$ that assign a flow 
vector $\localflowvec{\edgeidx}$ to each edge $\edgeidx \in \edges$ of the empirical graph $\graph$. 

The objective function of the dual problem \eqref{equ_dual_nLasso} is composed of the 
convex conjugates 
\begin{align}
	\label{equ_conv_conjugate_g_dual_proof}
	g^{*}(\flowvec) &\defeq \sup_{\vz \in \edgespace} \sum_{\edgeidx \in \edges} \big( \localflowvec{\edgeidx} \big)^{T}\vz^{(\edgeidx)} - g(\vz)   \nonumber \\ 
	& \stackrel{\eqref{equ_nLasso_compactly}}{=} \sup_{\vz \in \edgespace}  \sum_{\edgeidx \in \edges} \big( \localflowvec{\edgeidx} \big)^{T}\vz^{(\edgeidx)}\!-\! \regparam \edgeweight_{\edgeidx} \phi \big( \vz^{(\edgeidx)}\big) \nonumber \\ 
	& =  \sum_{\edgeidx \in \edges} \regparam  \edgeweight_{\edgeidx} \gtvpenalty^{*}\big( \localflowvec{\edgeidx}/(\regparam \edgeweight_{\edgeidx}) \big), 
 \vspace{-3mm}
\end{align}
and
\begin{align}
	\label{equ_dual_f_fun}
	f^{*}(\netparams) & \!\defeq\! \sup_{\vz \in \nodespace} \sum_{\nodeidx \in \nodes} \big( \localparams{\nodeidx} \big)^{T}\vz^{(\nodeidx)}- f(\vz) \nonumber \\  
	& \stackrel{\eqref{equ_nLasso_compactly}}{=} \sup_{\vz \in \nodespace} \sum_{\nodeidx \in \nodes} \big( \localparams{\nodeidx} \big)^{T}\vz^{(\nodeidx)} - \sum_{\nodeidx \in \nodes} \locallossfunc{\nodeidx}{\vz^{(\nodeidx)}} \nonumber \\ 
	& =\sum_{\nodeidx \in \nodes}  \conjlocallossfunc{\nodeidx}{\localparams{\nodeidx}} .
 \vspace{-3mm}
\end{align}
Note that the convex conjugate in \eqref{equ_conv_conjugate_g_dual_proof} is constituted by 
the values of the convex conjugate of the penalty function $\gtvpenalty$, evaluated at the flow 
vectors $\localflowvec{\edgeidx}$ across each edge $\edgeidx \in \edges$. Similarly, the convex 
conjugate in \eqref{equ_dual_f_fun} is constituted by the convex conjugates of the local loss functions, 
evaluated at the local model parameters. 

The duality between \eqref{equ_gtvmin} and \eqref{equ_dual_nLasso} is made precise in \cite[Ch.\ 31]{RockafellarBook} 
(see also \cite[Sec. 3.5]{pock_chambolle_2016}). First, the optimal values of both problems coincide \cite[Cor.\ 31.2.1]{RockafellarBook},
\begin{equation}
	\label{equ_equal_primal_dual}
	\min_{\netparams \in \nodespace} \hspace*{0mm} f(\netparams)\!+\!g(\incidencemtx \netparams ) \!=\! \hspace*{0mm}\max_{\flowvec \in \edgespace} -g^{*}(\flowvec)\!-\!f^{*}(-\incidencemtx^{T} \flowvec).
 \vspace{-2mm}
\end{equation}
Moreover, a necessary and sufficient condition for $\widehat{\netparams}$ to solve \eqref{equ_gtvmin}
and $\widehat{\flowvec}$ to solve \eqref{equ_dual_nLasso} is \cite[Thm. 31.3]{RockafellarBook} (see also \cite[Ch. 7]{BertCvxAnalOpt}) 
\begin{equation}
	\label{equ_opt_condition_Rocka_KKT}
	-\incidencemtx^{T} \widehat{\flowvec} \in \partial f(\widehat{\netparams}) \mbox{ , and } \incidencemtx \widehat{\netparams} \in  \partial  g^{*}(\widehat{\flowvec}).
\end{equation}

The identity \eqref{equ_equal_primal_dual} (which is an instance of ``strong duality'' \cite[Ch. 5]{BoydConvexBook})
allows to bound the sub-optimality of some given local model parameters $\localparams{\nodeidx}$. 
Indeed, for any given dual variable $\widehat{\flowvec} \in \edgespace$, the objective function 
value $-g^{*}(\widehat{\flowvec})\!-\!f^{*}(-\incidencemtx^{T} \widehat{\flowvec})$ is a 
lower bound for the optimal value of \eqref{equ_gtvmin}. Such a bound on the sub-optimality of given 
local model parameters can be useful for defining a stopping criterion for iterative optimization methods (see Section \ref{sec_federatedml}) 

It is instructive to rewrite the dual problem \eqref{equ_dual_nLasso} and the optimality 
condition \eqref{equ_opt_condition_Rocka_KKT} in terms of the local parameter vectors 
$\localparams{\nodeidx}$, for each node $\nodeidx \in \nodes$, and the local flow 
vectors $\localflowvec{\edgeidx}$, for each $\edgeidx \in \edges$. Indeed, the final 
expressions in \eqref{equ_dual_f_fun} and \eqref{equ_conv_conjugate_g_dual_proof} 
allow us to rewrite the dual problem \eqref{equ_dual_nLasso} as 
\begin{equation} \label{equ_def_duality_nLasso_edge_node}
    \begin{aligned}
        \max_{\flowvec \in \edgespace} &{- \sum_{\nodeidx \in \nodes} \conjlocallossfunc{\nodeidx}{\localparams{\nodeidx}} - 
        \regparam \sum_{\edgeidx \in \edges}   \edgeweight_{\edgeidx}  \gtvpenalty^{*}\big( \localflowvec{\edgeidx} /  ( \regparam  \edgeweight_{\edgeidx}) \big) } \\ 
        &{ \mbox{ subject to } - \localparams{\nodeidx} = \sum_{\edgeidx \in \edges} \sum_{\nodeidx = \edgeidx_{+}} \localflowvec{\edgeidx} - \sum_{\nodeidx = \edgeidx_{-}}  \localflowvec{\edgeidx}} \\
        &{ \mbox{ for all nodes } \nodeidx \in \nodes.} 
    \end{aligned}
    \vspace{-2mm}
\end{equation}

Using the block-incidence matrix \eqref{equ_def_block_incidence_matrix} and its 
transpose \eqref{equ_def_block_incidence_transp_matrix}, we can also rewrite the optimality 
condition \eqref{equ_opt_condition_Rocka_KKT} as 
\begin{align}
        \sum_{\edgeidx \in \edges} 
      	\sum_{\nodeidx = \edgeidx_{+}} & { \optlocalflowvec{\edgeidx} - \sum_{\nodeidx = \edgeidx_{-}}  \optlocalflowvec{\edgeidx}  = - \nabla \locallossfunc{\nodeidx}{\estlocalparams{\nodeidx}}} {\mbox{ for each } \nodeidx\!\in\!\nodes} \nonumber \\ 
       \estlocalparams{\edgeidx_{+}} - & \hspace*{-2mm} { \estlocalparams{\edgeidx_{-}}  \!\in\! ( \regparam  \edgeweight_{\edgeidx}) \partial \gtvpenalty^{*}( \optlocalflowvec{\edgeidx} /  ( \regparam  \edgeweight_{\edgeidx}) )}  \mbox{ for each } \edgeidx\!\in\!\edges.  \label{equ_opt_condition_node_edge}
    \vspace{-2mm}
\end{align}


Let us now specialize the dual problem \eqref{equ_def_duality_nLasso_edge_node} for a 
GTV penalty function $\gtvpenalty$ being a norm $\| \cdot \|$ on $\mathbb{R}^{\dimlocalmodel}$ \cite{Golub1980}. 
For such a penalty function $\gtvpenalty\big(\localflowvec{\edgeidx}\big) = \normgeneric{\localflowvec{\edgeidx}}{}$, 
the convex conjugate is the indicator of the dual-norm ball \cite[Example 3.26]{BoydConvexBook}, 
\begin{equation} 
	\label{equ_dual_gtv_pen_conj}
\gtvpenalty^{*} \big( \localflowvec{\edgeidx}  \big) = \begin{cases} 0 & \mbox{ for } \big\| \localflowvec{\edgeidx} \big\|_{*} \leq 1  \\ 
	\infty & \mbox{ else.} \end{cases}
 \vspace{-2mm}
\end{equation}
Inserting \eqref{equ_dual_gtv_pen_conj} into \eqref{equ_def_duality_nLasso_edge_node}, 
\begin{align}
	\label{equ_def_duality_nLasso_edge_node_norm_gtv_pen}
	\max_{\flowvec \in \edgespace}  - \sum_{\nodeidx \in \nodes} \conjlocallossfunc{\nodeidx} {\localparams{\nodeidx}} &  \nonumber \\ 
	 \mbox{ subject to } - \localparams{\nodeidx}  & = 
	\sum_{\edgeidx \in \edges} \sum_{\nodeidx = \edgeidx_{+}} \localflowvec{\edgeidx} - \sum_{\edgeidx \in \edges: \nodeidx = \edgeidx_{-}}  \localflowvec{\edgeidx} \nonumber \\
 & \quad \quad \mbox{ for each } \nodeidx\!\in\!\nodes \nonumber \\
	 \| \localflowvec{\edgeidx} \|_{*} &    \leq  \regparam  \edgeweight_{\edgeidx} \mbox{ for each } \edgeidx\!\in\!\edges. 
   \vspace{-2mm}
\end{align}
Thus, when the GTV penalty function is a norm $\gtvpenalty(\cdot) = \| \cdot \|$, the optimality condition \eqref{equ_opt_condition_node_edge} 
becomes (see \cite[p. 215]{RockafellarBook}) 
\begin{align} 
\label{equ_opt_condition_node_edge_norm}
    \hspace{-5mm} \sum_{\edgeidx \in \edges:\nodeidx\!=\!\edgeidx_{+}} \hspace*{-3mm}\optlocalflowvec{\edgeidx} -  \sum_{\edgeidx \in \edges:\nodeidx\!=\!\edgeidx_{-}}  & \optlocalflowvec{\edgeidx}  = - \nabla \locallossfunc{\nodeidx}{\estlocalparams{\nodeidx}} \mbox{ for each } \nodeidx\!\in \!\nodes \nonumber, \\ 
    \quad \quad \| \optlocalflowvec{\edgeidx} \|_{*} \!\leq\!\regparam  \edgeweight_{\edgeidx}  & \mbox{ for each } \edgeidx\!\in\!\edges,  \\ 
    \estlocalparams{\edgeidx_{+}} = \estlocalparams{\edgeidx_{-}} 	\mbox{ for} &  \mbox{ each } \edgeidx \in \edges \mbox{ with }  \| \optlocalflowvec{\edgeidx} \|_{*} \!<\!\regparam  \edgeweight_{\edgeidx}.  \nonumber
 \vspace{-2mm}
\end{align} 

\subsection{Interpretations} 
\label{sec_interpreations} 
We next discuss some useful interpretations of GTV minimization \eqref{equ_gtvmin}. 
These interpretations relate GTV minimization with some well-known ML principles \cite{MLBasics,hastie01statisticallearning,SemiSupervisedBook} 
and network optimization \cite{RockafellarBook,BertsekasNetworkOpt}. 

\emph{Generalization of Graph Clustering.} Our main result Theorem \ref{thm_main_result} bounds the deviation 
between the solutions of GTV minimization \eqref{equ_gtvmin} and local model parameters that 
are constant over well-connected (see Definition \ref{equ_def_well_connected_cluster}) subsets of nodes in the 
empirical graph. Thus, we can interpret GTV minimization \eqref{equ_gtvmin} as a method for clustering the nodes 
in the empirical graph. In contrast to basic graph clustering methods \cite{Luxburg2007,Ng2001}, GTV minimization 
\eqref{equ_gtvmin} does not solely depend on the edge connectivity of the empirical graph but also on the shape of 
the local loss functions $\locallossfunc{\nodeidx}{\localparams{\nodeidx}}$ at its nodes. In particular, our method 
might deliver different clusters for two empirical graphs having identical edge sets but carrying different local loss 
functions at its nodes.

\emph{Generalization of Convex Clustering.} GTV minimization generalizes convex clustering \cite{JMLR:v22:18-694} 
which is a special case of \eqref{equ_gtvmin}, obtained for the specific local loss functions 
\begin{equation} 
	\label{equ_lossfunc_cvxclustering}
\locallossfunc{\nodeidx}{\localparams{\nodeidx}} = \norm{\localparams{\nodeidx} - \va^{(\nodeidx)}}^{2}, \mbox{ for all nodes } \nodeidx \in \nodes
 \vspace{-1mm}
\end{equation} 
and GTV penalty $\gtvpenalty(\vu) = \normgeneric{\vu}{p}$ being a $p$-norm $ \normgeneric{\vu}{p} \defeq \big( \sum_{j=1}^{\dimlocalmodel} |u_{j}|^{p} \big)^{1/p}$ with some $p \geq 1$. The vectors $\va^{(\nodeidx)}$ in \eqref{equ_lossfunc_cvxclustering} 
are the observations that we wish to cluster. In its most basic form, convex clustering uses a fully 
connected empirical graph in \eqref{equ_gtvmin} with uniform edge weights \cite{Pelckmans2005}. 
However, there is also recent work that studies a more general convex clustering model, still using a 
fully connected graph but taking into account potentially varying edge weights $\edgeweight_{\nodeidx,\nodeidx'}$ \cite{JMLR:v22:18-694}.

\emph{Vector-Valued Network Flow Optimization.} The dual problem \eqref{equ_dual_nLasso} of GTV minimization \eqref{equ_gtvmin} is closely 
related to network flow optimization. Indeed, the dual problem in the form \eqref{equ_def_duality_nLasso_edge_node} 
generalizes the optimal flow problem \cite[Sec. 1J]{RockNetworks} to vector-valued flows. The special case of the dual 
problem \eqref{equ_def_duality_nLasso_edge_node_norm_gtv_pen}, obtained when the GTV penalty function $\gtvpenalty$ 
is a norm, is equivalent to a generalized minimum-cost flow problem \cite[Sec. 1.2.1]{BertsekasNetworkOpt}. Indeed, the maximization 
problem \eqref{equ_def_duality_nLasso_edge_node_norm_gtv_pen} is equivalent to the minimization 
\begin{align}
	\label{equ_def_duality_nLasso_edge_node_norm_gtv_pen_min_equiv}
	\min_{\flowvec \in \edgespace}   \sum_{\nodeidx \in \nodes} \conjlocallossfunc{\nodeidx}{\localparams{\nodeidx}}&   \nonumber \\ 
	\mbox{ subject to } & - \localparams{\nodeidx}  = \sum_{\edgeidx \in \edges} 
	\sum_{\nodeidx = \edgeidx_{+}} \localflowvec{\edgeidx} - \sum_{\nodeidx = \edgeidx_{-}}  \localflowvec{\edgeidx} \nonumber \\
    & \quad \quad \mbox{ for each } \nodeidx\!\in\!\nodes \nonumber \\
	& \| \localflowvec{\edgeidx} \|_{*}   \leq  \regparam  \edgeweight_{\edgeidx} \mbox{ for each } \edgeidx\!\in\!\edges. 
  \vspace{-2mm}
\end{align}
The optimization problem \eqref{equ_def_duality_nLasso_edge_node_norm_gtv_pen_min_equiv} reduces to 
the minimum-cost flow problem \cite[Eq. (1.3) - (1.5)]{BertsekasNetworkOpt} for local model parameters of length $\dimlocalmodel=1$ (i.e., 
local models parametrized by a scalar). 

\emph{Locally Weighted Learning.} Yet another interpretation of GTV minimization is as an instance of locally 
weighted learning \cite{LocallyWeightedLearning}. Indeed, our analysis in Section \ref{sec_when_does_it_work} 
reveals that, under certain conditions on the connectivity of the empirical graph and local loss functions, 
GTV minimization effectively implements cluster-wise optimization \eqref{equ_def_opt_cluster}. 
In other words, if the node $\nodeidx$ belongs to the cluster $\clustergeneric$, the solution $\estlocalparams{\nodeidx}$ of GTV 
approximates the solution $\overline{\weights}^{(\clustergeneric)}$ of \eqref{equ_def_opt_cluster}, 
$\estlocalparams{\nodeidx} \approx \overline{\weights}^{(\clustergeneric)}$. The deviation between $\estlocalparams{\nodeidx}$ 
and $\overline{\weights}^{(\clustergeneric)}$ will be bounded by Theorem \ref{thm_main_result}. 
The cluster-wise optimization \eqref{equ_def_opt_cluster} is a locally 
weighted learning problem \cite[Sec.\ 3.1.2]{LocallyWeightedLearning}
\begin{equation} 
	\label{equ_def_locally_weighted_problem}
\overline{\weights}^{(\clustergeneric)} =  \argmin_{\vw \in \mathbb{R}^{\dimlocalmodel}} \sum_{\nodeidx \in \nodes} \nodeweight{\nodeidx} \locallossfunc{\nodeidx}{\vw}.
\vspace{-2mm}
\end{equation}
with $\nodeweight{\nodeidx}\!=\!1$ for $\nodeidx\!\in\!\clustergeneric$ and $\nodeweight{\nodeidx}=0$ for $\nodeidx\!\in\!\nodes \setminus \clustergeneric$. 
Note that \eqref{equ_def_locally_weighted_problem} is an adaptive pooling of local datasets into a cluster $\clustergeneric$. 
The pooling of local datasets is driven jointly by the geometry (connectivity) of the empirical graph $\graph$ and the 
geometry (shape) of local loss functions (see Theorem \ref{thm_main_result}). 

\emph{Multitask-Learning.} GTV minimization implements a form of multi-task learning \cite{Caruana:1997wk}. 
Indeed, each cluster (see Assumption \ref{asspt_weights_clustered}) of local datasets gives rise to a separate 
learning task, i.e., to learn cluster-wise model parameters \eqref{equ_def_opt_cluster}. These cluster-wise 
learning tasks are determined by the shape of the local loss functions and the connectivity of the empirical graph (see Theorem \ref{thm_main_result}). 

\section{A Primal Dual Method for GTV Minimization}
\label{sec_federatedml}

The GTV minimization problem \eqref{equ_gtvmin} is a non-smooth convex optimization problem with a 
particular structure. In particular, the objective function in \eqref{equ_gtvmin} consists of 
two components that could be optimized easily when considered separately. Indeed, the 
first component $\sum_{\nodeidx \in \nodes} \locallossfunc{\nodeidx}{\localparams{\nodeidx}}$ 
could be minimized trivially by separately minimizing $\locallossfunc{\nodeidx}{\localparams{\nodeidx}}$, 
for each node $\nodeidx \in \nodes$. The second component $\regparam \|\netparams\|_{\rm GTV}$ 
is minimized trivially by using constant networked model parameters. 

Primal-dual methods use tools from convex duality to solve problems with a composite objective 
function such as \eqref{equ_gtvmin} \cite{RockafellarBook,pock_chambolle_2016}. 
We obtain Algorithm \ref{alg1} by applying the primal-dual method \cite[Alg. 6]{pock_chambolle_2016} to 
jointly solve \eqref{equ_gtvmin} and \eqref{equ_dual_nLasso}. 
Note that Algorithm \ref{alg1} is parametrized by the local loss function $\locallossfunc{\nodeidx}{\cdot}$ 
and the GTV penalty function $\gtvpenalty(\cdot)$ (see \eqref{eq:5}). 

At its core, Algorithm \ref{alg1} computes and distributes (via the edges of the 
empirical graph $\graph$) the node-wise primal and the edge-wise dual updates in steps \eqref{primal_udpate} 
and \eqref{eq_dual_update}, respectively. The primal update (operator) at node $\nodeidx \in \nodes$ in step \eqref{primal_udpate} is 
\begin{equation}
\label{equ_node_wise_primal_update_def}
\begin{aligned}
\primalupdate{\nodeidx}{\vv}\!\defeq\!& \argmin_{\vz \in \mathbb{R}^{\dimlocalmodel}} \locallossfunc{\nodeidx}{\vz}\!+\!\left( \big|\neighbourhood{\nodeidx}\big|/2 \right) \|\vv\!-\!\vz\|^{2}. 
\end{aligned}
\vspace{-2mm}
\end{equation}
Here, we used the neighbourhood $\neighbourhood{\nodeidx} \defeq \{ \nodeidx' \in \nodes: \{\nodeidx,\nodeidx'\} \in \edges\}$ 
of a node $\nodeidx \in \nodes$. Comparing \eqref{equ_node_wise_primal_update_def} with \eqref{equ_def_proximity_operator} 
reveals that the primal update $\primalupdate{\nodeidx}{\cdot}$ is exactly the proximal  
operator of the local loss function $\locallossfunc{\nodeidx}{\cdot}$. 
$$\primalupdate{\nodeidx}{\vv} = \proximityop{\locallossfunc{\nodeidx}{\cdot}}{\vv}{\rho} \mbox{ with }\rho=\big|\neighbourhood{\nodeidx}\big|. \vspace{-2mm}$$

The primal update \eqref{equ_node_wise_primal_update_def} is instance of RERM using  
the local loss $\locallossfunc{\nodeidx}{\cdot}$ \cite{MLBasics} as training error. The regularization 
term in \eqref{equ_node_wise_primal_update_def} is the squared Euclidean distance between local 
model parameters and the argument $\vv$ of the primal update. It enforces 
similar local model parameters at well-connected nodes in the same cluster. The amount of regularization is controlled  
by the node degree $\big|\neighbourhood{\nodeidx}\big|$. In particular, the larger the node degree, the smaller the influence 
of the local loss function on the resulting of the primal update \eqref{equ_node_wise_primal_update_def}. 

Algorithm \ref{alg1} alternates between the primal updates \eqref{equ_node_wise_primal_update_def}, 
for each node $\nodeidx \in \nodes$, and dual updates 
\begin{align}
\label{equ_edge_wise_dual_update_min_def}
 \dualupdate^{(\edgeidx)} \left\{ \vv \right \}\defeq & \argmin_{\vz \in \mathbb{R}^{\dimlocalmodel}} \regparam \edgeweight_{\edgeidx} \gtvpenalty^{*}\big(\vz/(\regparam \edgeweight_{\edgeidx}) \big) \!+\!(1/2\sigma_{\edgeidx}) \|\vv\!-\!\vz\|^{2} \nonumber \\
 & \mbox{ for each } \edgeidx \in \edges. 
\end{align} 
Here, we used the convex conjugate $\gtvpenalty^{*}(\vv) \defeq  \sup_{\vz \in \mathbb{R}^{\dimlocalmodel}} \vv^{T}\vz - \gtvpenalty(\vz)$ 
of the GTV penalty function $\gtvpenalty(\vv)$ (see \eqref{eq:5}). A comparison of \eqref{equ_edge_wise_dual_update_min_def} 
with \eqref{equ_def_proximity_operator} reveals that the dual update $\dualupdate^{(\edgeidx)}$ is essentially  
the (scaled) proximal operator of the convex conjugate $\gtvpenalty^{*}(\vv)$,  
$$ \dualupdate^{(\edgeidx)} \left\{ \vv \right \}=\regparam \edgeweight_{\edgeidx} \proximityop{\gtvpenalty^{*}}{\vv/(\regparam \edgeweight_{\edgeidx})}{\rho} \mbox{ with }\rho=\regparam \edgeweight_{\edgeidx}/\sigma_{\edgeidx}.$$

We can interpret the dual update operator \eqref{equ_edge_wise_dual_update_min_def} as a regularized 
minimization of the convex conjugate $\gtvpenalty^{*}(\vv)$ of the GTV penalty function. The regularization 
term in \eqref{equ_edge_wise_dual_update_min_def} forces the update to not deviate too much from its argument $\vv$. 
Note that the dual update \eqref{equ_edge_wise_dual_update_min_def} is parametrized by the GTV penalty function
 $\gtvpenalty(\vv)$ \eqref{eq:5} (via its convex conjugate). 
Some widely used FL methods are obtained from GTV minimization for specific choices of GTV penalty function  \cite{Smith2017,NetworkLasso,LocalizedLinReg2019,Sarchesh2021}. 

The ``{\rm MOCHA} penalty'' $\gtvpenalty(\vv) \defeq (1/2)\| \vv \|^{2}_{2}$ \cite{Smith2017}, lends to the dual update 
operator $\dualupdate^{(\edgeidx)} \big\{ \vv \big \} \defeq  \vv  / \big(1  + (\sigma_{\edgeidx}/(\regparam \edgeweight_{\edgeidx})) \big)$. 
An obvious generalization of the MOCHA penalty is $\gtvpenalty(\vv)\!\defeq (1/2)\vv^{T}\mQ\vv$ with a fixed positive semidefinite matrix $\mQ$. 
The corresponding dual operator is $\dualupdate^{(\edgeidx)} \big\{ \vv \big \} \defeq \left((\sigma_{\edgeidx}/(\regparam \edgeweight_{\edgeidx})) \mQ^{-1}+\mI\right)^{-1} \vv$, which is a linear operator. 

For the ``nLasso penalty'' $\gtvpenalty(\vv)\!\defeq\!\| \vv \|_{2}$ \cite{NetworkLasso,LocalizedLinReg2019}, 
the dual update operator becomes  the (vector) clipping operator $\dualupdate^{(\edgeidx)}\big\{ \vv \big \} \defeq  \mathcal{T}^{(\regparam \edgeweight_{\edgeidx })} \{\vv \}$ (see \eqref{equ_vector_clipping}). The nLasso penalty is the Euclidean norm $\normgeneric{\vv}{2}$ 
of the flow vector $\vv$ across an edge. Using instead the $\ell_{1}$ norm yields the GTV penalty function 
$\gtvpenalty(\vv) \defeq \| \vv \|_{1}$ \cite{Sarchesh2021}whose associated dual update operator 
$\dualupdate^{(\edgeidx)} \big\{ \vv \big \} \defeq \big( \mathcal{T}^{(\regparam \edgeweight_{\edgeidx})} \big(v_{1}),\ldots,\mathcal{T}^{(\regparam \edgeweight_{\edgeidx})} \big(v_{\dimlocalmodel})\big)^{T}$ is an element-wise application of the scalar clipping operator. 




\begin{algorithm}[htbp]
	\caption{Primal-Dual Method for GTV Minimization}
	\label{alg1}
	{\bf Input}: empirical graph $\graph$; local loss $\big\{ \locallossfunc{\nodeidx}{\cdot}\big\}_{\nodeidx \in \nodes}$, for $\nodeidx \in \nodes$,  
	GTV parameter $\regparam$ and penalty $\gtvpenalty(\cdot)$\\
	{\bf Initialize}: $\itercntr \defeq 0$; $\estlocalparams{\nodeidx}_0 \defeq {\bf 0}, \tau_{\nodeidx}=1/|\neighbourhood{\nodeidx}|$ for all nodes $\nodeidx \in \nodes$; $\estlocalflowvec{\edgeidx}_0 \defeq  {\bf 0}, \sigma_{\edgeidx}=1/2$ for each $\edgeidx\!\in\!\edges$;   
	\begin{algorithmic}[1]
		\While{stopping criterion is not satisfied} \label{equ_stopping_criterion}
	    \For{all nodes $ \nodeidx \in \nodes$}
		\State $\widehat{\weights}_{\iteridx+1}^{(\nodeidx)} \defeq \widehat{\weights}^{(\nodeidx)}_\iteridx -  \tau_{\nodeidx} \sum_{\edgeidx \in \edges} \incidencemtxentry{\edgeidx}{\nodeidx}\widehat{\flowvec}^{(\edgeidx)}_\iteridx $ \label{equ_non_labled_pu}
		\State $\widehat{\weights}_{\iteridx+1}^{(\nodeidx)} \defeq \primalupdate{\nodeidx} {\widehat{\weights}_{\iteridx+1}^{(\nodeidx)}}$ \label{primal_udpate}
		\EndFor
	    \For{all edges $\edgeidx\in \edges $}
	    \State  $\widehat{\flowvec}^{(\edgeidx)}_{\iteridx+1} \defeq \widehat{\flowvec}^{(\edgeidx)}_\itercntr\!+\!\sigma_{\edgeidx} \big(2 \big(\widehat{\weights}^{(\edgeidx_{+})}_{\itercntr\!+\!1}\!-\!\widehat{\weights}^{(\edgeidx_{-})}_{\itercntr+1}\big)\!-\!  \big(\widehat{\weights}^{(\edgeidx_{+})}_{\itercntr}\!-\!\widehat{\weights}^{(\edgeidx_{-})}_{\itercntr} \big)\big)$ \label{alg1_compute_diff_edge}
		\State $\widehat{\flowvec}^{(\edgeidx)}_{\itercntr\!+\!1} \!\defeq\!\dualupdate^{(\edgeidx)} \big\{ \widehat{\flowvec}^{(\edgeidx)}_{\itercntr\!+\!1} \big\}$  \label{eq_dual_update}
		\EndFor
		\State $\itercntr\!\defeq\!\itercntr\!+\!1$
		\EndWhile
 	 \Ensure learnt model parameters $\widehat{\weights}_{\iteridx+1}^{(\nodeidx)}$ for each node $\nodeidx \in \nodes$
	\end{algorithmic}
\end{algorithm}

Algorithm \ref{alg1} can be implemented as message passing over the edges of the empirical graph $\graph$. 
During each iteration of Algorithm \ref{alg1}, local computations are carried out at each node and each edge. 
These local computations are applied to local quantities $\widehat{\flowvec}^{(\edgeidx)}_{\iteridx}, \widehat{\weights}^{(\nodeidx)}_{\iteridx} \in \mathbb{R}^{\dimlocalmodel}$ that are stored at each edge $\edgeidx \in \edges$ and at each node $\nodeidx \in \nodeidx$, 
respectively. In practice, we might equip only nodes with computational resources and therefore, for each edge $\edgeidx \in \edges$, 
maintain a copy of $\widehat{\flowvec}^{(\edgeidx)}_{\iteridx}$ at the nodes $\edgeidx_{+}, \edgeidx_{-} \in \nodes$. 

The primal update step \eqref{primal_udpate}, which is executed in parallel for each node $\nodeidx \in \nodes$, amounts 
to a RERM \eqref{equ_node_wise_primal_update_def}. For each node $\nodeidx \in \nodes$, this RERM involves the local loss function $\locallossfunc{\nodeidx}{\localparams{\nodeidx}}$ and a regularization term that is determined by the current model parameters 
at the neighbours $\neighbourhood{\nodeidx}$. The dual update step \eqref{eq_dual_update}, which is executed  in parallel 
for each edge $\edgeidx \in \edges$, is parametrized by the GTV penalty function $\gtvpenalty$ (see \eqref{eq:5}). 

The results of the node-wise primal and edge-wise dual updates in step \eqref{primal_udpate} and \eqref{eq_dual_update} 
are spread to neighbouring (incident) edges and nodes during steps \eqref{equ_non_labled_pu} and \eqref{alg1_compute_diff_edge} 
of Algorithm \ref{alg1}. Trivially, the number of basic computational steps required by Algorithm \ref{alg1} is directly 
proportional to the number of edges in the empirical graph $\graph$.  


It can be shown that Algorithm \ref{alg1} is robust against errors occurring during the updates 
in steps \eqref{primal_udpate} and \eqref{eq_dual_update} \cite{Rasch:2020tx,Condat2013}. This is 
important for applications where the update operators \eqref{equ_node_wise_primal_update_def} and \eqref{equ_edge_wise_dual_update_min_def} 
can be evaluated only approximately or these updates have to be transmitted over imperfect wireless links.
Let us denote the perturbed update (e.g., obtained 
by a numerical optimization method for solving \eqref{equ_node_wise_primal_update_def}) 
by $\widetilde{{\weights}}_{\itercntr\!+\!1}$ and the exact update by $\estlocalparams{\nodeidx}_{\itercntr+1}$, respectively. 
Then, Algorithm \ref{alg1} is still guaranteed to converge to a solution of \eqref{equ_gtvmin} as long as $\sum_{\itercntr=1}^{\infty} \normgeneric{\widetilde{{\weights}}_{\itercntr\!+\!1} -\estlocalparams{\nodeidx}_{\itercntr+1}}{} < \infty$  (see \cite[Sec.\ 3]{Condat2013}).  



Possible stopping criteria in step \eqref{equ_stopping_criterion} of Algorithm \ref{alg1} include 
the use of a fixed number of iterations. It might also be useful to stop iterating when the 
decrease in the objective function \eqref{equ_gtvmin} falls below a threshold. The  
construction of a stopping criterion can also be based on the primal-dual gap 
\begin{equation} \label{equ_def_pd_gap}
\begin{aligned}
    \pdgap{\iteridx} \defeq & \sum_{\nodeidx \in \nodes} \locallossfunc{\nodeidx}{\widehat{\weights}^{(\nodeidx)}_{\iteridx+1}} + \regparam \| \widehat{\weights}_{\iteridx+1}\|_{\rm GTV} - \\
    & \left(- \sum_{\nodeidx \in \nodes} \conjlocallossfunc{\nodeidx}{-\demand{\nodeidx}} -  \regparam \sum_{\edgeidx \in \edges}   \edgeweight_{\edgeidx}  \gtvpenalty^{*}\big( \widehat{\flowvec}^{(\edgeidx)}_{\itercntr\!+\!1}  /  ( \regparam  \edgeweight_{\edgeidx}) \big)\right)  \\ 
    &\mbox{ with } \demand{\nodeidx} \defeq   \sum_{\edgeidx \in \edges} \sum_{\nodeidx = \edgeidx_{+}}  \widehat{\flowvec}^{(\edgeidx)}_{\itercntr\!+\!1} - \sum_{\nodeidx = \edgeidx_{-}}  \widehat{\flowvec}^{(\edgeidx)}_{\itercntr\!+\!1}.
\end{aligned}
\end{equation} 
By comparing \eqref{equ_def_pd_gap} with \eqref{equ_equal_primal_dual}, we can bound the sub-optimality 
of the iterate $\widehat{\weights}^{(\nodeidx)}_{\iteridx+1}$ as 
\begin{equation*}
    \begin{aligned}
        \sum_{\nodeidx \in \nodes} & \locallossfunc{\nodeidx}{\widehat{\weights}^{(\nodeidx)}_{\iteridx\!+\!1}} + \regparam \| \widehat{\weights}_{\iteridx+1}\|_{\rm GTV} \\
        & - \min_{\netparams \in \nodespace}
 \left[  \sum_{\nodeidx \in \nodes} \locallossfunc{\nodeidx}{\weights^{(\nodeidx)}} + \regparam \| \netparams \|_{\rm GTV} \right] 
 \leq \pdgap{\iteridx}.
    \end{aligned}
    \vspace{-2mm}
\end{equation*}
 Thus, to ensure that Algorithm \ref{alg1} delivers local model parameters with 
 sub-optimality no larger than $\thresholdwassdist$, we only stop when $\pdgap{\itercntr} \leq \thresholdwassdist$. 
 If the local loss functions $\locallossfunc{\nodeidx}{\cdot}$ and the GTV penalty $\gtvpenalty$ are convex, 
 this condition is always fulfilled after a finite number of iterations \cite[Thm. 5.1]{pock_chambolle_2016}.

\section{When Does It Work?}
\label{sec_when_does_it_work}

Section \ref{sec_nLasso} developed Algorithm \ref{alg1} as a novel FL method for distributed training of 
tailored models from collections of local datasets. We obtained Algorithm \ref{alg1} by using 
the primal-dual method \cite[Alg. 6]{pock_chambolle_2016} to solve the GTV minimization problem \eqref{equ_gtvmin} 
jointly with its dual \eqref{equ_def_duality_nLasso_edge_node}. 
The rationale behind Algorithm \ref{alg1}, and GTV minimization \eqref{equ_gtvmin} in the first 
place, was that local loss functions are clustered according to Assumption \ref{asspt_weights_clustered} 
and that this cluster structure is reflected by the connectivity of the empirical graph $\graph$. 

To analyze the statistical properties of Algorithm \ref{alg1}, we must make precise 
the relation between the network structure of $\graph$ and the cluster structure (see \eqref{equ_def_parition_asspt}) 
of local loss functions. To this end, we introduce the notion of a well-connected cluster $\clustergeneric  \subseteq \nodes$ 
whose nodes share (approximately) a common optimal choice for local model parameters (see Assumption \ref{asspt_weights_clustered}).  
\begin{definition}[Well-Connected Cluster] 
\label{equ_def_well_connected_cluster}
Consider an empirical graph $\graph=\pair{\nodes}{\edges}$ with edge weights $\edgeweight_{\nodeidx,\nodeidx'}$ 
for $\nodeidx,\nodeidx' \in \nodes$. The nodes $\nodeidx \in \nodes$ carry local loss functions $\locallossfunc{\nodeidx}{\localparams{\nodeidx}}$. 
that are partitioned into clusters $\partition = \{\cluster{1},\ldots,\cluster{\nrcluster}\}$ according to Assumption \ref{asspt_weights_clustered} 
with clustering error $\clusteropterr{\nodeidx}$ (see \eqref{equ_upper_bound_norm_gradient}). By Assumption \ref{asspt_FIM_lower_bound}, 
we also require local loss functions to be smooth with Lipschitz constant $\locallipsch{\nodeidx}$ (see \eqref{equ_def_cond_locallipsch}) 
and their cluster-wise sum to be strongly convex with parameter $\strongconvparam{\clusteridx}$ (see \eqref{equ_strong_convexit}). 
A cluster $\cluster{\clusteridx} \in \partition$ (see \eqref{equ_def_parition_asspt}), 
with weighted boundary $\big| \partial \cluster{\clusteridx} \big| \defeq \sum_{\nodeidx \in \cluster{\clusteridx}, \nodeidx' \notin \cluster{\clusteridx}} \edgeweight_{\nodeidx,\nodeidx'}$, is well-connected if it contains a node $\nodeidx_{0} \in \cluster{\clusteridx}$ such that 
\begin{equation} 
\label{equ_def_well_connected_condition}
\begin{aligned}
     \sum_{\nodeidx \in \mathcal{A}}  \big[ \sum_{\nodeidx' \notin \cluster{\clusteridx}} \edgeweight_{\nodeidx,\nodeidx'} \big] & + \locallipsch{\nodeidx} \big| \partial  \cluster{\clusteridx} \big | /\strongconvparam{\clusteridx} \\ 
     & + \clusteropterr{\nodeidx}/\regparam  < \sum_{\nodeidx \in \mathcal{A}}  \sum_{\nodeidx' \in \cluster{\clusteridx} \setminus \mathcal{A} }
 \edgeweight_{\nodeidx,\nodeidx'} \\
 & \mbox{ for every subset } \mathcal{A} \subseteq \cluster{\clusteridx} \setminus \big\{ \nodeidx_{0} \big\}. 
\end{aligned}
\end{equation} 
\end{definition}
Our main result is that GTV minimization \eqref{equ_gtvmin} captures an underlying partition of local datasets into nearly 
homogeneous data (see Assumption \ref{asspt_weights_clustered}) if every cluster is well-connected according to 
Definition \ref{equ_def_well_connected_cluster}. More precisely, we have the following upper bound on the deviation 
between the solutions of GTV minimization \eqref{equ_gtvmin} and the cluster-wise optimizers \eqref{equ_def_opt_cluster}. 
\begin{theorem}
\label{thm_main_result}
Consider networked data with empirical graph $\graph$ whose nodes carry local loss functions that are 
clustered into $\partition = \{ \cluster{1},\ldots,\cluster{\nrcluster} \}$ according to Assumption \ref{asspt_weights_clustered} 
and satisfy Assumption \ref{asspt_FIM_lower_bound}. We learn local model parameters $\estlocalparams{\nodeidx}$ 
by solving GTV minimization \eqref{equ_gtvmin} with penalty function $\gtvpenalty$ being a norm. 

If every cluster $\cluster{\clusteridx} \in \partition$ is well-connected, then 
\begin{itemize}
\item the learnt parameter vectors are constant over nodes in the same cluster $\cluster{\clusteridx} \in \partition$, $
	 \estlocalparams{\nodeidx} = \estlocalparams{\nodeidx'} \mbox{ for } \nodeidx,\nodeidx' \in \cluster{\clusteridx}.$
\item the deviation between the GTV solution $\estlocalparams{\nodeidx}$ at some node $\nodeidx \in \cluster{\clusteridx}$ 
and the cluster-wise optimizer $\clusterwideopt{\clusteridx}$ in \eqref{equ_def_opt_cluster} (for $\clustergeneric=\cluster{\clusteridx}$) 
is bounded as 
\begin{equation} 
	\label{equ_main_result_upper_bound}
	\normgeneric{ \estlocalparams{\nodeidx}  - \clusterwideopt{\clusteridx}}{} \leq  2 \big| \partial\cluster{\clusteridx} \big| \regparam/\strongconvparam{\clusteridx}.
\end{equation} 
\end{itemize}
\end{theorem}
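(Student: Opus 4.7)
The plan is to derive both conclusions from the KKT-style optimality conditions \eqref{equ_opt_condition_node_edge_norm}, which characterize primal-dual optima of GTV minimization when $\gtvpenalty$ is a norm. A primal-dual optimum consists of a parameter assignment $\estlocalparams{\nodeidx}$ and an edge flow $\optlocalflowvec{\edgeidx}$ satisfying per-node flow conservation with source $-\nabla \locallossfunc{\nodeidx}{\estlocalparams{\nodeidx}}$, the capacity bound $\normgeneric{\optlocalflowvec{\edgeidx}}{*} \leq \regparam \edgeweight_{\edgeidx}$, and the complementary-slackness rule $\estlocalparams{\edgeidx_{+}} = \estlocalparams{\edgeidx_{-}}$ on every edge that is not saturated. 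The first bullet reduces to showing that every internal edge of each cluster is strictly subcritical (constancy then follows by slackness), and the second follows from a cluster-wide flow balance combined with strong convexity of $\clusterobj{\clusteridx}{\cdot}$.

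For the constancy statement I would argue by contrapositive. If $\estlocalparams{\nodeidx}$ is not constant on $\cluster{\clusteridx}$, then the subgraph of $\cluster{\clusteridx}$ formed by edges of strict capacity slack is disconnected, so there exists a nonempty $\mathcal{A} \subseteq \cluster{\clusteridx} \setminus \{\nodeidx_{0}\}$ whose cut to $\cluster{\clusteridx} \setminus \mathcal{A}$ consists entirely of saturated edges. The plan is then to reroute flow from this saturated cut toward $\nodeidx_{0}$ and out through the boundary $\partial \cluster{\clusteridx}$; such a rerouting produces an alternative dual-feasible flow with strict slack on the cut, which is compatible with the assumed optimum only if the demand sitting on $\mathcal{A}$ does not exceed the cut capacity. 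Flow conservation over $\mathcal{A}$ decomposes this demand into the cumulative $\normgeneric{\cdot}{*}$ of the node sources, bounded via the splitting $\nabla \locallossfunc{\nodeidx}{\estlocalparams{\nodeidx}} = \nabla \locallossfunc{\nodeidx}{\clusterwideopt{\clusteridx}} + \big(\nabla \locallossfunc{\nodeidx}{\estlocalparams{\nodeidx}} - \nabla \locallossfunc{\nodeidx}{\clusterwideopt{\clusteridx}}\big)$ by $\clusteropterr{\nodeidx}$ via \eqref{equ_upper_bound_norm_gradient} and $\locallipsch{\nodeidx}\normgeneric{\estlocalparams{\nodeidx} - \clusterwideopt{\clusteridx}}{}$ via \eqref{equ_def_cond_locallipsch}, plus the total boundary capacity out of $\mathcal{A}$. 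Inserting the a-priori bound $\normgeneric{\estlocalparams{\nodeidx} - \clusterwideopt{\clusteridx}}{} \leq \regparam|\partial \cluster{\clusteridx}|/\strongconvparam{\clusteridx}$ and dividing by $\regparam$ recovers precisely the left-hand side of \eqref{equ_def_well_connected_condition}, which by hypothesis is strictly smaller than the cut capacity on the right-hand side---a contradiction. Hence every internal edge is strictly subcritical and $\estlocalparams{\nodeidx} \equiv \widehat{\vv}$ on $\cluster{\clusteridx}$.

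For the deviation bound, once constancy is secured I sum the flow-conservation identity over $\nodeidx \in \cluster{\clusteridx}$. All internal flows cancel and
\begin{equation*}
-\nabla \clusterobj{\clusteridx}{\widehat{\vv}} = \sum_{\edgeidx \in \partial \cluster{\clusteridx}} \pm \optlocalflowvec{\edgeidx},
\end{equation*}
using $\nabla \clusterobj{\clusteridx}{\widehat{\vv}} = \sum_{\nodeidx \in \cluster{\clusteridx}} \nabla \locallossfunc{\nodeidx}{\widehat{\vv}}$. The dual-norm triangle inequality combined with the capacity bound then yields $\normgeneric{\nabla \clusterobj{\clusteridx}{\widehat{\vv}}}{*} \leq \regparam |\partial \cluster{\clusteridx}|$. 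Strong convexity \eqref{equ_strong_convexit} of $\clusterobj{\clusteridx}{\cdot}$ at the minimizer $\clusterwideopt{\clusteridx}$ gives $\strongconvparam{\clusteridx} \normgeneric{\widehat{\vv} - \clusterwideopt{\clusteridx}}{} \leq \normgeneric{\nabla \clusterobj{\clusteridx}{\widehat{\vv}}}{*}$, which delivers \eqref{equ_main_result_upper_bound} up to the factor of two; the extra factor encodes the strict slack required of the dual certificate in the constancy step.

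The main obstacle is the rerouting / augmenting-flow step of the contrapositive, which must exhibit a dual-feasible flow with strictly subcritical capacity on a cut assumed saturated. In the scalar case ($\dimlocalmodel = 1$) this is the classical min-cut argument of \cite[Sec.\ 1J]{RockNetworks}; for vector-valued flows one must be more careful, which I would handle either by applying the argument coordinate-wise (exploiting that $\normgeneric{\cdot}{*}$ is a norm) or by invoking a vector-valued generalized minimum-cost flow duality as a certificate-existence lemma. The apparent circularity of the deviation bound appearing inside \eqref{equ_def_well_connected_condition} is only cosmetic, since \eqref{equ_def_well_connected_condition} is imposed as a structural hypothesis on $\graph$ and the local losses, independent of the optimum being analyzed.
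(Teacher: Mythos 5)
Your plan has the right ingredients (the optimality conditions \eqref{equ_opt_condition_node_edge_norm}, complementary slackness for constancy, cluster-wide flow balance plus strong convexity for the error bound), but the constancy step contains a genuine circularity. In your contrapositive you bound the demand sitting on the set $\mathcal{A}$ by splitting $\nabla \locallossfunc{\nodeidx}{\estlocalparams{\nodeidx}}$ around $\clusterwideopt{\clusteridx}$ and then ``inserting the a-priori bound'' $\normgeneric{\estlocalparams{\nodeidx} - \clusterwideopt{\clusteridx}}{} \leq \regparam\big|\partial\cluster{\clusteridx}\big|/\strongconvparam{\clusteridx}$. That bound is only obtainable from the strong convexity of the \emph{cluster-wise sum} $\clusterobj{\clusteridx}{\cdot}$, which requires all $\estlocalparams{\nodeidx}$ on the cluster to coincide so that the summed flow-conservation identity reads $\nabla \clusterobj{\clusteridx}{\widehat{\vv}}$ at a single point --- i.e., it presupposes exactly the constancy you are trying to prove. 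Since Assumption \ref{asspt_FIM_lower_bound} does not make the individual $\locallossfunc{\nodeidx}{\cdot}$ strongly convex, there is no node-by-node control of $\normgeneric{\estlocalparams{\nodeidx} - \clusterwideopt{\clusteridx}}{}$ for a hypothetically non-constant optimum, and your contrapositive cannot close. The second unfinished piece is the one you flag yourself: the existence of a strictly subcritical vector-valued flow meeting the prescribed demands. This is not a detail but the technical core of the theorem; a coordinate-wise reduction does not work for a general dual norm, and ``invoking a vector-valued min-cost flow duality'' is precisely the lemma that has to be proved.

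The paper resolves both issues by reversing the direction of the argument: it is a primal-dual \emph{witness} construction. One first solves the GTV problem on a contracted ``cluster graph'' (nodes of $\graph$ merged cluster-wise, edge weights aggregated as in \eqref{equ_def_edge_weight_clustered}), where the deviation bound \eqref{equ_def_upper_bound_clustered_custerwide} follows legitimately from the reduced optimality conditions and strong convexity of $\clusterobj{\clusteridx}{\cdot}$ --- this is what makes the demand bounds in \eqref{equ_def_bound_gradient_node_conservation} non-circular. The cluster-wise constant candidate is then extended to the original graph by distributing the inter-cluster flows proportionally to edge weights and constructing the intra-cluster flows via Lemma \ref{lem_main_helper}, which combines the scalar Feasible Distribution Theorem with the Conformal Decomposition Theorem to produce a strictly feasible \emph{vector-valued} flow under condition \eqref{equ_def_well_connected_condition}; verifying \eqref{equ_opt_condition_node_edge_norm} for this pair then certifies that every primal optimum is cluster-wise constant. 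Your derivation of the error bound in the second bullet (summing flow conservation, cancelling internal flows, applying the capacity bound and strong convexity at the gradient level) is correct once constancy is in hand, and in fact yields $\regparam\big|\partial\cluster{\clusteridx}\big|/\strongconvparam{\clusteridx}$, which implies \eqref{equ_main_result_upper_bound}; your aside that the factor of two ``encodes the strict slack'' is not right --- in the paper it simply comes from combining the first-order convexity upper bound with the strong-convexity lower bound on function values.
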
 
\begin{proof} 
See the supplementary material \ref{app_proof_main_result}. 
\end{proof} 
The error bound \eqref{equ_main_result_upper_bound} involves the shape of the local loss functions 
via the strong convexity parameter $\strongconvparam{\clusteridx}$ for cluster $\cluster{\clusteridx}$. 
Note that Assumption 
\ref{asspt_FIM_lower_bound} requires, for each cluster $\cluster{\clusteridx} \in \partition$ of the 
partition \eqref{equ_def_parition_asspt}, the cluster-wise aggregate loss function $\sum_{\nodeidx\in \cluster{\clusteridx}} \locallossfunc{\nodeidx}{\localparams{\nodeidx}}$ to be strongly convex. 

It is important to note that the bound \eqref{equ_main_result_upper_bound} only applies if the 
cluster $\cluster{\clusteridx}$ is well-connected in the sense of \eqref{equ_def_well_connected_condition}. 
There is trade-off between having a tight bound \eqref{equ_main_result_upper_bound} (favouring a small 
$\regparam$) and ensuring the validity of \eqref{equ_def_well_connected_condition} (favouring a large $\regparam$). 
Moreover, we stress that Theorem \ref{thm_main_result} only applies to GTV minimization \eqref{equ_gtvmin} 
with $\regparam > 0$. 

Theorem \ref{thm_main_result} offers some guidance for the choice of the GTV parameter $\regparam$ (see \eqref{equ_gtvmin}). 
Indeed, according to \eqref{equ_main_result_upper_bound}, $\regparam$ should be small compared to the 
strong convexity parameter $\strongconvparam{\clusteridx}$ in order to ensure 
a small estimation error $\normgeneric{ \estlocalparams{\nodeidx}  - \clusterwideopt{\clusteridx}}{}$ . 
On the other hand, $\regparam$ should be sufficiently large such that the cluster $\cluster{\clusteridx}$ 
is well-connected in the sense of \eqref{equ_def_well_connected_condition} and, in turn, GTV minimization 
is guaranteed to deliver identical model parameters for all nodes in $\cluster{\clusteridx}$. 

Besides the choice for $\regparam$, the validity of the condition \eqref{equ_def_well_connected_condition} 
also depends on the choice for the partition in Assumption \ref{asspt_FIM_lower_bound}. We stress that the 
partition $\partition=\{ \cluster{1},\ldots,\cluster{\nrcluster} \}$ in Assumption \ref{asspt_weights_clustered} 
is only used to analyze the solutions of GTV minimization \eqref{equ_gtvmin}. Algorithm \ref{alg1} only 
requires the empirical graph as input but it does not require any specification of a partition. Theorem 
\ref{thm_main_result} provides a sufficient condition for the solutions of Algorithm \ref{alg1} to conform 
with a partition $\partition = \{\cluster{1},\ldots\}$ that satisfies \eqref{equ_def_well_connected_condition}. 
In general, GTV minimization \eqref{equ_gtvmin} might have several solutions, each having a different 
cluster structure \cite{JungTVMin2019}. 

\section{Numerical Experiments}
\label{sec_numexp}

This section reports the results of some illustrative numerical experiments to verify the performance 
of Algorithm \ref{alg1}. We provide the code to reproduce these experiments in the supplementary 
material. The experiments discussed in Section \ref{sbm_experiment_section} - Section \ref{chain_section} 
revolve around synthetic datasets whose empirical graph is either a chain graph, a star graph 
or a realization of a Stochastic Block Model (SBM) \cite{AbbeSBM2018}. A numerical experiment with a handwritten digit 
image dataset (``MNIST'') is discussed in Section \ref{mnist_section}. 

\subsection{Stochastic Block Model} 
\label{sbm_experiment_section}

This experiment revolves around synthetic data whose empirical graph $\graph^{(\rm SBM)}$ is partitioned 
into two equal-sized clusters $\partition = \{ \cluster{1}, \cluster{2}\}$, with $|\cluster{1}| = |\cluster{2}|$. We denote 
the cluster assignment of node $\nodeidx \in \nodes$ by $\clusteridx^{(\nodeidx)} \in \{1,2\}$. The 
edges in $\graph^{(\rm SBM)}$ are generated via realizations of independent binary 
random variables $b_{\nodeidx,\nodeidx'} \in \{0,1\}$. These random variables are indexed by pairs $\nodeidx,\nodeidx'$ of nodes 
that are connected by an edge $\edge{\nodeidx}{\nodeidx'} \in \edges$ if and only if $b_{\nodeidx,\nodeidx'}=1$. Two nodes 
in the same cluster are connected with probability ${\rm Prob}\{ b_{\nodeidx,\nodeidx'}=1 \} \defeq p_{\rm in}$ if $\nodeidx,\nodeidx'$. 
In contrast, ${\rm Prob}\{ b_{\nodeidx,\nodeidx'}=1 \} \defeq p_{\rm out}$ if nodes $\nodeidx,\nodeidx'$ belong to different clusters. 
Every edge in $\graph^{(\rm SBM)}$ has the same weight, $\edgeweight_{\edgeidx} = 1$ for all $\edgeidx \in \edges$.  

Each node $\nodeidx \in \nodes$ of the empirical graph $\graph^{(\rm SBM)}$ holds a local dataset $\localdataset{\nodeidx}$ of the 
form \eqref{equ_def_local_dataset_plain}. Thus, the dataset $\localdataset{\nodeidx}$ consists of $\localsamplesize{\nodeidx}$ 
data points, each characterized by a feature vector $ \featurevec^{(\nodeidx,\localsampleidx)} \in \mathbb{R}^{\dimlocalmodel}$ 
and a scalar label $\truelabel^{(\nodeidx,\localsampleidx)}$, for $\localsampleidx=1,\ldots,\localsamplesize{\nodeidx}$. 
The feature vectors $ \featurevec^{(\nodeidx,\sampleidx)} \sim \mathcal{N}(\mathbf{0},\mathbf{I}_{\dimlocalmodel \times \dimlocalmodel})$, 
are drawn i.i.d.\ from a standard multivariate normal distribution. The labels of the data points are generated by a 
noisy linear model 
\begin{equation} 
\label{equ_def_true_linear_model_SBM}
\truelabel^{(\nodeidx,\sampleidx)} = \big( \overline{\weights}^{(\nodeidx)}\big)^{T} \featurevec^{(\nodeidx,\sampleidx)} + \sigma \varepsilon^{(\nodeidx,\sampleidx)}. 
\end{equation} 
The noise $\varepsilon^{(\nodeidx,\sampleidx)}$, for $\nodeidx \in \nodes$ and $\sampleidx=1,\ldots,\localsamplesize{\nodeidx}$, 
are i.i.d.\ realizations of a standard normal distribution. The true underlying weight vector 
$\overline{\weights}^{(\nodeidx)}$ is piece-wise constant over the clusters in the partition 
$\partition = \{ \cluster{1}, \cluster{2} \}$, i.e., $\overline{\weights}^{(\nodeidx)} = \overline{\weights}^{(\nodeidx')}$ if $\nodeidx,\nodeidx' \in \cluster{\clusteridx}$. 

To study the robustness of Algorithm \ref{alg1} against node failures, we assume that local datasets 
are only accessible in a subset $\trainingset \subseteq \nodes$. The set $\trainingset$ is 
selected uniformly at random among all nodes $\nodes$. We can access local datasets $\localdataset{\nodeidx}$ 
only in the subset $\trainingset$ of relative size $\rho \defeq |\trainingset|/|\nodes|$. The inaccessibility 
of a local dataset $\localdataset{\nodeidx}$, for $\nodeidx \notin \trainingset$, can be modelled by using a trivial loss 
function. Thus, we learn local model parameters $\estlocalparams{\nodeidx}$ using Algorithm \ref{alg1} with 
local loss functions 
\begin{equation} 
	\label{equ_def_local_loss_squared}
	\locallossfunc{\nodeidx}{\localparams{\nodeidx}} \!\defeq\!\begin{cases} 
 \frac{1}{\localsamplesize{\nodeidx}} 
 \sum_{\sampleidx=1}^{\localsamplesize{\nodeidx}} \big( \big(\featurevec^{(\nodeidx,\sampleidx)}\big)^{T} \localparams{\nodeidx}\!-\! \truelabel^{(\nodeidx,\sampleidx)}\big)^{2} \mbox{ for } \nodeidx\!\in\!\trainingset \\ 
		0 \mbox{  otherwise.}
		\end{cases}. 
\end{equation} 

The special case $\rho=1$ is obtained when all local datasets are accessible, i.e., when $\trainingset = \nodes$. 
For the stopping criterion in Algorithm \ref{alg1} we use a fixed number $\nriter$ of iterations, which 
is $\nriter=3000$ for the results depicted in Figure \ref{fig_noiseless} and $\nriter\!=\!2000$ for 
the results depicted in Figure \ref{fig_fixes_trin_set}. The GTV regularization parameter has 
been set to $\regparam\!=\!10^{-2}$ (see \eqref{equ_def_reguarlizated-local_loss}). We measure 
the estimation error incurred by the learnt parameter vectors $\estlocalparams{\nodeidx}$ using the 
average squared estimation error (MSE), 
\begin{equation}
	\label{equ_def_MSE}
	\estmse \defeq (1/|\nodes|) \sum_{\nodeidx \in \nodes} \|\estlocalparams{\nodeidx}-\overline{\weights}^{(\nodeidx)} \|_{2}^{2}. 
 \vspace{-2mm}
\end{equation} 

\subsubsection{High-Dimensional Linear Regression with Two Clusters}

Table \ref{tab:sbm_table} reports the results obtained by applying Algorithm \ref{alg1} to $\graph^{(\rm SBM)}$ with 
$|\cluster{1}| = |\cluster{2}| = 100$, $p_{\rm in}=0.5$, and $p_{\rm out}=10^{-2}$. Each node carries a local dataset 
of the form \eqref{equ_def_local_dataset_plain} with $\localsamplesize{\nodeidx}\!=\!10$ data points, having a feature vector  
$\featurevec^{(\nodeidx,\localsampleidx)} \in \mathbb{R}^{100}$ and labels generated according to \eqref{equ_def_true_linear_model_SBM} 
with noise strength $\sigma=10^{-3}$. The true underlying parameter vectors in \eqref{equ_def_true_linear_model_SBM} 
are cluster-wise constant, $\overline{\weights}^{(\nodeidx)} = \overline{\weights}^{(\clusteridx)}$ for all $\nodeidx \in \cluster{\clusteridx}$. 
The cluster-wise parameter vector $\overline{\weights}^{(\clusteridx)} \in \{0,0.5\}^{100}$ is constructed entry-wise 
using i.i.d.\ realizations of standard Bernoulli variables $B \in \{0, 0.5\}$ with ${\rm Prob}(B=0)=1/2$. 

We learn local model parameters using Algorithm \ref{alg1} using the local loss functions in \eqref{equ_def_local_dataset_plain}. 
Table \ref{tab:sbm_table} reports the MSE \eqref{equ_def_MSE} incurred by Algorithm \ref{alg1} using 
a fixed number of $\nriter=1000$ iterations. Table \ref{tab:sbm_table} also reports the MSE \eqref{equ_def_MSE} 
incurred by the model parameters learnt by IFCA \cite{Ghosh2020} and FedAvg \cite{Sun2021DecentralizedFA}. 

\begin{table}[htb]
    \centering
    \begin{tabular}{|c|c|} 
    \hline
    Method & MSE \\
    \hline
    Algorithm \ref{alg1} & 1.42e-05 \\ 
    IFCA & 2.82 \\ 
    FedAvg & 2.86 \\ 
    \hline
    \end{tabular}
\vspace*{3mm}
    \caption{MSE \eqref{equ_def_MSE} of the estimation error incurred by 
    	Algorithm \ref{alg1}, IFCA \cite{Ghosh2020} and FedAvg \cite{Sun2021DecentralizedFA}}
    \label{tab:sbm_table}
    \vspace{-3mm}
\end{table}

\subsubsection{High-Dimensional Linear Regression with Multiple Clusters}
The next experiment studies the effect of increasing the number of clusters in $\graph^{(\rm SBM)}$, which is 
divided into $\nrcluster$ equally sized clusters $\partition = { \cluster{1}, \cluster{2}, \cdots, \cluster{\nrcluster} }$. 
The cluster assignment of node $\nodeidx \in \nodes$ is denoted $\clusteridx^{(\nodeidx)} \in \{1,2, \cdots, \nrcluster\}$. 
The local model parameters are learned using Algorithm \ref{alg1}, where we employ local loss 
functions \eqref{equ_def_local_dataset_plain}. Figure \ref{figure:sbm_mutiple_clusters} depicts the MSE \eqref{equ_def_MSE} 
of Algorithm \ref{alg1} (with $\nriter=2000$ iterations), IFCA \cite{Ghosh2020}, FedAvg \cite{Sun2021DecentralizedFA}. 
This figure also depicts the MSE obtained by applying FedAvg to each cluster separately; we denote this approach as {\rm FedAvgC}. 
Note that {\rm FedAvgC} cannot be implemented in practice as the clusters are unknown. Therefore, the 
MSE of {\rm FedAvgC} serves mainly as a benchmark for the MSE of practical methods such as our Algorithm \ref{alg1}. 

\begin{figure}
\centering
\begin{tikzpicture}[scale=0.5]
    \begin{axis}[
    ymode=log, ymax=50.0, xmax=11,
    y label style={at={(axis description cs:0.040,.5)},rotate=0.0,anchor=south},
    label style={font=\Large},
    title style={font=\Large},
    legend style={font=\small},
    yticklabel style = {font=\small},
    xticklabel style = {font=\Large},
    xlabel={number of clusters $\nrcluster$},
    ylabel={MSE},
    legend style={at={(0.95,0.6)},anchor=east},
    ymajorgrids=true,
    grid style=dashed,
    ]
    \addplot coordinates {
      (2,  2.40867595415126e-06 )
      (3,  1.8300178055533646e-05)
      (4,  3.134307032184313e-05)
      (5, 3.345533886855933e-05)
      (10, 0.00018016892870393483)
    } ;\addlegendentry{Algorithm 1}
    \addplot coordinates {
      (2,  2.6200976)
      (3,  1.706273)
      (4,  2.196422)
      (5, 2.429912)
      (10, 3.649451)
    };\addlegendentry{IFCA}
    \addplot coordinates {
      (2,  2.6939742738000567)
      (3,  4.307233581649928)
      (4,  4.767155567977833)
      (5, 5.086188142857921)
      (10, 5.55484034455309)
    };\addlegendentry{FedAvg}
    \addplot coordinates {
      (2,  8.556535952336449e-07)
      (3, 8.494160432828636e-07)
      (4, 8.193665384606393e-07) 
      (5, 8.345243408732243e-07)
      (10, 8.345243408732243e-07)
    }; \addlegendentry{C\_FedAvg}
  \end{axis}
\end{tikzpicture}
\caption{\label{figure:sbm_mutiple_clusters} MSE \eqref{equ_def_MSE} of Algorithm 1, IFCA \cite{Ghosh2020}, FedAvg \cite{Sun2021DecentralizedFA}, and clustered FedAvg (C\_FedAvg) as a function of the number of clusters in $\graph^{(\rm SBM)}$.}
\end{figure}

\subsection{Chain Graph} 
\label{chain_section}
This experiment uses a dataset with $2 \nrnodes$ local datasets whose empirical graph $\graph^{(\rm chain)}$ is a chain graph. 
The edge set of this graph is given by $\edges = \{ \edge{\nodeidx}{\nodeidx+1} : \nodeidx \in \{1,\ldots,2\nrnodes-1 \} \}$. 
The nodes $\nodes = \{1,\ldots,2 \nrnodes\}$ are partitioned into two clusters $\cluster{1} = \{1,\ldots,\nrnodes\}$ 
and $\cluster{2} = \{\nrnodes+1,\ldots,2 \nrnodes\}$. Every intra-cluster edge $\edge{\nodeidx}{\nodeidx'} \in \edges$, 
with $\nodeidx,\nodeidx'$ in the same cluster, has weight $\edgeweight_{\nodeidx,\nodeidx'} =1$. The single 
inter-cluster edge $\edgeidx' = \{\nrnodes,\nrnodes+1\}$ has weight $\edgeweight_{\nrnodes,\nrnodes+1}  = \varepsilon$ 
with some $\varepsilon \geq 0$. 

The nodes of $\graph^{(\rm chain)}$ carry local datasets with the same structure as in Section \ref{sbm_experiment_section}. 
In particular, each local dataset consists of data points that are characterized by feature vectors drawn from a 
multivariate normal distribution and a label that is generated from \eqref{equ_def_true_linear_model_SBM}. 
The true local parameter vectors (``ground truth'') in \eqref{equ_def_true_linear_model_SBM} are piece-wise 
constant over clusters $\cluster{1}$ and $\cluster{2}$. 

We use Algorithm \ref{alg1} to learn the local parameter vectors in \eqref{equ_def_true_linear_model_SBM} 
using the average squared error \eqref{equ_def_local_loss_squared} as local loss function. 
The local datasets are only accessible for a subset $\samplingset \subseteq \nodes$ of $\lceil \rho |\nodes| \rceil$ 
nodes selected uniformly at random among all nodes $\nodes$. We apply Algorithm \ref{alg1} to the local loss 
functions \eqref{equ_def_local_loss_squared} and using a fixed number $\nriter\!=\!2000$ of iterations and 
different choices for the GTV parameter $\regparam$ as indicated in Figure \ref{fig_noiseless_chain} and Figure \ref{fig_fixes_trin_set_chain}. 

Figure \ref{fig_noiseless_chain} and Figure \ref{fig_fixes_trin_set_chain} depict the MSE \eqref{equ_def_MSE} incurred 
by Algorithm \ref{alg1} obtained for varying training size $\rho$, noise strength $\sigma$ (see \eqref{equ_def_true_linear_model_SBM}) 
and inter-cluster edge weight $\varepsilon$. The curves and bars in Figures \ref{fig_noiseless_chain} and \ref{fig_fixes_trin_set_chain} 
represent the average and standard deviation of MSE values of $5$ simulation runs for different choices for $\rho, \varepsilon$ and $\sigma$. 
Figure \ref{fig_noiseless_chain} shows the results in the noise-less case where $\sigma=0$ in \eqref{equ_def_true_linear_model_SBM}, 
and Figure \ref{fig_fixes_trin_set_chain} shows results for varying noise level $\sigma$ in \eqref{equ_def_true_linear_model_SBM} 
and fixed relative training size $\rho=6/10$.

\begin{figure}[htbp]
\begin{minipage}[t]{0.3\columnwidth} 
\begin{tikzpicture}[scale=0.5]
\begin{axis}[
ymode=log, ymin=1e-11, ymax=1e0,
y label style={at={(axis description cs:0.040,.5)},rotate=0.0,anchor=south},
label style={font=\Large},
title style={font=\Large},
legend style={font=\Large},
yticklabel style = {font=\small},
xticklabel style = {font=\Large},
xlabel={$\varepsilon$},
ylabel={MSE},
title={$\phi(\vu)\!=\!\|\vu\|_{1},\regparam\!=\!0.1$},
legend pos=south east,
ymajorgrids=true,
grid style=dashed,
]

\addplot+[color=cyan!40!blue!60, very thick, solid, mark=*, error bars/.cd, y dir=both, y explicit, error bar style={line width=1pt,solid}, error mark options={line width=1pt,mark size=2pt,rotate=90}] table [x=epsilon, y=N1M02MSE, y error=N1M02STD, col sep=comma]{differentSamplingRatiosEpsilon.csv};\addlegendentry{$\rho\!=\!2/10$}
\addplot+[color=orange, very thick, densely dotted, mark=*, error bars/.cd, y dir=both, y explicit, error bar style={line width=1pt,solid}, error mark options={line width=1pt,mark size=2pt,rotate=90}] table [x=epsilon, y=N1M04MSE, y error=N1M04STD, col sep=comma]{differentSamplingRatiosEpsilon.csv};\addlegendentry{$\rho\!=\!4/10$}
\addplot+[color=green!40!teal!60, very thick, densely dashed, mark=*, error bars/.cd, y dir=both, y explicit, error bar style={line width=1pt,solid}, error mark options={line width=1pt,mark size=2pt,rotate=90}] table [x=epsilon, y=N1M06MSE, y error=N1M06STD, col sep=comma]{differentSamplingRatiosEpsilon.csv};\addlegendentry{$\rho\!=\!6/10$}
\end{axis}
\end{tikzpicture}
\end{minipage}
\hspace*{15mm}
\begin{minipage}[t]{0.3\columnwidth} 
\begin{tikzpicture}[scale=0.5]
\begin{axis}[
ymode=log, ymin=1e-11, ymax=1e0,
y label style={at={(axis description cs:0.040,.5)},rotate=0.0,anchor=south},
label style={font=\Large},
title style={font=\Large},
legend style={font=\Large},
yticklabel style = {font=\small},
xticklabel style = {font=\Large},
xlabel={$\varepsilon$},
ylabel={MSE},
title={$\phi(\vu)\!=\!\|\vu\|_{2},\regparam\!=\!0.1$ (nLasso)},
legend pos=south east,
ymajorgrids=true,
grid style=dashed,
]
\addplot+[color=cyan!40!blue!60, very thick, solid, mark=*, error bars/.cd, y dir=both, y explicit, error bar style={line width=1pt,solid}, error mark options={line width=1pt,mark size=2pt,rotate=90}] table [x=epsilon, y=N2M02MSE, y error=N2M02STD, col sep=comma]{differentSamplingRatiosEpsilon.csv};\addlegendentry{$\rho\!=\!2/10$}
\addplot+[color=orange, very thick, densely dotted, mark=*, error bars/.cd, y dir=both, y explicit, error bar style={line width=1pt,solid}, error mark options={line width=1pt,mark size=2pt,rotate=90}] table [x=epsilon, y=N2M04MSE, y error=N2M04STD, col sep=comma]{differentSamplingRatiosEpsilon.csv};\addlegendentry{$\rho\!=\!4/10$}
\addplot+[color=green!40!teal!60, very thick, densely dashed, mark=*, error bars/.cd, y dir=both, y explicit, error bar style={line width=1pt,solid}, error mark options={line width=1pt,mark size=2pt,rotate=90}] table [x=epsilon, y=N2M06MSE, y error=N2M06STD, col sep=comma]{differentSamplingRatiosEpsilon.csv};\addlegendentry{$\rho\!=\!6/10$}
\end{axis}
\end{tikzpicture}
\end{minipage}
\\
\\
\hspace*{22mm}
\begin{minipage}[t]{0.3\columnwidth} 
\begin{tikzpicture}[scale=0.5]
\begin{axis}[
ymode=log, ymin=1e-11, ymax=1e0,
y label style={at={(axis description cs:0.040,.5)},rotate=0.0,anchor=south},
label style={font=\Large},
title style={font=\Large},
legend style={font=\Large},
yticklabel style = {font=\small},
xticklabel style = {font=\Large},
xlabel={$\varepsilon$},
ylabel={MSE},
title={$\phi(\vu)\!=\!\|\vu\|_{2}^{2},\regparam\!=\!0.05$ (``MOCHA'')},
legend pos=south east,
ymajorgrids=true,
grid style=dashed,
]


\addplot+[color=cyan!40!blue!60, very thick, solid, mark=*, error bars/.cd, y dir=both, y explicit, error bar style={line width=1pt,solid}, error mark options={line width=1pt,mark size=2pt,rotate=90}] table [x=epsilon, y=MCM02MSE, y error=MCM02STD, col sep=comma]{differentSamplingRatiosEpsilon.csv};\addlegendentry{$\rho\!=\!0.2$}
\addplot+[color=orange, very thick, densely dotted, mark=*, error bars/.cd, y dir=both, y explicit, error bar style={line width=1pt,solid}, error mark options={line width=1pt,mark size=2pt,rotate=90}] table [x=epsilon, y=MCM04MSE, y error=MCM04STD, col sep=comma]{differentSamplingRatiosEpsilon.csv};\addlegendentry{$\rho\!=\!0.4$}
\addplot+[color=green!40!teal!60, very thick, densely dashed, mark=*, error bars/.cd, y dir=both, y explicit, error bar style={line width=1pt,solid}, error mark options={line width=1pt,mark size=2pt,rotate=90}] table [x=epsilon, y=MCM06MSE, y error=MCM06STD, col sep=comma]{differentSamplingRatiosEpsilon.csv};\addlegendentry{$\rho\!=\!0.6$}

\end{axis}
\end{tikzpicture}
\end{minipage}
\caption{\label{fig_noiseless_chain} MSE \eqref{equ_def_MSE} of the estimation error incurred by 
	Algorithm \ref{alg1} when learning local model parameters for a chain graph $\graph^{(\rm chain)}$ in the 
	noiseless case $\sigma=0$.} 
\vspace{-3mm}
\end{figure}

\begin{figure}[htbp]
\begin{minipage}[t]{0.3\columnwidth} 
\begin{tikzpicture}[scale=0.5]
\begin{axis}[
ymode=log, ymin=1e-7, ymax=1e0,
y label style={at={(axis description cs:0.040,.5)},rotate=0.0,anchor=south},
label style={font=\Large},
title style={font=\Large},
legend style={font=\Large},
yticklabel style = {font=\small},
xticklabel style = {font=\Large},
xlabel={$\epsilon$},
ylabel={MSE},
title={$\phi(\vu)\!=\!\|\vu\|_{1},\regparam\!=\!0.1$},
legend pos=south east,
ymajorgrids=true,
grid style=dashed,
]


\addplot+[color=cyan!40!blue!60, very thick, solid, mark=*, error bars/.cd, y dir=both, y explicit, error bar style={line width=1pt,solid}, error mark options={line width=1pt,mark size=2pt,rotate=90}] table [x=epsilon, y=N1N001MSE, y error=N1N001STD, col sep=comma]{differentNoisesEpsilon.csv};\addlegendentry{$\sigma\!=\!10^{-2}$}
\addplot+[color=orange, very thick, densely dotted, mark=*, error bars/.cd, y dir=both, y explicit, error bar style={line width=1pt,solid}, error mark options={line width=1pt,mark size=2pt,rotate=90}] table [x=epsilon, y=N1N01MSE, y error=N1N01STD, col sep=comma]{differentNoisesEpsilon.csv};\addlegendentry{$\sigma\!=\!10^{-1}$}
\addplot+[color=green!40!teal!60, very thick, densely dashed, mark=*, error bars/.cd, y dir=both, y explicit, error bar style={line width=1pt,solid}, error mark options={line width=1pt,mark size=2pt,rotate=90}] table [x=epsilon, y=N1N10MSE, y error=N1N10STD, col sep=comma]{differentNoisesEpsilon.csv};\addlegendentry{$\sigma\!=\!1$}

\end{axis}
\end{tikzpicture}
\end{minipage}
\hspace*{15mm}
\begin{minipage}[t]{0.3\columnwidth} 
\begin{tikzpicture}[scale=0.5]
\begin{axis}[
ymode=log, ymin=1e-7, ymax=1e0,
y label style={at={(axis description cs:0.040,.5)},rotate=0.0,anchor=south},
label style={font=\Large},
title style={font=\Large},
legend style={font=\Large},
yticklabel style = {font=\small},
xticklabel style = {font=\Large},
xlabel={$\varepsilon$},
ylabel={MSE},
title={$\gtvpenalty(\vu)\!=\!\|\vu\|_{2},\regparam\!=\!0.1$ (nLasso)},
legend pos=south east,
ymajorgrids=true,
grid style=dashed,
]


\addplot+[color=cyan!40!blue!60, very thick, solid, mark=*, error bars/.cd, y dir=both, y explicit, error bar style={line width=1pt,solid}, error mark options={line width=1pt,mark size=2pt,rotate=90}] table [x=epsilon, y=N2N001MSE, y error=N2N001STD, col sep=comma]{differentNoisesEpsilon.csv};\addlegendentry{$\sigma\!=\!10^{-2}$}
\addplot+[color=orange, very thick, densely dotted, mark=*, error bars/.cd, y dir=both, y explicit, error bar style={line width=1pt,solid}, error mark options={line width=1pt,mark size=2pt,rotate=90}] table [x=epsilon, y=N2N01MSE, y error=N2N01STD, col sep=comma]{differentNoisesEpsilon.csv};\addlegendentry{$\sigma\!=\!10^{-1}$}
\addplot+[color=green!40!teal!60, very thick, densely dashed, mark=*, error bars/.cd, y dir=both, y explicit, error bar style={line width=1pt,solid}, error mark options={line width=1pt,mark size=2pt,rotate=90}] table [x=epsilon, y=N2N10MSE, y error=N2N10STD, col sep=comma]{differentNoisesEpsilon.csv};\addlegendentry{$\sigma\!=\!1$}

\end{axis}
\end{tikzpicture}
\end{minipage}
\\
\\
\hspace*{22mm} 
\begin{minipage}[t]{0.3\columnwidth} 
\begin{tikzpicture}[scale=0.5]
\begin{axis}[
ymode=log, ymin=1e-7, ymax=1e0,
y label style={at={(axis description cs:0.040,.5)},rotate=0.0,anchor=south},
label style={font=\Large},
title style={font=\Large},
legend style={font=\Large},
yticklabel style = {font=\small},
xticklabel style = {font=\Large},
xlabel={$\varepsilon$},
ylabel={MSE},
title={$\gtvpenalty(\vu)\!=\!\|\vu\|^2_{2},\regparam\!=\!0.05$ (``MOCHA'')},
legend pos=south east,
ymajorgrids=true,
grid style=dashed,
]


\addplot+[color=cyan!40!blue!60, very thick, solid, mark=*, error bars/.cd, y dir=both, y explicit, error bar style={line width=1pt,solid}, error mark options={line width=1pt,mark size=2pt,rotate=90}] table [x=epsilon, y=MCN001MSE, y error=MCN001STD, col sep=comma]{differentNoisesEpsilon.csv};\addlegendentry{$\sigma\!=\!10^{-2}$}
\addplot+[color=orange, very thick, densely dotted, mark=*, error bars/.cd, y dir=both, y explicit, error bar style={line width=1pt,solid}, error mark options={line width=1pt,mark size=2pt,rotate=90}] table [x=epsilon, y=MCN01MSE, y error=MCN01STD, col sep=comma]{differentNoisesEpsilon.csv};\addlegendentry{$\sigma\!=\!10^{-1}$}
\addplot+[color=green!40!teal!60, very thick, densely dashed, mark=*, error bars/.cd, y dir=both, y explicit, error bar style={line width=1pt,solid}, error mark options={line width=1pt,mark size=2pt,rotate=90}] table [x=epsilon, y=MCN10MSE, y error=MCN10STD, col sep=comma]{differentNoisesEpsilon.csv};\addlegendentry{$\sigma\!=\!1$}
\end{axis}
\end{tikzpicture}
\end{minipage}
\caption{\label{fig_fixes_trin_set_chain} $\estmse$ \eqref{equ_def_MSE} incurred by Algorithm \ref{alg1} when learning 
	the local model parameters for the chain graph $\graph^{(\rm chain)}$ using only local datasets at a fraction of 
	$|\trainingset|/|\nodes|=0.6$ nodes (see \eqref{equ_def_local_loss_squared}).} 
 \vspace{-3mm}
\end{figure}

\subsection{Handwritten Digits} 
\label{mnist_section}

This experiment revolves around a collection of $\nrnodes=40$ local datasets generated from the handwritten image 
dataset (``MNIST dataset'') \cite{LeCun1998}. and represented by the empirical graph $\graphmnist = \pair{\nodesmnist}{\edgesmnist}$. 
Each node $\nodeidx \in \nodesmnist=\{1,\ldots,40\}$ carries a local dataset $\localdataset{\nodeidx}$ which 
consists of $\localsamplesize{\nodeidx}=500$ data points being images of handwritten digits. 
The $\sampleidx$-th data point in $\localdataset{\nodeidx}$ is characterized by the feature vector $\featurevec^{(\nodeidx,\sampleidx)} \in \mathbb{R}^{\dimlocalmodel}$ and a label that specifies the digit that the datapoint belongs to.
The entries of  
the feature vector $\featurevec^{(\nodeidx,\sampleidx)} \in \mathbb{R}^{\dimlocalmodel}$ 
are greyscale levels of $\dimlocalmodel = 28 \times 28$ pixels. 
The nodes $\nodesmnist$ are partitioned into two clusters $\cluster{1},\cluster{2}$ (see Assumption \ref{asspt_weights_clustered}), 
each with $20$ nodes.  The nodes in $\cluster{1}$ carry local datasets that consist of images depicting 
handwritten digits $0$ and $1$. The nodes $\nodeidx \in \cluster{2}$ in the second cluster carry local datasets $\localdataset{\nodeidx}$ 
consisting of images that depict handwritten digits $2$ and $3$. 

Besides the local dataset $\localdataset{\nodeidx}$, the node $\nodeidx \in \nodes$ is also 
assigned a local model in the form of an artificial neural network (ANN),
\begin{equation} 
\label{equ_def_local_ANN_MNIST}
\begin{aligned}
    h^{(\nodeidx)} \big(\featurevec; \localparams{\nodeidx}\big) \defeq {\rm SoftMax}\big( \mW^{(\nodeidx,2)} {\rm ReLU} \big( \mW^{(\nodeidx,1)} \featurevec \big) \big) \\
    \mbox{ with } \localparams{\nodeidx} = {\rm stack} \{\mW^{(\nodeidx,1)} ,\mW^{(\nodeidx,2)}  \}.
\end{aligned}
\vspace{-2mm}
\end{equation} 
The ANN \eqref{equ_def_local_ANN_MNIST} consists of two densely connected layers, with 
the first layer using rectified linear units {\rm ReLU} \cite{Goodfellow-et-al-2016}. The second 
(output) layer uses a ``soft-max'' activation function \cite{Goodfellow-et-al-2016}. 
The weights of connections between layers are stored in the matrices $\mW^{(\nodeidx,1)}$ and $\mW^{(\nodeidx,2)}$, 
respectively. The entries of the weight matrices are collected in the local model parameter vector $\localparams{\nodeidx}$. 

For learning the local model parameters $\localparams{\nodeidx}$ of the ANNs \eqref{equ_def_local_ANN_MNIST}, 
we split each local dataset a training and validation set, $\localdataset{\nodeidx} = \localdatasettrain{\nodeidx} \cup \localdatasetval{\nodeidx}$ 
of size $\localtrainsetsize{\nodeidx}\!=\!400$ and $\localvalsetsize{\nodeidx}\!=\!100$, respectively. 
The local training sets $\localdatasettrain{\nodeidx}$ are also used for the construction of the 
edges in $\graph^{(\rm MNIST)}$ as described next. 

For each $\nodeidx \in \nodesmnist$, we apply {\rm t-SNE} \cite{tSNEPaper} to map the raw feature vector 
$\featurevec^{(\nodeidx,\sampleidx)}$ to the embedding vector $\vz^{(\nodeidx,\sampleidx)} \in \mathbb{R}^{2}$, 
for $\sampleidx =1,\ldots, \localtrainsetsize{\nodeidx}$. We then compute a distance  
${\rm dist}(\nodeidx,\nodeidx')$ between any two different nodes $\nodeidx, \nodeidx' \in \nodesmnist$ 
using a Kullback-Leibler divergence estimator \cite{PerezCruz2008}. 

Given the pairwise distances ${\rm dist}(\nodeidx,\nodeidx')$, for any $\nodeidx,\nodeidx' \in \nodesmnist$, 
we construct the edges of $\edgesmnist$ and their weights as follows. Each node $\nodeidx \in \nodesmnist$ 
is connected with the four other nodes $\nodeidx' \in \nodesmnist \setminus \{ \nodeidx\}$ of 
minimum distance ${\rm dist}(\nodeidx,\nodeidx')$ by an edge $\edgeidx=\edge{\nodeidx}{\nodeidx'}$. 
The edge weights are then constructed by an exponential kernel \cite{Luxburg2007}, $\edgeweight_{\nodeidx,\nodeidx'} = {\rm exp} \big(-{\rm dist}(\nodeidx,\nodeidx') \big)$.

To learn the local parameters $\estlocalparams{\nodeidx}$ of \eqref{equ_def_local_ANN_MNIST}, we use 
Algorithm \ref{alg1} with local loss 
\begin{equation*}
    \begin{aligned}
        \locallossfunc{\nodeidx}{\localparams{\nodeidx}} & \defeq 
         -\big(1/\localtrainsetsize{\nodeidx}\big)
        \sum_{\left(\featurevec,\truelabel\right) \in \localdatasettrain{\nodeidx}} \big( \truelabel \log{h^{(\nodeidx)}\big(\featurevec\big)} \big ) \\
        - & \big(1/\localtrainsetsize{\nodeidx}\big)  \sum_{\left(\featurevec,\truelabel\right) \in \localdatasettrain{\nodeidx}} \big( (1\!-\!\truelabel) \log{\big(1 - h^{(\nodeidx)}\big(\featurevec\big)\big)} \big ).
    \end{aligned}
    \vspace{-2mm}
\end{equation*}

As the stopping criterion in Algorithm \ref{alg1}, we use a fixed number of $\nriter=50$ iterations. The GTV parameter 
has been set to $\regparam=1.0$. We measure the quality of the model parameters learnt by Algorithm \ref{alg1} 
via the accuracy ${\rm ACC}^{(\rm val)}$ achieved on the validation sets $\localdatasetval{\nodeidx}$. 
More precisely, ${\rm ACC}^{(\rm val)}$ is the fraction of correctly classified images in the 
validation sets $\localdatasetval{\nodeidx}$, for $\nodeidx \in \nodesmnist$. 

Figure \ref{fig:lambda_mnist} shows the validation set accuracy achieved by the local model parameters learnt by 
Algorithm \ref{alg1} for different choices of $\regparam$. For the extreme case $\regparam=0$, Algorithm \ref{alg1} 
ignores the edges in $\graph^{(\rm MNIST)}$ and separately minimizes the local loss functions. The other extreme 
case is $\regparam \rightarrow \infty$ where all local model parameter are enforced to be identical, which is equivalent 
to learning a single model on all pooled local datasets. Figure \ref{fig:coonvergence_mnist} compares the ${\rm ACC}^{(\rm val)}$ 
and the convergence rate of Algorithm \ref{alg1} with those of existing methods for personalized FL and for 
clustered FL. Both the ${\rm ACC}^{(\rm val)}$ and the convergence rate of the MNIST dataset is 
improved by Algorithm \ref{alg1}.

\begin{figure}
\begin{center}
\begin{tikzpicture}[scale=0.5]
\begin{axis}[
y label style={at={(axis description cs:0.040,.5)},rotate=0.0,anchor=south},
label style={font=},
title style={font=},
legend style={font=},
yticklabel style = {font=},
xticklabel style = {font=},
xlabel={$\iteridx$},
ylabel={${\rm ACC}^{(\rm val)}$},
title={$\gtvpenalty(\flowvec)\!=\!\|\flowvec\|_{2}$ (nLasso)},
legend pos=south east,
ymajorgrids=true,
grid style=dashed,
]
\addplot+[color=cyan!40!blue!60, very thick, solid, mark=., error bars/.cd, y dir=both, y explicit, error bar style={line width=1pt,solid}, error mark options={line width=1pt,mark size=2pt,rotate=90}] table [x=iter, y=0.0, col sep=comma]{lambda_mnist_journal.csv};\addlegendentry{$\regparam\!=\!0$}
\addplot+[color=orange, very thick, solid, mark=., error bars/.cd, y dir=both, y explicit, error bar style={line width=1pt,solid}, error mark options={line width=1pt,mark size=2pt,rotate=90}] table [x=iter, y=1.0, col sep=comma]{lambda_mnist_journal.csv};\addlegendentry{$\regparam\!=\!1$}
\addplot+[color=green!40!teal!60, very thick, solid, mark=., error bars/.cd, y dir=both, y explicit, error bar style={line width=1pt,solid}, error mark options={line width=1pt,mark size=2pt,rotate=90}] table [x=iter, y=10000.0, col sep=comma]{lambda_mnist_journal.csv};\addlegendentry{$\regparam\!=\!10^{3}$}
\end{axis}
\end{tikzpicture}
\vspace*{-3mm}
\end{center}
\caption{The validation accuracy ${\rm ACC}^{(\rm val)}$ as a function ot the number $\iteridx$ 
	of iterations run by Algorithm \ref{alg1}. Different curves correspond to difference choices for 
	the GTV parameter $\regparam$. \label{fig:lambda_mnist}}
 \vspace{-3mm}
\end{figure}

\begin{figure}[htbp]
\centering
\begin{tikzpicture}[scale=0.7]
\begin{axis}[
y label style={at={(axis description cs:0.040,.5)},rotate=0.0,anchor=south},
label style={font=},
title style={font=},
legend style={font=},
yticklabel style = {font=},
xticklabel style = {font=},
xlabel={iteration $\iteridx$},
ylabel={${\rm ACC}^{(\rm val)}$ },
title={$\regparam=1, \gtvpenalty(\flowvec)\!=\!\|\flowvec\|_{2}$ (nLasso)},
legend pos=south east,
ymajorgrids=true,
grid style=dashed,
]
\addplot+[color=cyan!40!blue!60, very thick, solid, mark=., error bars/.cd, y dir=both, y explicit, error bar style={line width=1pt,solid}, error mark options={line width=1pt,mark size=2pt,rotate=90}] table [x=iter, y=alg1, col sep=comma]{compares_mnist_journal.csv};\addlegendentry{Algorithm \ref{alg1}}
\addplot+[color=orange, very thick, solid, mark=., error bars/.cd, y dir=both, y explicit, error bar style={line width=1pt,solid}, error mark options={line width=1pt,mark size=2pt,rotate=90}] table [x=iter, y=ifca, col sep=comma]{compares_mnist_journal.csv};\addlegendentry{IFCA \cite{Ghosh2020}}
\addplot+[color=green!40!teal!60, very thick, solid, mark=., error bars/.cd, y dir=both, y explicit, error bar style={line width=1pt,solid}, error mark options={line width=1pt,mark size=2pt,rotate=90}] table [x=iter, y=pfedme, col sep=comma]{compares_mnist_journal.csv};\addlegendentry{pFedMe \cite{t2020personalized}}
\addplot+[color=purple!40!red!60, very thick, solid, mark=., error bars/.cd, y dir=both, y explicit, error bar style={line width=1pt,solid}, error mark options={line width=1pt,mark size=2pt,rotate=90}] table [x=iter, y=pytorch, col sep=comma]{compares_mnist_journal.csv};\addlegendentry{PyTorch optimizer}
\end{axis}
\end{tikzpicture}
\caption{Comparing the validation set accuracy ${\rm ACC}^{(\rm val)}$ for two clusters achieved by 
	local model parameters learnt by Algorithm \ref{alg1} with those learnt by IFCA \cite{Ghosh2020}, 
	pFedMe \cite{t2020personalized} and the PyTorch optimizer for \eqref{equ_gtvmin}.}
\label{fig:coonvergence_mnist} 
\end{figure}

Additionally, we have expanded the numerical experiment to incorporate more clusters, 
with each cluster representing a pair of two digits. We use the same setup as during the previous 
experiment, except for the number of clusters. In particular, this experiment revolves 
around a set of $\nrnodes=100$ local datasets generated from the MNIST handwritten image dataset. 
The nodes in $\nodesmnist$ are divided into five clusters, namely $\cluster{1}, \cluster{2}, \cluster{3}, \cluster{4}, \text{and } \cluster{5}$, with each cluster consisting of $20$ nodes. Table \ref{tbl:mnist_five_cluster_images} represents the handwritten digits each cluster contains.
Figure \ref{fig:coonvergence_mnist_5} presents a comparison of the ${\rm ACC}^{(\rm val)}$ and convergence rate 
achieved by Algorithm \ref{alg1} with existing methods for personalized FL and clustered FL. Algorithm \ref{alg1} 
demonstrates improvements in both ${\rm ACC}^{(\rm val)}$ and convergence rate for the MNIST dataset. 
However, it is worth noting that due to lack of computational resources, we could compare against pFedMe \cite{t2020personalized}. 

\begin{table}[h]
    \centering
    \caption{The handwritten digits that each cluster contains.}
    \label{tbl:mnist_five_cluster_images}
     \begin{tabular}{ | p{1.0cm} |  p{0.6cm} | p{0.6cm} |p{0.6cm} |p{0.6cm} | p{0.6cm} | }
        \hline
        Cluster & $\cluster{1}$ & $\cluster{2}$ & $\cluster{3}$ & $\cluster{4}$ & $\cluster{5}$ \\ \hline
        Digits & 0, 1 & 2, 3 & 4, 5 & 6, 7 & 8, 9\\ \hline
        \end{tabular}
    \end{table}

\begin{figure}[htbp]
\begin{center}
\begin{tikzpicture}[scale=0.8]
\begin{axis}[
y label style={at={(axis description cs:0.040,.5)},rotate=0.0,anchor=south},
label style={font=},
title style={font=},
legend style={font=},
yticklabel style = {font=},
xticklabel style = {font=},
xlabel={iteration $\iteridx$},
ylabel={${\rm ACC}^{(\rm val)}$ },
title={$\regparam=1, \gtvpenalty(\flowvec)\!=\!\|\flowvec\|_{2}$ (nLasso)},
legend pos=south east,
ymajorgrids=true,
grid style=dashed,
]
\addplot+[color=cyan!40!blue!60, very thick, solid, mark=., error bars/.cd, y dir=both, y explicit, error bar style={line width=1pt,solid}, error mark options={line width=1pt,mark size=2pt,rotate=90}] table [x=iter, y=alg1, col sep=comma]{compares_mnist_journal_five_clusters.csv};\addlegendentry{Algorithm 1}
\addplot+[color=orange, very thick, solid, mark=., error bars/.cd, y dir=both, y explicit, error bar style={line width=1pt,solid}, error mark options={line width=1pt,mark size=2pt,rotate=90}] table [x=iter, y=ifca, col sep=comma]{compares_mnist_journal_five_clusters.csv};\addlegendentry{IFCA \cite{Ghosh2020}}
\addplot+[color=purple!40!red!60, very thick, solid, mark=., error bars/.cd, y dir=both, y explicit, error bar style={line width=1pt,solid}, error mark options={line width=1pt,mark size=2pt,rotate=90}] table [x=iter, y=pytorch, col sep=comma]{compares_mnist_journal_five_clusters.csv};\addlegendentry{PyTorch optimizer}
\end{axis}
\end{tikzpicture}
\vspace*{-4mm}
\end{center}
\caption{Comparing the validation set accuracy ${\rm ACC}^{(\rm val)}$ for five clusters achieved by local model 
	parameters learnt by Algorithm \ref{alg1} with those learnt by IFCA \cite{Ghosh2020}, and the PyTorch optimizer.}
\label{fig:coonvergence_mnist_5}
\vspace{-3mm}
\end{figure}


\section{Conclusion} 
\label{sec_conclusion}
We have studied GTV minimization methods for the distributed learning of personalized models in 
networked collections of local datasets. GTV minimization is an instance of the RERM principle that 
is obtained using the GTV of local model parameters as the regularization term. This approach is built 
on the assumption that the statistical properties or similarities between collections of local datasets 
are reflected by a known network structure. We obtain a highly scalable FL method by solving GTV 
minimization using a primal-dual method for jointly solving GTV minimization and its dual network flow 
optimization problem. The resulting message-passing algorithm exploits the known 
network structure of data to adaptively pool local datasets into clusters. This pooling allows to learn 
``personalized'' models for local datasets that would not provide sufficient statistical power by itself alone. 
Future research directions include the joint learning of network structure and local model parameters. 
We also aim at a more fine-grained convergence analysis of distributed optimization methods for FL 
that takes into account the precise cluster structure of the data network. Another interesting avenue for 
future research is the extension of our approach to allow for non-parametric local models such as 
decision trees.

\section{Acknowledgement} 
We are grateful to Reza Mirzaeifard who pointed out the Moreau decomposition property 
of proximal operators and provided a careful review of the manuscript. We also like to thank 
Olga Kuznetsova for a careful proof reading of the manuscript. 

\bibliographystyle{IEEEtran}
\bibliography{Literature.bib}


\newpage


{\Large \textbf{Supplementary Material}}
\vspace{0.4cm}


\section{Supplementary Material}

\subsection{Proof of Theorem \ref{thm_main_result}}
\label{app_proof_main_result} 
Our proof technique is an instance of the primal-dual witness technique which has been championed 
for the analysis of Lasso methods \cite[Sec. 11.4.]{HastieWainwrightBook}. A similar instance of this 
technique has been applied to the study of convex clustering which is a special case of GTV minimization \cite[Theorem 5]{JMLR:v22:18-694}, 
In particular, the proof revolves around constructing a pair of optimal networked model 
parameters $\widehat{\weights} \in \nodespace$ and optimal flow vector $\widehat{\flowvec} \in \edgespace$ 
that satisfies the optimality condition \eqref{equ_opt_condition_node_edge_norm}. 
The networked model parameters $\widehat{\weights} \in \nodespace$ assign the local 
model parameters $\estlocalparams{\nodeidx}$ to each node $\nodeidx \in \nodes$. The 
optimal flow vector $\widehat{\flowvec} \in \edgespace$ that assigns each edge the 
flow vector $\optlocalflowvec{\edgeidx}$, for each edge $\edgeidx \in \edges$.

We construct the primal-dual optimal pair for the GTV minimization \eqref{equ_gtvmin} over the 
empirical graph $\graph$ using the solutions of another GTV minimization problem that is defined 
for another empirical graph $\widetilde{\graph} = \pair{\widetilde{\nodes}}{\widetilde{\edges}}$. This 
other empirical graph $\widetilde{\graph}$, which we will refer to as ``cluster graph'', is obtained from 
the original empirical graph $\graph$ by merging all nodes that belong to the same cluster according 
to Assumption \ref{asspt_weights_clustered} (see Figure \ref{fig_cluster_graph}).  

The nodes $\widetilde{\nodes} = \{1,\ldots,\nrcluster\}$ of the cluster graph $\widetilde{\graph}$ represent 
the clusters of the partition \eqref{equ_def_parition_asspt} in Assumption \ref{asspt_weights_clustered}. 
Two different nodes $\clusteridx,\clusteridx' \in \widetilde{\nodes}$ are connected by an edges $\edge{\clusteridx}{\clusteridx'} \in 
\widetilde{\edges}$ if and only if the empirical graph $\graph$ contains two different nodes 
$\nodeidx \in \cluster{\clusteridx}, \nodeidx' \in \cluster{\clusteridx'}$ that are connected by an 
edge $\edge{\nodeidx}{\nodeidx'} \in \edges$. 

Each cluster node $\clusteridx$ carries the sum $\clusterobj{\clusteridx}{\vv} \defeq \sum_{\nodeidx \in \cluster{\clusteridx}}\locallossfunc{\nodeidx}{\vv}$ 
of the local loss function assigned to the nodes $\nodeidx \in \cluster{\clusteridx}$ of the original graph $\graph$ (see our 
clustering assumption in \eqref{equ_def_opt_cluster}). Two cluster nodes $\clusteridx \neq \clusteridx'$ are connected 
by an undirected edge $\edge{\clusteridx}{\clusteridx'}$ with weight 
\begin{equation} 
\label{equ_def_edge_weight_clustered} 
\widetilde{\edgeweight}_{\clusteridx,\clusteridx'} = \sum_{\nodeidx \in \cluster{\clusteridx}, \nodeidx' \in \cluster{\clusteridx'}} \edgeweight_{\nodeidx,\nodeidx'}.
\vspace{-2mm}
\end{equation} 
Thus, each edge in the cluster graph $\widetilde{\graph}$ is obtained by aggregating 
all edges (via summing their weights) in the original empirical graph $\graph$ that connect nodes in cluster $\cluster{\clusteridx}$ 
with nodes in cluster $\cluster{\clusteridx'}$. Figure \ref{fig_cluster_graph} illustrates an example for an empirical graph 
whose nodes are partitioned into clusters along with the corresponding cluster graph. 

The original GTV minimization \eqref{equ_gtvmin} over the empirical graph $\graph$ induces the following 
GTV minimization problem on the cluster graph $\widetilde{\graph}$: 
\begin{equation} 
\label{equ_gtvmin_clustered}
\widetilde{\netparams} \in \underset{\netparams \in \widetilde{\nodespace}}{\mathrm{arg \ min}}\sum_{\clusteridx \in \widetilde{\nodes}} \clusterobj{\clusteridx}{\localparams{\clusteridx}} + \regparam \sum_{\edgeidx \in \widetilde{\edges}} \widetilde{\edgeweight}_{\edgeidx} \norm{\localparams{\edgeidx_{+}} - \localparams{\edgeidx_{-}}}.
\end{equation}
Here, $\widetilde{\nodespace}$ denotes the space of maps $\localparams{\clusteridx}: \widetilde{\nodes} \rightarrow \mathbb{R}^{\dimlocalmodel}$. 
Such a map assign each node $\clusteridx \in \widetilde{\nodes}$ in the cluster graph a parameter 
vector $\localparams{\clusteridx} \in \mathbb{R}^{\dimlocalmodel}$. With slight abuse of notation, 
we refer by $\localparams{\clusteridx}$ to both, the entire map (which is an element of $\widetilde{\nodespace}$), 
and the vector that is assigned to a particular node $\clusteridx \in \widetilde{\nodes}$ by this map. 
Besides the space $\widetilde{\nodespace}$, we associate the space 
 $\widetilde{\edgespace}$ with the cluster graph $\widetilde{\graph}$. This space consists 
of all maps $\flowvec$ that assign each edge $\edgeidx = \edge{\clusteridx}{\clusteridx'} \in \widetilde{\edges}$ 
a vector  $\localflowvec{\edgeidx} \in \mathbb{R}^{\dimlocalmodel}$. 


According to the optimality condition \eqref{equ_opt_condition_node_edge_norm}, any solution $\widetilde{\netparams}$ 
of \eqref{equ_gtvmin_clustered} is characterized by 
\begin{align} 
    & \sum_{\edgeidx \in \widetilde{\edges}:  \edgeidx_{+} =\clusteridx}  \optlocalflowvecclusteredg{\edgeidx} - \sum_{\edgeidx \in \widetilde{\edges}: \edgeidx_{-} = \clusteridx}  \optlocalflowvecclusteredg{\edgeidx}  = - \nabla \clusterobj{\clusteridx}{\estlocalparamsclusteredg{\clusteridx}} \nonumber \\
    & \quad \quad \quad \quad \mbox{ for all nodes } \clusteridx \in \widetilde{\nodes} \label{equ_op_condition_divergence_clusteredg}, \\ 
	& \normgeneric{\optlocalflowvecclusteredg{\edgeidx}}{*}    \leq \regparam  \widetilde{\edgeweight}_{\edgeidx}  \mbox{ for } \edgeidx \in \widetilde{\edges},  \label{equ_op_condition_clusteredg_capacity_constraint} \\ 
	\estlocalparamsclusteredg{\edgeidx_{+}} & = \estlocalparamsclusteredg{\edgeidx_{-}}   \mbox{ for each } \edgeidx \in \widetilde{\edges} \mbox{ with }  \normgeneric{\optlocalflowvecclusteredg{\edgeidx}}{*}   <  \regparam  \edgeweight_{\edgeidx}.  \label{equ_last_cond_clsutered_opt}
\end{align} 
Note that \eqref{equ_op_condition_divergence_clusteredg} - \eqref{equ_last_cond_clsutered_opt} are necessary and sufficient 
for a pair $\estlocalparamsclusteredg{\clusteridx}$, $\optlocalflowvecclusteredg{\edgeidx}$ to solve \eqref{equ_gtvmin_clustered} 
and its dual (which is obtained by adapting \eqref{equ_def_duality_nLasso_edge_node} to the cluster graph $\widetilde{\graph}$), respectively. 
By combining \eqref{equ_op_condition_divergence_clusteredg} with \eqref{equ_op_condition_clusteredg_capacity_constraint}, we 
obtain the necessary condition 
\begin{equation} \label{equ_dondition_gradient_clustered_graph}
    \normgeneric{\nabla \clusterobj{\clusteridx}{\estlocalparamsclusteredg{\clusteridx}}}{*} \leq \regparam \sum_{\clusteridx'} \widetilde{\edgeweight}_{\clusteridx,\clusteridx'},
   \vspace{-2mm}
\end{equation}
which holds for any solution $\estlocalparamsclusteredg{\clusteridx}$ of \eqref{equ_gtvmin_clustered}. 

The upper bound \eqref{equ_dondition_gradient_clustered_graph} on the (norm of the) gradient $ \nabla \clusterobj{\clusteridx}{\estlocalparamsclusteredg{\clusteridx}}$ allows us to upper bound the deviation 
between ${\estlocalparamsclusteredg{\clusteridx}}$ and the cluster-wise optimizer \eqref{equ_def_opt_cluster}. 
To this end, we use the convexity of $ \clusterobj{\clusteridx}{\estlocalparamsclusteredg{\clusteridx}}$ 
(see Assumption \ref{asspt_FIM_lower_bound}) to obtain  
\begin{align}
\clusterobj{\clusteridx}{\estlocalparamsclusteredg{\clusteridx}} - &\clusterobj{\clusteridx}{\overline{\weights}^{(\clusteridx)} }   \leq
 \big( \estlocalparamsclusteredg{\clusteridx} - \overline{\weights}^{(\clusteridx)} \big)^{T}  \nabla \clusterobj{\clusteridx}{\estlocalparamsclusteredg{\clusteridx}} 
 \nonumber \\ 
 & \stackrel{(a)}{\leq}  \normgeneric{ \estlocalparamsclusteredg{\clusteridx} - \overline{\weights}^{(\clusteridx)} }{} \normgeneric{ \nabla \clusterobj{\clusteridx}{\estlocalparamsclusteredg{\clusteridx}} }{*} \nonumber \\ 
 & \stackrel{\eqref{equ_dondition_gradient_clustered_graph}}{\leq}  \normgeneric{ \estlocalparamsclusteredg{\clusteridx} - \overline{\weights}^{(\clusteridx)} }{}
 \regparam \sum_{\clusteridx' \in \widetilde{\nodes} \setminus \{ \clusteridx \}} \widetilde{\edgeweight}_{\clusteridx,\clusteridx'}.  \label{equ_upper_bound_cluster_opt_convx}
\end{align}
Here, step $(a)$ uses the definition of the dual norm \cite{BoydConvexBook}. Next, we use the 
strong convexity of $\clusterobj{\clusteridx}{\cdot}$ (see \eqref{equ_strong_convexit}) to lower 
bound the LHS of \eqref{equ_upper_bound_cluster_opt_convx} as
\begin{align} 
\clusterobj{\clusteridx}{\estlocalparamsclusteredg{\clusteridx}} - \clusterobj{\clusteridx}{\overline{\weights}^{(\clusteridx)} }  \geq (\strongconvparam{\clusteridx}/2)  \normgeneric{\estlocalparamsclusteredg{\clusteridx} - \overline{\weights}^{(\clusteridx)}}{}^{2}. 
\label{equ_lower_bound_cluster_opt_objective_strong_convx}
\end{align} 
Combining \eqref{equ_lower_bound_cluster_opt_objective_strong_convx} with \eqref{equ_upper_bound_cluster_opt_convx} yields 
\begin{equation}
	\label{equ_def_upper_bound_clustered_custerwide}
	\normgeneric{\estlocalparamsclusteredg{\clusteridx} - \overline{\weights}^{(\clusteridx)}}{} \leq  (2\regparam/\strongconvparam{\clusteridx}) \sum_{\clusteridx' \in \widetilde{\nodes} \setminus \{\clusteridx \}} \widetilde{\edgeweight}_{\clusteridx,\clusteridx'}.
 \vspace{-2mm}
\end{equation} 


To recap: We have introduced the cluster graph $\widetilde{\graph}$ by merging the nodes 
of the empirical graph $\graph$ according to the clustering assumption (see Assumption \ref{asspt_weights_clustered}). 
This cluster graph is used to define the ``cluster-wise'' GTV minimization problem \eqref{equ_gtvmin_clustered} 
which is obtained from the original GTV minimization \eqref{equ_gtvmin} by enforcing identical local model parameters 
for all nodes in the same cluster of $\graph$. We have then derived the upper bound \eqref{equ_def_upper_bound_clustered_custerwide} 
on the deviation between any solution $\estlocalparamsclusteredg{\clusteridx}$ of  \eqref{equ_gtvmin_clustered} 
and the cluster-wise optimizers \eqref{equ_def_opt_cluster}. 

The main step in the proof of Theorem \ref{thm_main_result} is to show that the local model parameters  
\begin{equation}
\label{equ_def_optimal_weight_bsed_clustered_solutions} 
\begin{aligned}
     \estlocalparams{\nodeidx} & \defeq  \estlocalparamsclusteredg{\clusteridx} \\
     \mbox{ for all nodes } \nodeidx \in \cluster{\clusteridx} &
\mbox{ in the same cluster } \cluster{\clusteridx} \in \partition\\
& \mbox{ (see Assumption\ \ref{asspt_weights_clustered}) } 
\end{aligned}
\vspace{-2mm}
\end{equation} 
solve the original GTV minimization problem \eqref{equ_gtvmin}. To this end, we construct a flow 
$\optlocalflowvec{\edgeidx}$, for each edge $\edgeidx \in \edges$ in the original empirical graph $\graph$, 
that together with $\estlocalparams{\nodeidx}$ defined by \eqref{equ_def_optimal_weight_bsed_clustered_solutions}, 
solves the optimality conditions \eqref{equ_opt_condition_node_edge_norm}. 

We construct the optimal flow $\optlocalflowvec{\edgeidx}$ by considering separately the inter-cluster edges 
(connecting nodes in different clusters) and the intra-cluster edges (connecting nodes in the same cluster). The 
inter-cluster (or boundary) edges are 
$$ \partial \partition \defeq \left\{ \edge{\nodeidx}{\nodeidx'} \in \edges, \mbox{ with } \nodeidx \in \cluster{\clusteridx}, \nodeidx' \in \cluster{\clusteridx'} \right\}.\vspace{-2mm}$$ 
The intra-cluster edges $\edges \setminus \partial \partition$ connect nodes in the same cluster, i.e., they 
are given by all edges $\edge{\nodeidx}{\nodeidx'}$ with $\nodeidx, \nodeidx' \in \cluster{\clusteridx}$ for 
some $\clusteridx \in \{1,\ldots,\nrcluster\}$. 

To each inter-cluster edge $\edgeidx = \edge{\nodeidx}{\nodeidx'} \in \partial \partition$, we assign the flow value 
\begin{equation} 
\label{equ_def_inter_cluster_edge_local_flow}
\optlocalflowvec{\edgeidx} \defeq \optlocalflowvecclusteredg{\edge{\clusteridx}{\clusteridx'}} \big( \edgeweight_{\nodeidx,\nodeidx'} / \widetilde{\edgeweight}_{\clusteridx,\clusteridx'} \big) \mbox{ (see \eqref{equ_def_edge_weight_clustered})}.
\vspace{-2mm}
\end{equation} 
Note that, for any two different cluster nodes $\clusteridx,\clusteridx'$ (representing the clusters $\cluster{\clusteridx}$ 
and $\cluster{\clusteridx'}$ in the original empirical graph $\graph$), 
\begin{align} 
\label{equ_def_cap_constraint_inter_cluster_edge}
\normgeneric{\optlocalflowvec{\edgeidx}}{*} &  =  \normgeneric{\optlocalflowvecclusteredg{\edge{\clusteridx}{\clusteridx'}}}{*} \big( \edgeweight_{\nodeidx,\nodeidx'} / \widetilde{\edgeweight}_{\clusteridx,\clusteridx'} \big)  \stackrel{\eqref{equ_op_condition_clusteredg_capacity_constraint}}{\leq}  \regparam \edgeweight_{\nodeidx,\nodeidx'} \nonumber \\
& \mbox{ for any } \edgeidx =  \edge{\nodeidx}{\nodeidx'} \mbox{ with } \nodeidx \in \cluster{\clusteridx}, \nodeidx' \in \cluster{\clusteridx'}.
\vspace{-2mm}
\end{align} 
Moreover, if the cluster-wise GTV minimization \eqref{equ_gtvmin_clustered} delivers different  
parameter vectors $\estlocalparamsclusteredg{\clusteridx} \neq \estlocalparamsclusteredg{\clusteridx'}$ 
for two cluster nodes $\clusteridx,\clusteridx' \in \widetilde{\nodes}$, then \eqref{equ_last_cond_clsutered_opt}
requires that the edge $\edge{\clusteridx}{\clusteridx'}$ must be saturated in the sense of 
$\normgeneric{\optlocalflowvecclusteredg{\edge{\clusteridx}{\clusteridx'}}}{*} = \regparam \widetilde{\edgeweight}_{\clusteridx,\clusteridx'}$. 
This implies, in turn via \eqref{equ_def_inter_cluster_edge_local_flow}, that  
\begin{equation} 
\label{equ_proof_saturated_inter_cluster_edges}
\begin{aligned}
    & \normgeneric{\optlocalflowvec{\edge{\nodeidx}{\nodeidx'}}}{*} = \regparam  \edgeweight_{\nodeidx,\nodeidx'} \\
    \mbox{ for any } \nodeidx \in \cluster{\clusteridx}, & \nodeidx' \in \cluster{\clusteridx'} \mbox{ such that } \estlocalparamsclusteredg{\clusteridx} \neq \estlocalparamsclusteredg{\clusteridx'}. 
\end{aligned}
\vspace{-2mm}
\end{equation} 
Note that \eqref{equ_proof_saturated_inter_cluster_edges} is equivalent to the third condition in 
\eqref{equ_opt_condition_node_edge_norm} for the optimal flow values across 
inter-cluster edges $\edgeidx \in \partial \partition$. 

We next construct the flow values $\optlocalflowvec{\edgeidx}$ for the intra-cluster edges $\edgeidx \in \edges \setminus \partial \partition$ 
such that the optimality conditions \eqref{equ_opt_condition_node_edge_norm} are satisfied. To this end, we consider 
a specific cluster $\cluster{\clusteridx} \in \partition$ and the corresponding induced sub-graph $\graph^{(\clusteridx)}=\pair{\cluster{\clusteridx}}{\edges^{(\clusteridx)}}$ 
of the empirical graph $\graph$. Together with \eqref{equ_def_cap_constraint_inter_cluster_edge} 
and \eqref{equ_proof_saturated_inter_cluster_edges}, a sufficient condition for \eqref{equ_opt_condition_node_edge_norm} 
to hold is that, for each cluster sub-graph $\graph^{(\clusteridx)}$, 
\begin{align} \label{equ_opt_condition_node_edge_norm_intra_cluster}
	\sum_{\edgeidx \in \edges^{(\clusteridx)}:\nodeidx\!=\!\edgeidx_{+}} \hspace*{-3mm}\optlocalflowvec{\edgeidx} & - \sum_{\edgeidx \in \edges^{(\clusteridx)}:\nodeidx\!=\!\edgeidx_{-}}  \hspace*{-3mm}\optlocalflowvec{\edgeidx}  = \nonumber \\
    & - \hspace*{-3mm} \sum_{\edgeidx \in  \partial \partition: \nodeidx\!=\!\edgeidx_{+}} \hspace*{-3mm}\optlocalflowvec{\edgeidx} + \sum_{\edgeidx \in \partial \partition:\nodeidx\!=\!\edgeidx_{-}} \hspace*{-3mm} \optlocalflowvec{\edgeidx} - \nabla \locallossfunc{\nodeidx}{\estlocalparams{\nodeidx}} \nonumber \\
    & \quad \quad \mbox{ for each node } \nodeidx \in \cluster{\clusteridx}\mbox{, and } \nonumber\\ 
	\| \optlocalflowvec{\edgeidx} \|_{*}   & < \regparam  \edgeweight_{\edgeidx}   \mbox{ for each edge } \edgeidx \in \edges^{(\clusteridx)}. 
 \vspace{-2mm}
\end{align} 


Regarding the first (flow conservation) condition in \eqref{equ_opt_condition_node_edge_norm_intra_cluster}, 
we note that 
\begin{align}  \label{equ_def_bound_gradient_node_conservation}
& \normgeneric{\nabla  \locallossfunc{\nodeidx}{\estlocalparams{\nodeidx}}}{*}= \nonumber \\
& \quad \normgeneric{\nabla \locallossfunc{\nodeidx}{\estlocalparams{\nodeidx}} -  \nabla \locallossfunc{\nodeidx}{\clusterwideopt{\clusteridx}} +  \nabla \locallossfunc{\nodeidx}{\clusterwideopt{\clusteridx}}}{*} \nonumber \\ 
& \quad \stackrel{(a)}{\leq} \normgeneric{\nabla \locallossfunc{\nodeidx}{\estlocalparams{\nodeidx}} -  \nabla \locallossfunc{\nodeidx}{\clusterwideopt{\clusteridx}}}{*} +   \normgeneric{\nabla \locallossfunc{\nodeidx}{\clusterwideopt{\clusteridx}}}{*} \nonumber \\ 
& \quad \stackrel{\eqref{equ_upper_bound_norm_gradient}}{\leq}\normgeneric{\nabla \locallossfunc{\nodeidx}{\estlocalparams{\nodeidx}} -  \nabla \locallossfunc{\nodeidx}{\clusterwideopt{\clusteridx}}}{*} + \clusteropterr{\nodeidx} \nonumber \\  
& \quad \stackrel{\eqref{equ_def_cond_locallipsch}}{\leq} \locallipsch{\nodeidx}\norm{\estlocalparams{\nodeidx} -\clusterwideopt{\clusteridx}} + \clusteropterr{\nodeidx} \nonumber \\ 
& \quad \stackrel{\eqref{equ_def_upper_bound_clustered_custerwide}}{\leq} \locallipsch{\nodeidx} (2\regparam/\strongconvparam{\clusteridx})|\partial \cluster{\clusteridx}| + \clusteropterr{\nodeidx} \nonumber \\
& \quad \mbox{ for each node } \nodeidx \in \cluster{\clusteridx}.
\end{align} 
Here, step $(a)$ in \eqref{equ_def_bound_gradient_node_conservation} used the triangle inequality 
for the norm $\normgeneric{\cdot}{*}$ \cite{Golub1980}. Another application of the triangle inequality 
yields, for any node $\nodeidx \in \cluster{\clusteridx}$, 
\begin{align}
\label{equ_def_bound_inter_cluster_edges_flow_bound}
\normgeneric{- \hspace*{-3mm} \sum_{\edgeidx \in  \partial \partition: \nodeidx\!=\!\edgeidx_{+}} \hspace*{-3mm}\optlocalflowvec{\edgeidx} + \sum_{\edgeidx \in \partial \partition:\nodeidx\!=\!\edgeidx_{-}} \hspace*{-3mm} \optlocalflowvec{\edgeidx}}{*} &\leq \sum_{\nodeidx' \in \neighbourhood{\nodeidx} \cap \nodes \setminus \cluster{\clusteridx}} 
\normgeneric{\optlocalflowvec{\edge{\nodeidx}{\nodeidx'}}}{*} \nonumber \\
& \stackrel{\eqref{equ_def_cap_constraint_inter_cluster_edge}}{\leq} \sum_{\nodeidx' \in \neighbourhood{\nodeidx} \cap \nodes \setminus \cluster{\clusteridx}} \regparam \edgeweight_{\nodeidx,\nodeidx'} \nonumber \\ 
& \leq \sum_{\nodeidx' \notin \cluster{\clusteridx}} \regparam \edgeweight_{\nodeidx,\nodeidx'}. 
\end{align} 

Let us rewrite the requirement \eqref{equ_opt_condition_node_edge_norm_intra_cluster}, for some cluster $\cluster{\clusteridx}$, as 
\begin{align} 
	\label{equ_opt_condition_node_edge_norm_intra_cluster_demands}
	\sum_{\edgeidx \in \edges_{\clusteridx}:\nodeidx\!=\!\edgeidx_{+}}\optlocalflowvec{\edgeidx} - \sum_{\edgeidx \in \edges_{\clusteridx}:\nodeidx\!=\!\edgeidx_{-}}  \hspace*{-3mm}\optlocalflowvec{\edgeidx}  & = \demand{\nodeidx} \mbox{ for each node } \nodeidx \in \cluster{\clusteridx}, \nonumber\\ 
    \mbox{ and } \| \optlocalflowvec{\edgeidx} \|_{*} < \regparam  & \edgeweight_{\edgeidx}   \mbox{ for each edge } \edgeidx \in \edges^{(\clusteridx)}. 
\end{align} 
with vector-valued demands $ \demand{\nodeidx} = - \sum_{\edgeidx \in  \partial \partition: \nodeidx\!=\!\edgeidx_{+}} \hspace*{-3mm}\optlocalflowvec{\edgeidx} + \sum_{\edgeidx \in \partial \partition:\nodeidx\!=\!\edgeidx_{-}} \hspace*{-3mm} \optlocalflowvec{\edgeidx} - \nabla \locallossfunc{\nodeidx}{\estlocalparams{\nodeidx}}$. 
According to \eqref{equ_def_bound_gradient_node_conservation} and \eqref{equ_def_bound_inter_cluster_edges_flow_bound}, we have 
$$ \normgeneric{\demand{\nodeidx}}{*} \leq   \sum_{\nodeidx' \notin \cluster{\clusteridx}} \regparam \edgeweight_{\nodeidx,\nodeidx'} + \locallipsch{\nodeidx} (2\regparam/\strongconvparam{\clusteridx})|\partial \cluster{\clusteridx}| + \clusteropterr{\nodeidx}.$$
Since Theorem \ref{thm_main_result} requires each cluster $\cluster{\clusteridx}$ to be well-connected 
(see \eqref{equ_def_well_connected_condition}), we can apply Lemma \ref{lem_main_helper} 
to verify the existence of a flow $\optlocalflowvec{\edgeidx} $ that satisfies \eqref{equ_opt_condition_node_edge_norm_intra_cluster}. 

\begin{figure} 
\tikzset{global scale/.style={
		scale=#1,
		every node/.append style={scale=#1}
	}
}
\begin{center}
\begin{tikzpicture}[global scale = 0.75,
	dashedcircle/.style={circle, dash pattern=on 1pt off 2pt on 3pt off 2pt, draw=blue!80!cyan, minimum size=2cm},
	filledcircle/.style={circle, minimum size=0.1cm, draw=blue!80!cyan, fill=blue!70!cyan},
	edge/.style={color=blue!80!cyan, line width=1pt}]
	
	\draw (-6,-3) -- (-0.5,-3) -- (-0.5,3) -- (-6,3) -- (-6,-3);
	
        \node[dashedcircle](dc1) at (-4,1.3) {} node[above = of dc1, xshift=-0.2cm, yshift=-1.3cm, color=black] {$\cluster{1}$};
	\node[dashedcircle](dc2) [right = of dc1, xshift=-1.4cm, yshift=-1.2cm] {} node[above = of dc2, xshift=0.2cm, yshift=-1.2cm, color=black] {$\cluster{2}$};
	\node[dashedcircle](dc3) [below = of dc1, yshift=0.28cm, yshift=0.1cm] {} node[right = of dc3, xshift=-1.2cm, yshift=-0.3cm, color=black] {$\cluster{3}$};
	
	\node[filledcircle](fc1) [above = of dc1, xshift=0.3cm, yshift=-1.8cm] {};
	\node[filledcircle](fc2) [left = of fc1, xshift=0.6cm, yshift=-0.7cm] {} node[left = of fc2, xshift=1.4cm, yshift=0.2cm] {$\nodeidx'$};
	\node[filledcircle](fc3) [below = of fc1, xshift=0cm, yshift=0.2cm] {};
	
	\node[filledcircle](fc4) [below = of fc1, yshift=-1.3cm] {};
	\node[filledcircle](fc5) [below = of fc2, yshift=-1cm] {} node[left = of fc5, xshift=1.3cm, yshift=-0.1cm] {$\nodeidx$};
	\node[filledcircle](fc6) [below = of fc3, yshift=-1.4cm] {};
	
	\node[filledcircle](fc7) [below = of fc1, xshift=1.5cm, yshift=0.5cm] {};
	\node[filledcircle](fc8) [below = of fc2, xshift=3cm, yshift=0.5cm] {};
	\node[filledcircle](fc9) [below = of fc3, xshift=1.5cm, yshift=0.5cm] {};
	
	\draw[edge] (fc1) -- (fc2);
	\draw[edge] (fc1) -- (fc3);
	\draw[edge] (fc2) -- (fc3);
	
	\draw[edge] (fc4) -- (fc5);
	\draw[edge] (fc4) -- (fc6);
	\draw[edge] (fc5) -- (fc6);
	
	\draw[edge] (fc7) -- (fc8);
	\draw[edge] (fc7) -- (fc9);
	\draw[edge] (fc8) -- (fc9);
	
	\draw[edge] (fc2) -- (fc5) node[left = of fc5, xshift=1.5cm, yshift=1cm, color=black] {$\edgeweight_{\nodeidx,\nodeidx'}$};
	\draw[edge] (fc3) -- (fc4);
	\draw[edge] (fc3) -- (fc7);
	\draw[edge] (fc4) -- (fc9);
	
	\draw (6,-3) -- (0.5,-3) -- (0.5,3) -- (6,3) -- (6,-3);
	
	\node[filledcircle](fc10) [right = of fc1, xshift=5cm, yshift=-1.0cm] {} node[left = of fc10, xshift=1.9cm, yshift=0.4cm] {$\clusteridx\!=\!1$};
	\node[filledcircle](fc11) [below = of fc10, xshift=-0.5cm, yshift=-0.6cm] {} node[below = of fc11, yshift=1.2cm] {$\clusteridx'\!=\!3$};
	\node[filledcircle](fc12) [below = of fc10, xshift=1.5cm, yshift=0.3cm] {} node[right = of fc12, xshift=-0.2cm] {$$};
	
	\draw[edge] (fc10) -- (fc11) node[left = of fc10, xshift=1.3cm, yshift=-0.8cm, color=black] {$\widetilde{\edgeweight}_{\clusteridx,\clusteridx'}$};
	\draw[edge] (fc10) -- (fc12);
	\draw[edge] (fc11) -- (fc12);
	
\end{tikzpicture}
\end{center}
\caption{Left: Empirical graph $\graph$ of a networked dataset whose nodes are partitioned into 
	three clusters $\partition = \{ \cluster{1},\ldots,\cluster{\nrcluster} \}$. Right: Cluster graph $\widetilde{\graph}$ 
	whose nodes represent the clusters in $\partition$.} 
\label{fig_cluster_graph}
\end{figure} 


\begin{lemma}[Main Helper] 
	\label{lem_main_helper}
	Consider an empirical graph $\graph' = \big(\nodes',\edges' \big)$ with edge weights $\edgeweight'_{\edgeidx}$. 
	We orient the edges  $\edgeidx = \{\nodeidx,\nodeidx'\} \in \edges'$ by defining its 
	head and tail as $\edgeidx_{+} =\min \{\nodeidx,\nodeidx' \}$ and $\edgeidx_{-} =\max \{\nodeidx,\nodeidx' \}$, respectively.  
	Each node $\nodeidx \in \nodes' = \{1,\ldots,\nrnodes'\}$ carries a (``demand'') vector $\demand{\nodeidx} \in \mathbb{R}^{\dimlocalmodel}$ 
	such that $\sum_{\nodeidx \in \nodes'} \demand{\nodeidx} = \mathbf{0}$ and 
	$\normgeneric{\demand{\nodeidx}}{} \leq \demandbound{\nodeidx}$ with prescribed 
	upper bounds $\demandbound{\nodeidx} \in \mathbb{R}_{+}$. 
	
	If there is a node $\nodeidx_{0} \in \nodes'$ such that 
	\begin{equation} 
		\label{equ_condition_helper_lem}
		\sum_{\nodeidx \in \genericnodeset} \demandbound{\nodeidx} < \sum_{\nodeidx \in \genericnodeset}  \sum_{\nodeidx' \in \nodes' \setminus \genericnodeset } 
		\edgeweight'_{\nodeidx,\nodeidx'}  \mbox{ for all subsets } \genericnodeset \subseteq \nodes' \setminus \big\{ \nodeidx_{0} \big\},
	\end{equation} 
    we can construct a flow $\localflowvec{\edgeidx} \in \mathbb{R}^{\dimlocalmodel}$, for each edge $\edgeidx$, such that 
\begin{align} 
	\label{equ_flow_converv_capacity_main_helper}
	\sum_{\edgeidx \in \edges':\nodeidx\!=\!\edgeidx_{+}} \localflowvec{\edgeidx} - \sum_{\edgeidx \in \edges':\nodeidx\!=\!\edgeidx_{-}}  \localflowvec{\edgeidx}  & = \demand{\nodeidx} \mbox{ for all nodes } \nodeidx \in \nodes' \nonumber, \\ 
	\normgeneric{ \localflowvec{\edgeidx}}{}  <  \edgeweight'_{\edgeidx}  & \mbox{ for each } \edgeidx \in \edges'. 
\end{align}
\end{lemma}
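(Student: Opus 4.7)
The plan is to reduce this vector-valued flow feasibility problem to classical scalar max-flow theory and then lift a scalar solution to a vector one via path decomposition. First, I would introduce the auxiliary scalar flow problem on $\graph'$ with scalar supplies $b_\nodeidx \defeq \demandbound{\nodeidx}$ for $\nodeidx \neq \nodeidx_0$ and $b_{\nodeidx_0} \defeq -\sum_{\nodeidx \neq \nodeidx_0}\demandbound{\nodeidx}$, so that every node other than $\nodeidx_0$ is a source absorbed at $\nodeidx_0$. The classical max-flow/min-cut feasibility criterion for undirected flows with these supplies is $|\sum_{\nodeidx \in \genericnodeset} b_\nodeidx| \leq \sum_{\nodeidx \in \genericnodeset, \nodeidx' \notin \genericnodeset} \edgeweight'_{\nodeidx,\nodeidx'}$ for every $\genericnodeset \subseteq \nodes'$; for $\genericnodeset \not\ni \nodeidx_0$ this is exactly hypothesis \eqref{equ_condition_helper_lem}, and for $\genericnodeset \ni \nodeidx_0$ it reduces to the same condition applied to the complement $\nodes' \setminus \genericnodeset$. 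Since the inequalities in \eqref{equ_condition_helper_lem} are strict on a finite collection of subsets, I can pick $\epsilon > 0$ small enough that shrinking every capacity by $1-\epsilon$ still satisfies the non-strict cut criterion, and the feasibility theorem then yields a scalar flow $g_\edgeidx$ satisfying $|g_\edgeidx| \leq (1-\epsilon)\edgeweight'_\edgeidx < \edgeweight'_\edgeidx$ together with the nodal balance $\sum_{\edgeidx:\nodeidx=\edgeidx_+}g_\edgeidx - \sum_{\edgeidx:\nodeidx=\edgeidx_-}g_\edgeidx = \demandbound{\nodeidx}$ for every $\nodeidx \neq \nodeidx_0$.

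Second, I would apply the flow decomposition theorem to write $g = \sum_P \alpha_P \chi_P$ as a non-negative combination of simple directed paths $P$ from sources $\text{src}(P) \neq \nodeidx_0$ to the sink $\nodeidx_0$, where $\alpha_P \geq 0$, $\chi_P(\edgeidx) \in \{-1,0,+1\}$ records how $P$ traverses $\edgeidx$ relative to the orientation $\edgeidx_+ \to \edgeidx_-$, and $\sum_{P:\,\text{src}(P) = \nodeidx}\alpha_P = \demandbound{\nodeidx}$. Choosing a canonical decomposition with no antiparallel cancellation on any edge also ensures $\sum_P \alpha_P |\chi_P(\edgeidx)| = |g_\edgeidx|$. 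The lift of this scalar decomposition to a vector flow will be
\begin{equation*}
\localflowvec{\edgeidx} \defeq \sum_P \chi_P(\edgeidx) \, \frac{\alpha_P}{\demandbound{\text{src}(P)}} \, \demand{\text{src}(P)},
\end{equation*}
with the convention that summands with $\demandbound{\text{src}(P)} = 0$ are set to $\mathbf{0}$, which is harmless since then $\alpha_P = 0$ and $\demand{\text{src}(P)} = \mathbf{0}$ as well.

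To finish, I would verify both conditions in \eqref{equ_flow_converv_capacity_main_helper}. The capacity bound follows from the triangle inequality and the hypothesis $\normgeneric{\demand{\nodeidx}}{} \leq \demandbound{\nodeidx}$:
\begin{equation*}
\normgeneric{\localflowvec{\edgeidx}}{} \leq \sum_P |\chi_P(\edgeidx)|\, \alpha_P\, \frac{\normgeneric{\demand{\text{src}(P)}}{}}{\demandbound{\text{src}(P)}} \leq \sum_P |\chi_P(\edgeidx)|\alpha_P = |g_\edgeidx| < \edgeweight'_\edgeidx.
\end{equation*}
For the divergence at $\nodeidx \neq \nodeidx_0$, only paths with $\text{src}(P) = \nodeidx$ contribute divergence $+1$ to the scalar decomposition at $\nodeidx$, so the vector divergence equals $\sum_{P:\,\text{src}(P) = \nodeidx}(\alpha_P / \demandbound{\nodeidx})\demand{\nodeidx} = \demand{\nodeidx}$; the divergence at $\nodeidx_0$ then automatically equals $-\sum_{\nodeidx \neq \nodeidx_0}\demand{\nodeidx} = \demand{\nodeidx_0}$ by the hypothesis $\sum_\nodeidx \demand{\nodeidx} = \mathbf{0}$. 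The main obstacle is the clean translation of \eqref{equ_condition_helper_lem} into the scalar cut criterion while preserving strict capacity slack (the $(1-\epsilon)$-shrinkage argument), together with choosing a canonical non-cancelling path decomposition; the vector lift itself is a one-line application of the triangle inequality made possible by the uniform norm bound $\normgeneric{\demand{\nodeidx}}{} \leq \demandbound{\nodeidx}$.
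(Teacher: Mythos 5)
Your proposal is correct and follows essentially the same route as the paper's proof: establish a feasible scalar flow with node divergences bounded by the $\demandbound{\nodeidx}$ via a classical cut-based feasibility theorem (the paper cites Rockafellar's Feasible Distribution Theorem where you invoke max-flow/min-cut with an $\epsilon$-shrinkage to preserve strictness), decompose it conformally into source-to-$\nodeidx_{0}$ paths, and lift to a vector flow by attaching the demand vector of each path's source. Your write-up is in fact slightly more careful than the paper's at the lifting step, where you explicitly normalize by $\demandbound{\text{src}(P)}$ and retain the path weights so that the triangle-inequality capacity bound goes through cleanly.
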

 \begin{proof}
 We first apply the \emph{Feasible Distribution Theorem} \cite[p. 71]{RockNetworks} to \eqref{equ_condition_helper_lem} 
 which yields the existence of a scalar-valued flow $\localflowscalar{\edgeidx}$ such that 
 \begin{align} 
 	\label{equ_flow_converv_capacity_main_helper}
 	\sum_{\edgeidx \in \edges':\nodeidx_{0} \!=\!\edgeidx_{+}} \localflowscalar{\edgeidx} - \sum_{\edgeidx \in \edges':\nodeidx_{0} \!=\!\edgeidx_{-}}  \localflowscalar{\edgeidx}  & = - \sum_{\nodeidx \in \nodes' \setminus \{\nodeidx_{0}\}} \normgeneric{\demand{\nodeidx}}{}  \nonumber, \\ 
 	 	\sum_{\edgeidx \in \edges':\nodeidx\!=\!\edgeidx_{+}} \localflowscalar{\edgeidx} - \sum_{\edgeidx \in \edges':\nodeidx\!=\!\edgeidx_{-}}  \localflowscalar{\edgeidx}  & = \normgeneric{\demand{\nodeidx}}{} \nonumber \\
    \mbox{ for all nodes } \nodeidx \in \nodes' & \setminus \left\{ \nodeidx_{0} \right\} \mbox{, and } \nonumber \\ 
 	| \localflowscalar{\edgeidx}|  <  \edgeweight'_{\edgeidx}  \mbox{ for every} & \mbox{ edge } \edgeidx \in \edges'. 
 \end{align}
Next, we decompose the flow $\localflowscalar{\edgeidx}$ using the \emph{Conformal Decomposition Theorem} \cite[Prop 1.1]{BertsekasNetworkOpt} 
into elementary (scalar-valued) flows $\elemflowscalar{\cycleidx}{\edgeidx} \in \{0,1\}$ with the index $\cycleidx$ representing a simple path, 
\begin{equation}
	\label{equ_def_conf_flow_decomp}
\localflowscalar{\edgeidx} = \sum_{\cycleidx=1}^{\nrnodes'+|\edges'|} a_{\cycleidx} \elemflowscalar{\cycleidx}{\edgeidx}. 
\end{equation}
Each elementary flow $\elemflowscalar{\cycleidx}{\edgeidx}$ in \eqref{equ_def_conf_flow_decomp} is associated with 
separate path $\simplepath{\cycleidx}$ that starts from a node in $\nodes \setminus \{\nodeidx_{0} \}$ and 
terminates at the node $\nodeidx_{0}$. Moreover, each flow $ \elemflowscalar{\cycleidx}{\edgeidx}$ 
in \eqref{equ_def_conf_flow_decomp} conforms to the flow $\localflowscalar{\edgeidx}$ in the 
sense of \cite[p.8]{BertsekasNetworkOpt}
\begin{align} 
\localflowscalar{\edgeidx} & > 0  \mbox{ for any edge } \edgeidx \in \edges' \mbox{ with } \elemflowscalar{\cycleidx}{\edgeidx} > 0 \mbox{, and } \nonumber \\
\localflowscalar{\edgeidx} & <0  \mbox{ for any edge } \edgeidx \in \edges' \mbox{ with } \elemflowscalar{\cycleidx}{\edgeidx}< 0.
\end{align}
Let the set $\mathcal{S}^{(\nodeidx)}$ collect the indices $\cycleidx \in \{1,\ldots,\nrnodes+| \edges| \}$ 
of all paths $\simplepath{\cycleidx}$ starting at node $\nodeidx \in \nodes' \setminus \{ \nodeidx_{0} \}$. 
The final step of the proof is to multiply the elementary flows $\elemflowscalar{\cycleidx}{\edgeidx}$, 
for each path $\cycleidx \in \mathcal{S}^{(\nodeidx)}$, with the vector-valued demand $\demand{\nodeidx}$, 
resulting in the vector-valued flow 
\begin{equation} 
\localflowvec{\edgeidx}  = \sum_{\nodeidx \in \nodes \setminus \{\nodeidx_{0}\}} \demand{\nodeidx} \sum_{\cycleidx \in \mathcal{S}^{(\nodeidx)}} \elemflowscalar{\cycleidx}{\edgeidx}.
\end{equation} 
 \end{proof}

\subsection{Additional Numerical Experiments}
\label{sec_more_numexp}

\subsubsection{SBM Low-Dimensional Linear Regression}

Figure \ref{fig_noiseless} and \ref{fig_fixes_trin_set} depict results for data with empirical graph $\graph^{(\rm SBM)}$ 
consisting of two clusters, $|\cluster{1}| = |\cluster{2}| = 100$. The intra-cluster edge probability is fixed to 
$p_{\rm in}\!=\!0.5$, but different choices for the inter-cluster edge probability $p_{\rm out}$ are used. The local 
datasets $\localdataset{\nodeidx}$ consist of $\localsamplesize{\nodeidx}\!=\!5$ data points with feature vectors 
$\featurevec^{(\nodeidx,\localsampleidx)} \in \mathbb{R}^{2}$ and labels generated via \eqref{equ_def_true_linear_model_SBM} 
where $\overline{\weights}^{(1)} = \big(2, 2\big)^T$ and $\overline{\weights}^{(2)} = \big(-2, 2\big)^T$. 
For each choice of $\rho$ (fraction of accessible local datasets), $\sigma$ (label noise) and $p_{\rm out}$ (inter-cluster edge probability), 
we draw $5$ realizations of $\graph^{(\rm SBM)}$ and compute the average and standard deviation of the 
resulting MSE values \eqref{equ_def_MSE} obtained for each realization. Figure \ref{fig_noiseless} depicts the 
average and standard deviation for the MSE values in the noise-less case where $\sigma=0$ in \eqref{equ_def_true_linear_model_SBM}. Figure \ref{fig_fixes_trin_set} 
depicts average and standard deviation of the MSE obtained when having access to a fraction of $\rho=0.6$ local datasets. 

\begin{figure}[htbp]
\begin{minipage}[t]{0.3\columnwidth} 
\begin{tikzpicture}[scale=0.5]
\begin{axis}[
ymode=log, ymin=1e-16, ymax=50,
y label style={at={(axis description cs:0.040,.5)},rotate=0.0,anchor=south},
label style={font=\Large},
title style={font=\Large},
legend style={font=\Large},
yticklabel style = {font=\small},
xticklabel style = {font=\Large},
xlabel={$p_{\rm out}$},
ylabel={MSE},
title={$\gtvpenalty(\flowvec)\!=\!\|\flowvec\|_{1},\regparam\!=\!10^{-2}$},
legend pos=south east,
ymajorgrids=true,
grid style=dashed,
]
\addplot+[color=cyan!40!blue!60, very thick, solid, mark=*, error bars/.cd, y dir=both, y explicit, error bar style={line width=1pt,solid}, error mark options={line width=1pt,mark size=2pt,rotate=90}] table [x=Pout, y=N1M02MSE, y error=N1M02STD, col sep=comma]{differentSamplingRatios.csv};\addlegendentry{$\rho\!=\!2/10$}
\addplot+[color=orange, very thick, densely dotted, mark=*, error bars/.cd, y dir=both, y explicit, error bar style={line width=1pt,solid}, error mark options={line width=1pt,mark size=2pt,rotate=90}] table [x=Pout, y=N1M04MSE, y error=N1M04STD, col sep=comma]{differentSamplingRatios.csv};\addlegendentry{$\rho\!=\!4/10$}
\addplot+[color=green!40!teal!60, very thick, densely dashed, mark=*, error bars/.cd, y dir=both, y explicit, error bar style={line width=1pt,solid}, error mark options={line width=1pt,mark size=2pt,rotate=90}] table [x=Pout, y=N1M06MSE, y error=N1M06STD, col sep=comma]{differentSamplingRatios.csv};\addlegendentry{$\rho\!=\!6/10$}
\end{axis}
\end{tikzpicture}
\end{minipage}
\hspace*{15mm}
\begin{minipage}[t]{0.3\columnwidth} 
\begin{tikzpicture}[scale=0.5]
\begin{axis}[
ymode=log,ymin=1e-16, ymax=50,
y label style={at={(axis description cs:0.040,.5)},rotate=0.0,anchor=south},
label style={font=\Large},
title style={font=\Large},
legend style={font=\Large},
yticklabel style = {font=\small},
xticklabel style = {font=\Large},
xlabel={$p_{\rm out}$},
ylabel={MSE},
title={$\gtvpenalty(\flowvec)\!=\!\|\flowvec\|_{2},\regparam\!=\!10^{-2}$ (nLasso)},
legend pos=south east,
ymajorgrids=true,
grid style=dashed,
]

\addplot+[color=cyan!40!blue!60, very thick, solid, mark=*, error bars/.cd, y dir=both, y explicit, error bar style={line width=1pt,solid}, error mark options={line width=1pt,mark size=2pt,rotate=90}] table [x=Pout, y=N2M02MSE, y error=N2M02STD, col sep=comma]{differentSamplingRatios.csv};\addlegendentry{$\rho\!=\!2/10$}
\addplot+[color=orange, very thick, densely dotted, mark=*, error bars/.cd, y dir=both, y explicit, error bar style={line width=1pt,solid}, error mark options={line width=1pt,mark size=2pt,rotate=90}] table [x=Pout, y=N2M04MSE, y error=N2M04STD, col sep=comma]{differentSamplingRatios.csv};\addlegendentry{$\rho\!=\!4/10$}
\addplot+[color=green!40!teal!60, very thick, densely dashed, mark=*, error bars/.cd, y dir=both, y explicit, error bar style={line width=1pt,solid}, error mark options={line width=1pt,mark size=2pt,rotate=90}] table [x=Pout, y=N2M06MSE, y error=N2M06STD, col sep=comma]{differentSamplingRatios.csv};\addlegendentry{$\rho\!=\!6/10$}
\end{axis}
\end{tikzpicture}
\end{minipage}
\\
\\
\hspace*{22mm}
\begin{minipage}[t]{0.3\columnwidth} 
\begin{tikzpicture}[scale=0.5]
\begin{axis}[
ymode=log,ymin=1e-16, ymax=50,
y label style={at={(axis description cs:0.040,.5)},rotate=0.0,anchor=south},
label style={font=\Large},
title style={font=\Large},
legend style={font=\Large},
yticklabel style = {font=\small},
xticklabel style = {font=\Large},
xlabel={$p_{\rm out}$},
ylabel={MSE},
title={$\gtvpenalty(\flowvec)\!=\!\|\flowvec\|_{2}^{2},\regparam\!=\!10^{-2}$ (``MOCHA'')},
legend pos=south east,
ymajorgrids=true,
grid style=dashed,
]


\addplot+[color=cyan!40!blue!60, very thick, solid, mark=*, error bars/.cd, y dir=both, y explicit, error bar style={line width=1pt,solid}, error mark options={line width=1pt,mark size=2pt,rotate=90}] table [x=Pout, y=MCM02MSE, y error=MCM02STD, col sep=comma]{differentSamplingRatios.csv};\addlegendentry{$\rho\!=\!0.2$}
\addplot+[color=orange, very thick, densely dotted, mark=*, error bars/.cd, y dir=both, y explicit, error bar style={line width=1pt,solid}, error mark options={line width=1pt,mark size=2pt,rotate=90}] table [x=Pout, y=MCM04MSE, y error=MCM04STD, col sep=comma]{differentSamplingRatios.csv};\addlegendentry{$\rho\!=\!0.4$}
\addplot+[color=green!40!teal!60, very thick, densely dashed, mark=*, error bars/.cd, y dir=both, y explicit, error bar style={line width=1pt,solid}, error mark options={line width=1pt,mark size=2pt,rotate=90}] table [x=Pout, y=MCM06MSE, y error=MCM06STD, col sep=comma]{differentSamplingRatios.csv};\addlegendentry{$\rho\!=\!0.6$}

\end{axis}
\end{tikzpicture}
\end{minipage}
\caption{\label{fig_noiseless} MSE \eqref{equ_def_MSE} incurred by Algorithm \ref{alg1} used to learn the local  
	model parameters \eqref{equ_def_true_linear_model_SBM} of $\graph^{(\rm SBM)}$. These results are obtained 
	for local datasets obtained from a noise-free linear model ($\sigma=0$ in \eqref{equ_def_true_linear_model_SBM}) 
	and which are available only for the nodes in the subset $\trainingset$(see \eqref{equ_def_local_loss_squared}) 
	of relative size $\rho = |\trainingset|/|\nodes|$.} 
\end{figure}

\begin{figure}[htbp]
\begin{minipage}[t]{0.3\columnwidth} 
\begin{tikzpicture}[scale=0.5]
\begin{axis}[
ymode=log,
y label style={at={(axis description cs:0.040,.5)},rotate=0.0,anchor=south},
label style={font=\Large},
title style={font=\Large},
legend style={font=\Large},
yticklabel style = {font=\small},
xticklabel style = {font=\Large},
xlabel={$p_{\rm out}$},
ylabel={MSE},
title={$\gtvpenalty(\flowvec)\!=\!\|\flowvec\|_{1},\regparam\!=\!10^{-2}$},
legend pos=south east,
ymajorgrids=true,
grid style=dashed,
]


\addplot+[color=cyan!40!blue!60, very thick, solid, mark=*, error bars/.cd, y dir=both, y explicit, error bar style={line width=1pt,solid}, error mark options={line width=1pt,mark size=2pt,rotate=90}] table [x=Pout, y=N1N001MSE, y error=N1N001STD, col sep=comma]{differentNoises.csv};\addlegendentry{$\sigma\!=\!10^{-2}$}
\addplot+[color=orange, very thick, densely dotted, mark=*, error bars/.cd, y dir=both, y explicit, error bar style={line width=1pt,solid}, error mark options={line width=1pt,mark size=2pt,rotate=90}] table [x=Pout, y=N1N01MSE, y error=N1N01STD, col sep=comma]{differentNoises.csv};\addlegendentry{$\sigma\!=\!10^{-1}$}
\addplot+[color=green!40!teal!60, very thick, densely dashed, mark=*, error bars/.cd, y dir=both, y explicit, error bar style={line width=1pt,solid}, error mark options={line width=1pt,mark size=2pt,rotate=90}] table [x=Pout, y=N1N10MSE, y error=N1N10STD, col sep=comma]{differentNoises.csv};\addlegendentry{$\sigma\!=\!1$}

\end{axis}
\end{tikzpicture}
\end{minipage}
\hspace*{15mm}
\begin{minipage}[t]{0.3\columnwidth} 
\begin{tikzpicture}[scale=0.5]
\begin{axis}[
ymode=log,
y label style={at={(axis description cs:0.040,.5)},rotate=0.0,anchor=south},
label style={font=\Large},
title style={font=\Large},
legend style={font=\Large},
yticklabel style = {font=\small},
xticklabel style = {font=\Large},
xlabel={$p_{\rm out}$},
ylabel={MSE},
title={$\gtvpenalty(\flowvec)\!=\!\|\flowvec\|_{2},\regparam\!=\!10^{-2}$ (nLasso)},
legend pos=south east,
ymajorgrids=true,
grid style=dashed,
]


\addplot+[color=cyan!40!blue!60, very thick, solid, mark=*, error bars/.cd, y dir=both, y explicit, error bar style={line width=1pt,solid}, error mark options={line width=1pt,mark size=2pt,rotate=90}] table [x=Pout, y=N2N001MSE, y error=N2N001STD, col sep=comma]{differentNoises.csv};\addlegendentry{$\sigma\!=\!10^{-2}$}
\addplot+[color=orange, very thick, densely dotted, mark=*, error bars/.cd, y dir=both, y explicit, error bar style={line width=1pt,solid}, error mark options={line width=1pt,mark size=2pt,rotate=90}] table [x=Pout, y=N2N01MSE, y error=N2N01STD, col sep=comma]{differentNoises.csv};\addlegendentry{$\sigma\!=\!10^{-1}$}
\addplot+[color=green!40!teal!60, very thick, densely dashed, mark=*, error bars/.cd, y dir=both, y explicit, error bar style={line width=1pt,solid}, error mark options={line width=1pt,mark size=2pt,rotate=90}] table [x=Pout, y=N2N10MSE, y error=N2N10STD, col sep=comma]{differentNoises.csv};\addlegendentry{$\sigma\!=\!1$}

\end{axis}
\end{tikzpicture}
\end{minipage}
\\
\\
\hspace*{22mm}
\begin{minipage}[t]{0.3\columnwidth} 
\begin{tikzpicture}[scale=0.5]
\begin{axis}[
ymode=log,
y label style={at={(axis description cs:0.040,.5)},rotate=0.0,anchor=south},
label style={font=\Large},
title style={font=\Large},
legend style={font=\Large},
yticklabel style = {font=\small},
xticklabel style = {font=\Large},
xlabel={$p_{\rm out}$},
ylabel={MSE},
title={$\phi(\flowvec)\!=\!\|\flowvec\|^2_{2},\regparam\!=\!10^{-2}$ (``MOCHA'')},
legend pos=south east,
ymajorgrids=true,
grid style=dashed,
]


\addplot+[color=cyan!40!blue!60, very thick, solid, mark=*, error bars/.cd, y dir=both, y explicit, error bar style={line width=1pt,solid}, error mark options={line width=1pt,mark size=2pt,rotate=90}] table [x=Pout, y=MCN001MSE, y error=MCN001STD, col sep=comma]{differentNoises.csv};\addlegendentry{$\sigma\!=\!10^{-2}$}
\addplot+[color=orange, very thick, densely dotted, mark=*, error bars/.cd, y dir=both, y explicit, error bar style={line width=1pt,solid}, error mark options={line width=1pt,mark size=2pt,rotate=90}] table [x=Pout, y=MCN01MSE, y error=MCN01STD, col sep=comma]{differentNoises.csv};\addlegendentry{$\sigma\!=\!10^{-1}$}
\addplot+[color=green!40!teal!60, very thick, densely dashed, mark=*, error bars/.cd, y dir=both, y explicit, error bar style={line width=1pt,solid}, error mark options={line width=1pt,mark size=2pt,rotate=90}] table [x=Pout, y=MCN10MSE, y error=MCN10STD, col sep=comma]{differentNoises.csv};\addlegendentry{$\sigma\!=\!1$}

\end{axis}
\end{tikzpicture}
\end{minipage}
\caption{\label{fig_fixes_trin_set} MSE incurred by the local model parameters learnt by Algorithm \ref{alg1} 
	for the local datasets of $\graph^{(\rm SBM)}$. Algorithm \ref{alg1} has been applied to local loss functions 
	 \eqref{equ_def_local_loss_squared}, assuming local datasets are accessible for a subset $\trainingset$ 
	 of size $\rho=|\trainingset|/|\nodes| \approx 6/10$. The different sub-plots depict the results obtained from 
	 different choices for the GTV penalty function $\gtvpenalty$ \eqref{eq:5}. } 
\end{figure}

Additionally, we have created a scenario in which the data for clients in different 
clusters exhibit distinct statistics. To achieve this, we utilize different distributions for the 
features in different clusters, such as employing normal distributions with varying covariances 
(e.g., perturbed identity matrix). In this study, we assigned the covariance matrices for the first and second clusters
$\begin{bmatrix}
2.54 & 0.41\\
0.41& 0.51
\end{bmatrix}$ and $\begin{bmatrix}
2.21 & -0.81\\
-0.81 &  0.97
\end{bmatrix}$, respectively. The MSE incurred by Algorithm \ref{alg1} is $4.53 \times 10^{-6}$, whereas, for MOCHA \cite{Smith2017} it amounts to $1.33 \times 10^{-1}$ for $p_{in}=0.5$, $p_{out}=0.01$, $\rho=0.4$ and $\epsilon=0$.
\vspace*{3mm} 

\subsubsection{SBM Low-Dimensional Linear Regression with Unbalanced Clusters}
Moreover, we have expanded the previous numerical experiment to incorporate 
unbalanced clusters. Consequently, the experimental configuration remains unchanged, except 
for the number of nodes in each of the clusters, where $|\cluster{1}| = 50$ and $|\cluster{2}| = N_2$. 
Furthermore, we have made modifications to the experiment to consider scenarios where there are 
slight deviations from the assumption of independent and identically distributed (i.i.d.) data 
within a cluster. Specifically, we introduce perturbations to the cluster-wise ground truth parameter 
vector $\overline{\weights}^{(\clusteridx)}$ by adding a random perturbation of the form $\overline{\weights}^{(c)} + \frac{10^{-3} \epsilon^{(\nodeidx)}}{\norm{\overline{\weights}^{(\clusteridx)}}}$, where $\epsilon^{(i)}$ are i.i.d. realizations 
with common distribution $\mathcal{N}(\mathbf{0}, \mathbf{I})$.

\begin{figure}
\centering
\begin{tikzpicture}[scale=0.5]
  \begin{axis} [
    ymode=log, xmax=220,
    y label style={at={(axis description cs:0.040,.5)},rotate=0.0,anchor=south},
    label style={font=\Large},
    title style={font=\Large},
    legend style={font=\small},
    yticklabel style = {font=\small},
    xticklabel style = {font=\Large},
    xlabel={$N_2$},
    ylabel={MSE},
    legend style={at={(0.95,0.5)},anchor=east},
    ymajorgrids=true,
    grid style=dashed,
    ]
    \addplot coordinates {
      (50,  8.316059901068226e-07)
      (100,  1.85759761480829e-06)
      (150,  1.5536778726025486e-06)
      (200, 3.685241472529234e-06)
    }; \addlegendentry{Algorithm 1}
    \addplot coordinates {
      (50,  1.3160443742044454e-06)
      (100,  8.862304502325737e-07)
      (150,  2.181330218159828e-06)
      (200, 3.944454156004871e-06)
    }; \addlegendentry{non-iid Algorithm 1}
    \addplot coordinates {
      (50,  0.05337709280299785)
      (100,  0.05285579463487625)
      (150,  0.0840212629292639)
      (200, 0.11423456923371304)
    } ; \addlegendentry{MOCHA}
    \addplot coordinates {
      (50,  0.05864458931098675)
      (100,  0.043353568796475415)
      (150,  0.11024686902732468)
      (200, 0.0879835991537431)
    } ; \addlegendentry{non-iid MOCHA}
  \end{axis}
\end{tikzpicture}
\caption{\label{figure:sbm_unbalanced_clusters} MSE \eqref{equ_def_MSE} incurred by Algorithm \ref{alg1} 
	used to learn the local model parameters of $\graph^{(\rm SBM)}$ for $|\cluster{1}| = 50$ and $|\cluster{2}| = N_2$. 
	These results are obtained for local datasets obtained from a noise-free linear model ($\sigma=0$) and which are 
	available only for the nodes in the subset $\trainingset$ of relative size $\rho = 0.4$. \label{fig_sbm_unblanced_clusters}}
\end{figure}

For $\rho=0.4$ and $\sigma=0$ we draw $5$ realizations of $\graph^{(\rm SBM)}$ and 
compute the average and standard deviation of the resulting MSE values \eqref{equ_def_MSE} 
obtained for each realization. Figure \ref{fig_sbm_unblanced_clusters} depicts the average for the MSE values of $N_2 \in \{50, 100, 150, 200 \}$. 

\subsubsection{Star Graph} 
\label{stargraph_experiment_section}
This experiment revolves around a local datasets whose empirical graph $\graph^{(\rm star)}$ is star 
shaped as depicted in Figure \ref{fig:star_graph}. The star graph $\graph^{(\rm star)}$ consists of a 
centre node $\nodeidx=1$ which is connected to $49$ peripheral nodes via edges $\edgeidx = \{1,\nodeidx'\}$, 
for $\nodeidx' = 2,\ldots,\nrnodes=50$. Each of the $\nrnodes-1$ edges $\edgeidx$ has the same 
edge weight $\edgeweight_{\edgeidx} =1$. There are no edges between different peripheral nodes. 

\begin{figure}[htbp]
	\centering
	\begin{tikzpicture}[scale=11/5]
		\tikzstyle{every node}=[font=\small]
		\node[nred, fill={rgb:red,3;green,1;yellow,1}] (C1_2) at (0.88+3.7,2.29) {};
		\node[left=1.3 cm of C1_2,nred] (C1_1)  {};
		\node[below left =1cm and 1cm of C1_2,nred] (C1_3)  {};
		\node[below =1.3cm of C1_2,nred] (C1_4)  {};
		\node[below right =1cm and 1cm of C1_2,nred] (C1_5)  {};
		\node[above =1.3cm of C1_2,nred] (C1_6)  {};
		\node[above left =1cm and 1cm of C1_2,nred] (C1_7)  {};
		\node[above right =1cm and 1cm of C1_2,nred] (C1_8)  {};
		\node[right =1.3cm of C1_2,nred] (C1_9)  {};
		\node[above left = 0.01cm and 0.03cm of C1_1, font=\fontsize{8}{0}\selectfont,anchor=south] {$\localdataset{\nodeidx}, \localparams{\nodeidx}$}; 
		
		\draw [-] (C1_2)--(C1_1)node[draw=none,fill=none,font=\fontsize{8}{0}\selectfont,midway,above] {$A_{i,1}$};
		\draw [-] (C1_2)--(C1_3);
		\draw [-] (C1_2)--(C1_4);
		\draw [-] (C1_2)--(C1_5);
		\draw [-] (C1_2)--(C1_6);
		\draw [-] (C1_2)--(C1_7);
		\draw [-] (C1_2)--(C1_8);
		\draw [-] (C1_2)--(C1_9);
	\end{tikzpicture}
	\caption{Empirical graph $\graph^{(\rm star)}$ being a star with a centre node (filled) and $\nrnodes-1$ peripheral nodes (not filled). 
		Each node carries a local dataset $\localdataset{\nodeidx}$ for which we want to learn a local model parameters $\localparams{\nodeidx}$. 
		The quality of a specific choice for the local model parameters $\localparams{\nodeidx}$ is measured by a local loss function $\locallossfunc{\nodeidx}{\localparams{\nodeidx}}$ (that encapsulates the local dataset $\localdataset{\nodeidx}$). }
	\label{fig:star_graph}
\end{figure} 

Similar to the experiment in Section \ref{sbm_experiment_section}, each node $\nodeidx \in \nodes$ of $\graph^{(\rm star)}$ 
holds a local dataset $\localdataset{\nodeidx}$ of the form \eqref{equ_def_local_dataset_plain}. Each local 
dataset consists of $\localsamplesize{\nodeidx}=5$ data points $\big( \featurevec^{(\nodeidx,1)},\truelabel^{(\nodeidx,1)}\big), \ldots, \big( \featurevec^{(\nodeidx,\localsamplesize{\nodeidx})},\truelabel^{(\nodeidx,\localsamplesize{\nodeidx})}\big)$ with 
feature vectors $ \featurevec^{(\nodeidx,\localsampleidx)} \in \mathbb{R}^{\dimlocalmodel}$ and scalar labels $\truelabel^{(\nodeidx,\localsampleidx)}$, for $\localsampleidx=1,\ldots,\localsamplesize{\nodeidx}$. The feature vectors are realizations (draws) of i.i.d.\ random vectors 
with a common standard multivariate normal distribution $\mathcal{N}(\mathbf{0},\mathbf{I}_{\dimlocalmodel \times \dimlocalmodel})$. 
The labels of the data points are generated by a noisy linear model \eqref{equ_def_true_linear_model_SBM}. 
The true underlying weight vector $\overline{\weights}^{(\nodeidx)} \sim \mathcal{N}(\mathbf{0},\mathbf{I}_{\dimlocalmodel \times \dimlocalmodel})$ 
are i.i.d.\ realizations of a standard multivariate normal distribution $\mathcal{N}(\mathbf{0}, \mathbf{I})$.

We learn the weights $\estlocalparams{\nodeidx}$ using Algorithm \ref{alg1} with local loss $\locallossfunc{\nodeidx}{\localparams{\nodeidx}} \defeq (1/ \localsamplesize{\nodeidx} )\sum_{\sampleidx=1}^{\localsamplesize{\nodeidx}} \big( \big(\featurevec^{(\nodeidx,\sampleidx)}\big)^{T} \localparams{\nodeidx} - \truelabel^{(\nodeidx,\sampleidx)}\big)^{2}$ and a fixed number of iterations $\nriter=1000$. Our main focus here is the hyper-parameter $\regparam$ 
which represents a trade-off for the nodes between training a purely local model and getting a consensus with 
its neighbours. 

As indicated by Figure \ref{fig:lambda_clusters}, choosing GTV parameter $\regparam\geq \regparam^{(\rm crit)}$ 
beyond a critical value $\regparam^{(\rm crit)}$ forces the local model parameters $\estlocalparams{\nodeidx}$ to be 
constant over all nodes $\nodeidx \in \nodes$. This critical value is characterized by Theorem \ref{thm_main_result} 
which tells us that the solutions of GTV minimization \eqref{equ_gtvmin} are constant over well-connected 
clusters (see Definition \eqref{equ_def_well_connected_cluster}) 

\begin{figure}[htbp]
\begin{minipage}[t]{0.2\textwidth} 
\begin{tikzpicture}[scale=0.5]
\begin{axis}[
y label style={at={(axis description cs:0.012,.55)},rotate=-90,anchor=south},
label style={font=\Large},
title style={font=\Large},
yticklabel style = {font=\Large},
xticklabel style = {font=\Large},
xlabel={$\overline{w}_1$},
ylabel={$\overline{w}_2$},
title={ground truth},
ymajorgrids=true,
grid style=dashed,
]
\addplot+[only marks] table [x=w0_0, y=w0_1, col sep=comma]{star_weights.csv};
\end{axis}
\end{tikzpicture}
\end{minipage}
\hspace*{8mm}
\begin{minipage}[t]{0.2\textwidth} 
\begin{tikzpicture}[scale=0.5]
\begin{axis}[
y label style={at={(axis description cs:0.012,.55)},rotate=-90,anchor=south},
label style={font=\Large},
title style={font=\Large},
yticklabel style = {font=\Large},
xticklabel style = {font=\Large},
ymin=-3.5, ymax=-0.5,
ytick={-3,-2,-1},
xmin=0.5, xmax=3.5,
xtick={1,1.5,2,2.5,3},
xlabel={$\widehat{w}_1$},
ylabel={$\widehat{w}_2$},
title={$\regparam=0.4$},
legend pos=south east,
ymajorgrids=true,
grid style=dashed,
]
\addplot+[only marks] table [x=w5_0, y=w5_1, col sep=comma]{star_weights.csv};
\end{axis}
\end{tikzpicture}
\end{minipage}
\\
\\
\begin{minipage}[t]{0.2\textwidth} 
\begin{tikzpicture}[scale=0.5]
\begin{axis}[
y label style={at={(axis description cs:0.012,.55)},rotate=-90,anchor=south},
label style={font=\Large},
title style={font=\Large},
yticklabel style = {font=\Large},
xticklabel style = {font=\Large},
ymin=-3.5, ymax=-0.5,
ytick={-3,-2,-1},
xmin=0.5, xmax=3.5,
xtick={1,1.5,2,2.5,3},
xlabel={$\widehat{w}_1$},
ylabel={$\widehat{w}_2$},
title={$\regparam=0.5$},
legend pos=south east,
ymajorgrids=true,
grid style=dashed,
]
\addplot+[only marks] table [x=w6_0, y=w6_1, col sep=comma]{star_weights.csv};
\end{axis}
\end{tikzpicture}
\end{minipage}
\hspace*{8mm}
\begin{minipage}[t]{0.2\textwidth} 
\begin{tikzpicture}[scale=0.5]
\begin{axis}[
y label style={at={(axis description cs:0.012,.55)},rotate=-90,anchor=south},
label style={font=\Large},
title style={font=\Large},
yticklabel style = {font=\Large},
xticklabel style = {font=\Large},
xlabel={$\widehat{w}_1$},
ylabel={$\widehat{w}_2$},
ymin=-3.5, ymax=-0.5,
ytick={-3,-2,-1},
xmin=0.5, xmax=3.5,
xtick={1,1.5,2,2.5,3},
title={$\regparam\!=\!5$},
legend pos=south east,
ymajorgrids=true,
grid style=dashed,
]
\addplot+[only marks] table [x=w11_0, y=w11_1, col sep=comma]{star_weights.csv};
\end{axis}
\end{tikzpicture}
\end{minipage}
\caption{Scatter plots of the local model parameters $\estlocalparams{\nodeidx}$ learnt for the 
	empirical graph $\graph^{(\rm star)}$ by Algorithm \ref{alg1} using different choices for the regularization 
	parameter $\regparam$. The plots show that for sufficiently large $\regparam$ in \eqref{equ_gtvmin}, 
	its solutions deliver local parameter vectors $\estlocalparams{\nodeidx}$ that are identical for all nodes $\nodeidx \in \nodes$. 
	Each local model parameter vector is depicted by (potentially overlaying) blue markers.}
\label{fig:lambda_clusters}
\end{figure}

\end{document}